\let\oldtheorem\newtheorem
\RenewDocumentCommand{\newtheorem}{m o m o}{%
    \IfValueTF{#2}{%
        \IfNoValueTF{#4}{%
            \newaliascnt{#1}{#2}%
            \oldtheorem{#1}[#1]{#3}%
            \aliascntresetthe{#1}%
        }{%
            \oldtheorem{#1}[#2]{#3}[#4]%
        }%
    }{%
        \IfNoValueTF{#4}{%
            \oldtheorem{#1}{#3}%
        }{%
            \oldtheorem{#1}{#3}[#4]%
        }%
    }%
}
\crefname{assumption}{Assumption}{Assumptions}
\crefname{prop}{Proposition}{Propositions}
\crefname{example}{Example}{Examples}
\crefname{figure}{Figure}{Figures}
\newtheorem{theorem}{Theorem}[section]
\newtheorem{prop}[theorem]{Proposition}
\newtheorem{lemma}[theorem]{Lemma}
\newtheorem{definition}[theorem]{Definition}
\newtheorem{assumption}[theorem]{Assumption}
\newtheorem{corollary}[theorem]{Corollary}
\newtheorem{remark}[theorem]{Remark}
\newtheorem{example}[theorem]{Example}
\providecommand{\comment}{}
\renewcommand{\comment}[2]{%
    \ifbool{cleanbuild}{}{%
        \ifmmode%
            \text{\textcolor{#1}{\sffamily\small[{#2}]}}%
        \else%
            \noindent%
            \textcolor{#1}{\sffamily\small[{#2\unskip}]}%
        \fi%
    }%
}
\newcommand{\normalleftrightnull}{\delimitershortfall 5pt \delimiterfactor 901}
\newcommand{\normalleftright}{\normalleftrightnull\aftergroup\normalleftrightnull}
\newcommand{\midleftrightnull}{\delimitershortfall 15pt \delimiterfactor 700}
\newcommand{\midleftright}{\midleftrightnull\aftergroup\midleftrightnull}
\newcommand{\smallleftrightnull}{\delimitershortfall 30pt \delimiterfactor 400}
\newcommand{\smallleftright}{\smallleftrightnull\aftergroup\smallleftrightnull}
\newcommand{\noleftrightnull}{\delimitershortfall 500pt \delimiterfactor 0}
\newcommand{\noleftright}{\noleftrightnull\aftergroup\noleftrightnull}
\newcommand{\myleft}{\mathopen{}\mathclose\bgroup\left}
\newcommand{\myright}{\aftergroup\egroup\right}
\newcommand{\lr}[1]{\myleft(#1\myright)}
\newcommand{\biglr}[1]{\bigl(#1\bigl)}
\newcommand{\set}[1]{\myleft\{#1\myright\}}
\newcommand{\setm}[2]{\myleft\{#1 \;\middle\vert\; #2\myright\}}
\newcommand{\norm}[2][]{\myleft\Vert#2\myright\Vert_{#1}}
\newcommand{\tnorm}[2][]{\Vert#2\Vert_{#1}}
\newcommand{\abs}[1]{\left|{#1}\right|}
\newcommand{\tabs}[1]{|#1|}
\newcommand{\bigabs}[1]{\bigl|{#1}\bigr|}
\newcommand{\brackets}[1]{\myleft\lbrack#1\myright\rbrack}
\newcommand{\ceil}[1]{\myleft\lceil#1\myright\rceil}
\newcommand{\pto}{\stackrel{p}{\longrightarrow}}
\newcommand{\Lrto}{\stackrel{L^r}{\longrightarrow}}
\newcommand{\Loneto}{\stackrel{L^1}{\longrightarrow}}
\newcommand{\Ltwoto}{\stackrel{L^2}{\longrightarrow}}
\newcommand{\T}{^\top\!}
\newcommand{\inv}{^{-1}}
\newcommand{\N}{\mathbb{N}}
\newcommand{\R}{\mathbb{R}}
\newcommand{\Rd}[1][d]{\mathbb{R}^{#1}}
\newcommand{\NN}{\mathbb N}
\newcommand{\Scal}{\mathcal{S}}
\newcommand{\Zcal}{\mathcal{Z}}
\newcommand{\Ycal}{\mathcal{Y}}
\newcommand{\Xcal}{\mathcal{X}}
\newcommand{\Fcal}{\mathcal{F}}
\newcommand{\covering}{\mathcal{N}}
\newcommand{\normal}{\mathcal{N}} %
\newcommand{\Pf}{\mathcal{P}}
\newcommand{\Var}{\mathrm{Var}}
\newcommand{\Cov}{\mathrm{Cov}}
\renewcommand{\P}{\mathbb{P}}
\newcommand{\E}{\mathbb{E}}
\renewcommand{\det}[1]{\abs{#1}}
\newcommand{\zerovec}{\mathbf{0}}
\newcommand{\indic}[1]{\mathbbm{1}_{#1}}
\newcommand{\limit}[2][\infty]{\lim_{#2\rightarrow#1}}
\newcommand{\half}{\tfrac{1}{2}}
\DeclareMathOperator*{\argmin}{arg\,min}
\newcommand{\myFunctionDefinition}[5]{
    #1: \; & #2 &&\longrightarrow #3
    \\
    & #4 &&\longmapsto #5
}
\newcommand{\mycases}[1]{
    \begin{cases}
        \begin{aligned}
            #1
        \end{aligned}
    \end{cases}
}
\newcommand{\mycase}[2]{&#1&\qquad&#2}
\newcommand{\provenlabel}[1]{%
    \label{#1}%
    \ifbool{cleanbuild}{}{%
        \makeatletter%
        \ifcsname r@proof:#1\endcsname%
            \hyperlink{proof:#1}{(See Proof.)}%
        \fi%
        \makeatother%
    }%
}
\newcommand{\proofref}[1]{%
    \label{proof:#1}%
    Proof of \hypertarget{proof:#1}{\cref{#1}}%
}
\newcommand{\Matern}{Mat\'{e}rn}
\newcommand{\Frechet}{Fr\'{e}chet}
\newcommand{\BrownResnick}{Brown--Resnick}
\newcommand{\BvM}{Bernstein--von Mises}
\title{Theoretical guarantees for neural estimators in parametric statistics}
\begin{document}

\author[1]{Almut Rödder}
\author[2]{Manuel Hentschel}
\author[2]{Sebastian Engelke}
\affil[1]{ETH Zürich}
\affil[2]{University of Geneva}
\date{}

\maketitle

\begin{abstract}
    Neural estimators are simulation-based estimators for the
    parameters of a family of statistical models, which build
    a direct mapping from the sample to the parameter vector.
    They benefit from the versatility of available network architectures
    and efficient training methods developed in the field of
    deep learning. Neural estimators are amortized in the sense
    that, once trained, they can be applied to any new data
    set with almost no computational cost.
    While many papers have shown very good performance
    of these methods in simulation studies and real-world
    applications, so far no statistical guarantees are
    available to support these observations theoretically.
    In this work, we study the risk of neural estimators
    by decomposing it into several terms that can be analyzed
    separately. We formulate easy-to-check assumptions
    ensuring that each term converges to zero, and
    we verify them for popular applications of neural estimators.
    Our results provide a general recipe to derive theoretical
    guarantees also for broader classes of architectures
    and estimation problems.
\end{abstract}

\clearpage

\section{Introduction}
\label{sec:introduction}

Estimation of the parameters of a family of distributions 
$\set{\Pf_\theta}_{\theta \in \Theta}$,
taking values in $\Zcal \subseteq \Rd$ and
parametrized by $\theta \in \Theta \subseteq \R^p$,
is one of the most classical problems in statistics. 
The two main approaches are frequentist maximum-likelihood estimators and Bayes estimators relying on posterior distributions.
In both settings we can define the set of all point estimators as
$\Fcal = \set{f: \Zcal^m \rightarrow \Rd[p]}$
that map $m$ i.i.d.~samples $Z = (Z^1, \dots, Z^m)$ 
from some $\Pf_\theta$ to an estimate~$\hat \theta$.
The asymptotic properties of such estimators is well-understood \citep{vaart_1998}.

In many modern applications, the likelihood function is not available, or its evaluation is computationally prohibitive. This includes examples in cosmology \citep{als2019}, spatial Gaussian random fields \citep{rue2005gaussian} or max-stable distributions \citep{deh1984}. In some cases, composite likelihood approaches are a computationally faster alternative, but may result in significant efficiency loss \citep{var2011}. Simulation from such models is typically possible and computationally relatively cheap. For this reason, different simulation-based (or likelihood-free) inference methods have been developed, including indirect inference~\citep{gou1993} and the well-known approach of approximate Bayesian computation (ABC)~\citep{beaumont2002approximate}. Designing informative summary statistics can however be difficult and bad choices may lead to suboptimal performance.

Similar to ABC, neural estimators are simulation-based methods in the  Bayesian setting.
For a given loss function $\ell$, the Bayes estimator $f^*_m$ minimizes the weighted risk 
\begin{align}
    \label{eq:risk_intro}
    R(f)
    &=
    \int_{\Theta}  \E_{\theta}\left[ 
    \ell(f(Z), \theta)\right] \pi(\theta) d\theta
    ,
\end{align}
over all estimators $f\in\mathcal F$, where 
the weight function $\pi(\theta)$ is the 
density of the prior distribution on $\Theta$ \citep[e.g.,][]{vaart_1998}. The corresponding Bayes risk $R(f_m^*)$ is thus the smallest possible population error for sample size $m$.
We focus here on the quadratic loss $\ell(\hat \theta, \theta) =
\norm[2]{\hat \theta - \theta}^2 \midleftright$ where
the Bayes estimator is given by the conditional expectation
$f_m^*(z) = \E(\theta\mid Z=z)$, $z \in \Zcal^m$.
With this perspective of an estimator as a function from data to parameters, neural networks are a natural candidate to learn this function in light of the universal approximation property \citep{cybenko1989approximation}.

A neural estimator frames parameter estimation as supervised learning problem with the following steps.
We first generate the training data consisting of $N$ samples $\theta_1,\dots, \theta_N$ from the prior distribution $\pi(\theta)$ and consider one sample $Z_i = (Z_i^1, \dots, Z_i^m) \in \Zcal^m$ from $\Pf_{\theta_i}^m$ for each of them, $i = 1, \dots, N$. The pair $(Z_i, \theta_i)$ can be seen as an independent draw of a predictor-response pair of the training distribution in a prediction problem.
The neural estimator is then the minimizer of the empirical version $R_N(f)$ of the population risk in~\eqref{eq:risk_intro} based on this simulated data set, i.e., the empirical risk minimizer
\begin{align}
    \label{eq:intro_emp_risk_minimizer}
    \phi_m^N = \argmin_{\phi \in \Phi} 
    \frac{1}{N}
    \sum_{i=1}^N
    \norm[2]{f_\phi(Z_i) - \theta_i}^2
    ,
\end{align}
where we optimize over all neural networks $f_\phi: \Zcal^m \to \mathbb R^p$  parametrized by the set
$\Phi$ of all considered network parameters,
architectures and activation functions. Figure~\ref{fig:NN_architecture} illustrates the architecture of a neural estimator with a fully connected neural network; we will use the parameter $\phi$ and its induced neural network $f_\phi$ exchangeably.
In practice, it is typically not possible and not desired to obtain the global minimizer $\phi_m^N$ exactly. Instead, regularization methods such as penalties, dropout and early stopping are employed during training to avoid overfitting \citep{goodfellowEtAl2016}. The resulting estimator $\tilde \phi_m^N$ usually has a larger training risk~\eqref{eq:intro_emp_risk_minimizer} but a smaller population risk~\eqref{eq:risk_intro}.

The fundamental difference of neural estimators to classical simulation-based approaches such as ABC is that they are amortized, that is, the only significant computational cost is at training time. Once the neural estimator is trained, it can be applied to any new data set with $m$ samples from $\Pf_\theta$ for an arbitrary $\theta\in\Theta$ essentially for free, since the forward pass
in a neural network is computationally very cheap.
Moreover, unlike ABC, neural Bayes estimators do not rely on fixed summary statistics but directly build a map from the data set to the parameter. Intuitively, the summary statistics are learned automatically as latent representations of the network.

The concept of neural estimators exists since a long time and they have been applied to various problems such as parameter estimation in times series \citep{chon1997linear} and birth-and-death processes \citep{BOKMA2006449} using dense networks. More recently, different architectures 
such as convolutional layers for efficient inference in temporal \citep{rud2022} or spatial observations \citep{gerber2021fast, lenzi2021neural, walchessen2023neural} or deep sets for multiple replicates \citep{sainsbury2023likelihood} have been proposed.
While these methods seem to work well in practice and allow for very fast inference, so far no theoretical guarantees for the asymptotic or pre-asymptotic behavior of neural estimators exist; see, for instance, the discussion in~\cite{zam2025}.

Our contribution is threefold. We first decompose the population risk $R(\tilde\phi_m^N)$ of the fitted neural estimator with $m$ replicates and based on $N$ simulated training data as
\begin{align}
    \label{eq:intro_errdec}
    R(\tilde\phi_m^N)
    &=
    \underbrace{R(f_m^*)}_{\text{Bayes Risk}}
    + \underbrace{R(\phi_m^*)-R(f_m^*)}_{\text{Approximation Error}}
    + \underbrace{R(\phi_m^N) - R(\phi_m^*)}_{\text{Generalization Error}}
    + \underbrace{R(\tilde\phi_m^N) - R(\phi_m^N)}_{\Delta{\text{Algorithm}}}
    ;
\end{align}
see Figure~\ref{fig:risk_illustration} for a geometric illustration of this decomposition.
The risk of a neural estimator therefore differs from the optimal Bayes risk through three error terms:
the approximation error quantifies how well the class of neural networks~$\Phi$ can approximate the Bayes estimator;
the generalization error measures the amount of overfitting of the empirical risk minimizer $\phi_m^N$ compared to the optimal neural network estimator;
and a term that depends on the optimization algorithm and regularization scheme used during training.
This last term
is typically negative due to the positive effect of regularization on the generalization of the network.
We do not consider it here and refer to the literature on numerical analysis and statistical learning theory for details \citep[e.g.,][and references therein]{xing2018walksgd}.
We study all other terms of decomposition~\eqref{eq:intro_errdec} separately and formulate easy-to-check assumptions under which they converge to zero as $m,N \to\infty$.
This involves results from different fields such as the approximation theory of neural networks \citep{cybenko1989approximation}, robustness \citep{xu2012robustness},
and statistical efficiency of Bayes estimators \citep{vaart_1998}.
Combining the results, we show consistency of neural estimators~$\phi_m^N$ in a Bayesian sense. We further discuss that this estimator
can be fully efficient compared to the Bayes estimator $f_m^*$ under certain conditions.

As a second contribution, we check the assumptions of our theory  for some of the most common applications of neural estimators.
For spatial Gaussian processes with many locations, for instance, likelihood estimation can become prohibitively expensive \citep{rue2005gaussian}, but simulation remains feasible. For multivariate max-stable distributions in extreme value theory, full-likelihood inference becomes infeasible even in moderate dimensions \citep{wad2015, dombry2017bayesian}, but simulation can be done efficiently \citep{dom2016}.

Thirdly, our work provides a clear guide for proving asymptotic properties of neural estimators also in other settings.
The error decomposition~\eqref{eq:intro_errdec} separates the overall task into sub-problems from different disciplines and allows focussing on the relevant terms.
For instance, if a different network architecture is used (e.g., convolutional or graph neural networks) for the same parametric family $\Pf_\theta$, the Bayes risk will not be affected, and only the risk terms related to neural networks architectures change. On the other hand, for a different parametric model class, only the Bayes risk and a small condition
on the tail heaviness of the statistical model $Z\sim \Pf_\theta$ have to be studied if the network architecture and training procedure remains the same.

\newcounter{mysymbolcounter}
\newcommand{\mysymbol}[2]{%
    \stepcounter{mysymbolcounter}%
    \glsxtrnewsymbol[description={#2}]{\arabic{mysymbolcounter}}{#1}%
}

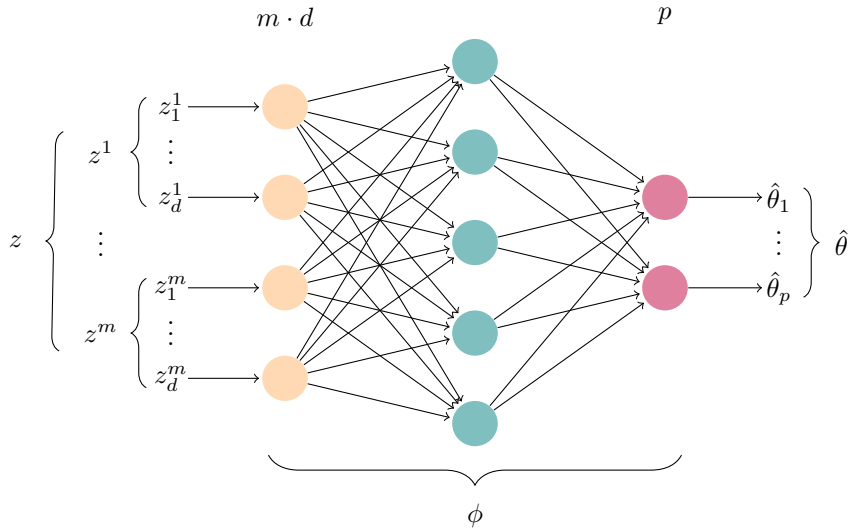
\begin{figure}
    \centering
    \begin{tikzpicture}[x=1cm,y=1.2cm]

\newcommand{\inputnum}{4} %
\newcommand{\hiddennum}{5} %
\newcommand{\outputnum}{2} %

\newcommand{\inputcolor}{orange!30}
\newcommand{\hiddencolor}{teal!50}
\newcommand{\outputcolor}{purple!50}

\foreach \i in {1,...,\inputnum}{
    \node[circle, minimum size = 6mm, fill=\inputcolor] (Input-\i) at (0,-\i) {};

    \node (Input-\i-label) at (-1.5,-\i) {};

    \draw[->, shorten <= 1mm, shorten >= 1pt] (Input-\i-label) -- (Input-\i);
}

\foreach \i in {1,...,\hiddennum}{
    \node[circle, minimum size = 6mm, fill=\hiddencolor, yshift=(\hiddennum-\inputnum)*6 mm]
    (Hidden-\i) at (2.5,-\i) {};
}

\foreach \i in {1,...,\outputnum}{
    \node[circle, minimum size = 6mm, fill=\outputcolor, yshift=(\outputnum-\inputnum)*6 mm]
        (Output-\i) at (5,-\i) {};

    \node[yshift=(\outputnum-\inputnum)*6 mm] (Output-\i-label) at (6.5,-\i) {};

    \draw[->, shorten >= 1mm] (Output-\i) -- (Output-\i-label);
}

\foreach \i in {1,...,\inputnum}{
    \foreach \j in {1,...,\hiddennum}{
        \draw[->, shorten >=1pt] (Input-\i) -- (Hidden-\j);
    }
}

\foreach \i in {1,...,\hiddennum}{
    \foreach \j in {1,...,\outputnum}{
        \draw[->, shorten >=1pt] (Hidden-\i) -- (Output-\j);
    }
}

\draw (Input-1-label) node{$z_1^1$};
\draw (Input-2-label) node{$z_d^1$};
\draw (Input-3-label) node{$z_1^m$};
\draw (Input-4-label) node{$z_d^m$};

\path (Input-1-label) -- (Input-2-label) node[midway, yshift=1mm]{$\vdots$};
\draw [decorate, decoration={brace, amplitude=2mm, mirror, raise=2mm}]
(Input-1-label.north west) -- (Input-2-label.south west) node [black, midway, xshift=-8mm] (Input-brace-1) {$z^1$};

\path (Input-3-label) -- (Input-4-label) node[midway, yshift=1mm]{$\vdots$};
\draw [decorate, decoration={brace, amplitude=2mm, mirror, raise=2mm}]
(Input-3-label.north west) -- (Input-4-label.south west) node [black, midway, xshift=-8mm] (Input-brace-2) {$z^m$};

\path (Input-brace-1) -- (Input-brace-2) node[midway, yshift=1mm]{$\vdots$};
\draw [decorate, decoration={brace, amplitude=2mm, mirror, raise=2mm}]
(Input-brace-1.north west) -- (Input-brace-2.south west) node [black, midway, xshift=-8mm] (Input-brace-3) {$z$};

\draw (Output-1-label) node{$\hat\theta_{1}$};
\draw (Output-2-label) node{$\hat\theta_{p}$};

\path (Output-1-label) -- (Output-2-label) node[midway, yshift=1mm]{$\vdots$};
\draw [decorate, decoration={brace, amplitude=2mm, raise=2mm}]
(Output-1-label.north east) -- (Output-2-label.south east) node [black, midway, xshift=7mm] {$\hat\theta$};

\draw [decorate, decoration={brace, amplitude=4mm, mirror, raise=2mm}]
(Hidden-\hiddennum.south east -| Input-4.south west) -- (Hidden-\hiddennum.south east -| Output-2.south east)
node [black, midway, yshift=-1cm] (Input-brace-3) {$\phi$};

\node (Input-Size) at (0,0) {$m \cdot d$};
\node (Output-Size) at (5,0) {$p$};

\end{tikzpicture}
    \caption{%
        Example of a fully-connected neural network used as a neural estimator $f_\phi: \mathcal Z^m \to\mathbb R^p$.
        The size of the input and output layer is indicated above.
        Note that the subscripts to each $z^i$ and $\theta$
        indicate the corresponding dimension in the input and output space,
        whereas in the rest of this paper, they denote the index $1, \dots, N$ of each sequential training sample.
    }
    \label{fig:NN_architecture}
\end{figure}

\section{Setup and risk decomposition}

We introduce the mathematical notation more formally in Section~\ref{sec:notation} and present the error decomposition in Section~\ref{sec:errdec}.
Section~\ref{sec:exampleModels} discusses Gaussian random field models and max-stable distributions, which serve as running examples used to illustrate our results throughout the paper.

\subsection{Setup and notation}\label{sec:notation}

We consider a family of distributions,
$\set{\Pf_\theta}_{\theta \in \Theta}$,
taking values in $\Zcal \subseteq \Rd$ and
parametrized by $\theta \in \Theta \subseteq \R^p$,
for some $d, p \in \N$.
The goal is to estimate the parameter $\theta$
from $m$ i.i.d. samples $Z = (Z^1, \dots, Z^m)$,
taking values in $\Zcal^m \subseteq \R^D$,
for $D = md$,
with
$Z^i \sim \Pf_\theta$,
or equivalently $Z \sim \Pf_\theta^m$.
Throughout this paper, we consider
a continuous distribution with respect to the Lebesgue measure,
having density $p_\theta: \Rd \rightarrow \R$ (set to zero outside $\Zcal$).
The joint density of $Z$ is then given by
\begin{align*}
    p_{\theta, m}(z)
    &=
    \prod_{i=1}^m p_\theta(z^i)
    ,
    \quad 
    z \in \mathbb R^D.
\end{align*}
For a true parameter $\theta$,
the quality of an estimate~$\hat \theta$
is quantified by the loss function~$\ell$.
Note that we allow the estimate to be from $\Rd[p]$,
since a general estimator might map to values outside $\Theta$.
In general, there are few restrictions on the loss function,
but we will mostly consider the quadratic loss
\begin{alignat}{6}
    \label{eq:quadraticLoss}
    \myFunctionDefinition{\ell}{
        \Rd[p] \times \Theta
    }{
        \R
        \nonumber
    }{
        (\hat \theta, \theta)
    }{
        \ell(\hat \theta, \theta)
        =
        \norm[2]{\hat \theta - \theta}^2
    }
    .
\end{alignat}
Let $\Fcal$
denote the set of measurable functions
from $\Zcal^m$ to $\Rd[p]$.
To quantify the quality of an estimator $f \in \Fcal$,
for a given true parameter $\theta$,
we consider the point-wise risk
\begin{align*}
    R_{\theta}(f)
    &=
    \E_{Z \sim \Pf_\theta^m}
    \ell(f(Z), \theta)
    =
    \int_{\Zcal^m}
    p_{\theta, m}(z)
    \norm[2]{f(z) - \theta}^2
    dz
    ,
\end{align*}
assuming a density and the quadratic loss function in the second equality.
For a cleaner notation, the number of simultaneous samples $m$ is implied by the domain of $f$,
and not denoted explicitly.

In the Bayesian setting,
which is implicitly assumed here,
the parameter $\theta$ is considered to be a random variable
distributed according to a prior distribution $\Pi$,
which is typically absolutely continuous with respect to the Lebesgue measure with
density $\pi: \Theta \rightarrow \R$.
The quality of an estimator $f$ is then quantified by the risk
\begin{align*}
    R(f)
    &=
    \E_{\theta \sim \Pi}
    R_{\theta}(f)
    =
    \int_{\Theta}
    \int_{\Zcal^m}
    \pi(\theta)
    p_{\theta, m}(z)
    \norm[2]{f(z) - \theta}^2
    dz
    d\theta
    ,
\end{align*}
again assuming densities and the quadratic loss function in the second equality;
see also \cref{eq:risk_intro}.
An estimator $f_m^*$ that minimizes the risk $R$ for $m$ samples
is called a Bayes estimator and the corresponding risk $R(f_m^*)$ is called the Bayes risk.
For the quadratic loss function, the Bayes estimator is given by the conditional expectation
$f_m^*(z) = \E(\theta|Z=z)$, $z \in \Zcal^m$.
In the following,
unless stated otherwise,
we assume that both $\Pi$ and $\Pf_\theta$ are continuous,
having Lebesgue densities $\pi$ and $p_\theta$,
and consider the quadratic loss function from \cref{eq:quadraticLoss}.

Since the population risk $R(f)$ of some estimator $f\in\mathcal F$ is not accessible, it is typically approximated by the empirical risk.
To this end, let $N \in \N$ be the number of samples $\theta_1, \dots, \theta_N$ from $\theta \sim \Pi$, 
and consider one sample from $\Pf_\theta^m$ for each of them,
denoted by
$Z_i = (Z_i^1, \dots, Z_i^m) \in \Zcal^m$,
for $i = 1, \dots, N$. %
In general, we could also consider multiple samples per $\theta$.
The empirical version of $R(f)$ is then given by
\begin{align*}
    R_N(f)
    &=
    \frac{1}{N}
    \sum_{i=1}^N
    \ell(f(Z_i), \theta_i)
    =
    \frac{1}{N}
    \sum_{i=1}^N
    \norm[2]{f(Z_i) - \theta_i}^2
    ,
\end{align*}
where, as above, we assume the quadratic loss function in the second equation.
Note that empirical risks are random variables,
since they depend on the random samples $Z_i$ and $\theta_i$.

In the context of neural estimators,
the function $f:\Zcal^m \to \Rd[p]$ is given by a neural network.
For a given input size $m$, architecture and activation function,
a neural network with $L$ layers is parametrized by its
weight matrices $\set{w^{(l)}}_{l = 1, \dots, L}$ and bias vectors $\set{\beta^{(l)}}_{l = 1, \dots, L}$,
collectively denoted as $\phi_m = (w, \beta)$,
or just $\phi$ when the input size is implied.
For a given parameter vector $\phi$
we denote by $f_\phi$ the corresponding map from $\Zcal^m$ to $\Rd[p]$.
Below, the corresponding risks are abbreviated as
$R(\phi) := R(f_\phi)$ and $R_N(\phi) := R_N(f_\phi)$.

A full list of symbols used in this paper can be found in \cref{glo:symbols}.

\mysymbol{$\theta \in \Theta \subseteq \R^p$}{Parameter value}
\mysymbol{$\theta_0 \in \Theta$}{True parameter}
\mysymbol{$\Pi$}{Prior distribution of $\theta$}
\mysymbol{$\pi: \R^p \rightarrow \R$}{Density of $\Pi$}
\mysymbol{$\set{\Pf_\theta}_{\theta \in \Theta}$}{Family of distributions}
\mysymbol{$\Zcal \subseteq \R^d$}{Codomain of each $\Pf_\theta$}
\mysymbol{$p_\theta: \Rd \rightarrow \R$}{Density of $\Pf_\theta$}
\mysymbol{$m \in \N$}{Number of simultaneous samples}
\mysymbol{$Z = (Z^1, \dots, Z^m)$}{$m$-Sample (random variable)}
\mysymbol{$\Zcal^m \subseteq \R^{md}$}{Codomain of $Z$}
\mysymbol{$\theta_{\text{Bayes}}$}{Bayes estimator}
\mysymbol{$f^*(z) = \E(\theta | Z = z)$}{Bayes estimator as function of $z$}
\mysymbol{$p_{\theta, m}(z)$}{Density of $Z$}
\mysymbol{$\phi_m = (w, \beta)$}{Neural network parametrization with $m$ input samples}
\mysymbol{$L$}{Number of layers in neural network}
\mysymbol{$\set{w^{(l)}}_{l = 1, \dots, L}$}{Weight matrices of neural network}
\mysymbol{$\set{\beta^{(l)}}_{l = 1, \dots, L}$}{Bias vectors of neural network}
\mysymbol{$\phi_m^*$}{Optimal parametrization}
\mysymbol{$f$}{A mapping, usually from $\Zcal^m$ to $\Theta$}
\mysymbol{$f_\phi$}{Map from $\Zcal^m$ to $\Theta$ defined by $\phi$}
\mysymbol{$\ell(x,y) = \norm[2]{x-y}^2$}{Quadratic loss function}
\mysymbol{$R(f)$}{Risk of $f$}
\mysymbol{$R_N(f)$}{Empirical risk of $f$}
\mysymbol{$R(\phi)$}{Risk of $f_\phi$}
\mysymbol{$R_N(\phi)$}{Empirical risk of $f_\phi$}
\mysymbol{$R(f_m^*)$}{Bayes Risk}
\mysymbol{$\phi_m^*$}{Optimal parametrization with respect to $R$}
\mysymbol{$\phi_m^N$}{Optimal parametrization with respect to $R_N$}
\mysymbol{$\tilde\phi_m^N$}{Approximation of $\phi_m^N$ during training}

\subsection{Error decomposition}

\label{sec:errdec}

For the sake of a clearer exposure, we assume in the following that all minimizers
exist and are unique.
    This assumption does not fundamentally restrict the presented decompositions.
    Indeed, existence could be guaranteed by considering
    compact parameter spaces for neural networks,
    and in the case of non-unique minimizers,
    one could be chosen arbitrarily.
    On the other hand, the error decomposition in \cref{eq:errdec}
    could also be rephrased by considering the infimum of the risks,
    $\inf_{\phi \in \Phi} R(\phi)$,
    rather than the risk of the minimizer
    $R(\phi_m^*)$.

Recall that all expressions containing $N$,
either as a subscript to the risk or as a superscript to the parametrization,
are random variables,
since they depend on the training data set.
As discussed in the introduction, the Bayes estimator
$f_m^* = \argmin_{f\in\Fcal} R(f)$
can be
approximated by a neural network $f_{\phi_m^*}$
satisfying
\begin{align}
    \label{eq:defPhiN}
    \phi_m^* = \argmin_{\phi \in \Phi} R(f_{\phi})
    ,
\end{align}
where $\Phi$ denotes the set of all considered network parameters,
architectures, and activation functions.
There is a huge literature on possible architectures of neural networks. A concrete example of the parametrization $\Phi$ are single-layer, fully-connected neural networks.
\begin{example}
    \label{example:single_layer_NN}
   Consider the family of single-layer, fully-connected, feedforward neural networks
    with $D$ input neurons, $N_1$ hidden neurons, and a single output neuron,
    with activation function $\sigma$ applied to the hidden neurons.
    This set can be denoted as
    \begin{align*}
        \Phi
        &=
        \setm{
            \phi = (w, \beta)
        }{
            w^{(1)} \in \R^{N_1 \times D},
            w^{(2)} \in \R^{1 \times N_1},
            \beta^{(1)} \in \R^{N_1},
            \beta^{(2)} \in \R
        }
        ,
    \end{align*}
    where
    $w^{(1)}$ and $w^{(2)}$
    are the weight matrices,
    and
    $\beta^{(1)}$ and $\beta^{(2)}$
    are the bias vectors.
    The corresponding set of functions contains all neural networks of the form
    \begin{align*}
        f_{\phi}(z)
        =
        \beta^{(2)}
        + \sum_{j=1}^{N_1}
        w^{(2)}_j \sigma(w^{(1)}_j z + \beta^{(1)}_j)
        ,
        \quad 
        z \in \Rd[D]
        .
    \end{align*}
\end{example}

Since the population risk $R$ in~\cref{eq:defPhiN} is a theoretical quantity,
we instead need to consider the empirical risk minimizer
\begin{align*}
    \phi_m^N = \argmin_{\phi \in \Phi} R_N(f_\phi)
    .
\end{align*}
In practice,
yet another network,
denoted $\tilde\phi_m^N$,
is trained,
due to several reasons.
First, the optimization problem \cref{eq:defPhiN} is non-convex,
and usually infeasible to solve exactly.
Second, and more importantly,
it is well-known that considering the naive empirical risk minimizer
leads to overfitting,
which can be mitigated by regularization techniques.
These differences are illustrated and discussed in a simulation study
in \cref{sec:simulation};
for the theoretical results in the following sections,
we will focus on $\phi_m^N$.

\begin{figure}
    \resizebox{0.48\textwidth}{!}{%
        \input{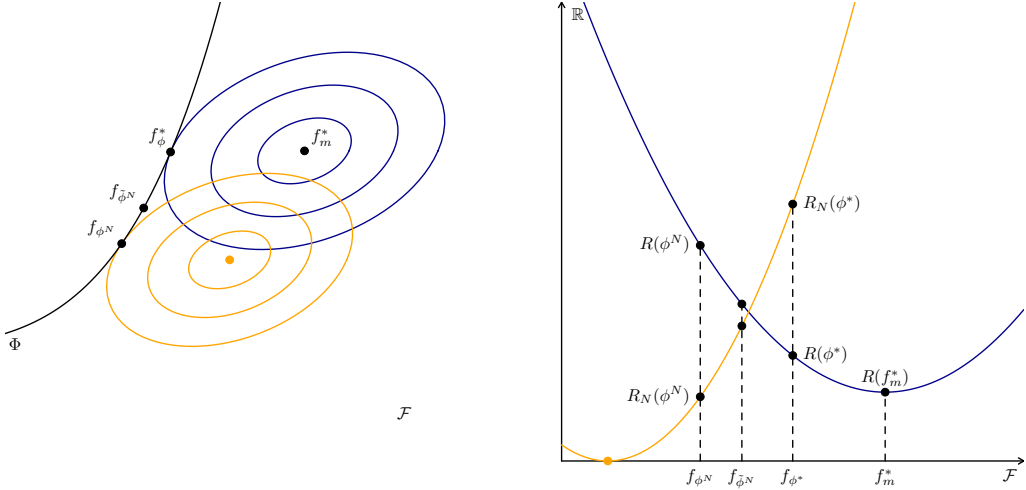}
    }
    \resizebox{0.48\textwidth}{!}{%
        \begin{tikzpicture}[x=1pt,y=1pt]
\definecolor{fillColor}{RGB}{255,255,255}
\path[use as bounding box,fill=fillColor,fill opacity=0.00] (0,0) rectangle (289.08,289.08);
\begin{scope}
\path[clip] (  0.00,  0.00) rectangle (289.08,289.08);
\definecolor{drawColor}{RGB}{0,0,0}

\path[draw=drawColor,line width= 0.8pt,line join=round,line cap=round] ( 21.41, 21.41) -- (289.08, 21.41);

\path[draw=drawColor,line width= 0.8pt,line join=round,line cap=round] (285.95, 19.61) --
	(289.08, 21.41) --
	(285.95, 23.22);

\path[draw=drawColor,line width= 0.8pt,line join=round,line cap=round] ( 21.41, 21.41) -- ( 21.41,289.08);

\path[draw=drawColor,line width= 0.8pt,line join=round,line cap=round] ( 23.22,285.95) --
	( 21.41,289.08) --
	( 19.61,285.95);

\node[text=drawColor,anchor=base,inner sep=0pt, outer sep=0pt, scale=  1.00] at (281.05,  9.67) {$\mathcal{F}$};

\node[text=drawColor,anchor=base west,inner sep=0pt, outer sep=0pt, scale=  1.00] at ( 27.41,278.75) {$\mathbb{R}$};
\definecolor{drawColor}{RGB}{0,0,139}

\path[draw=drawColor,line width= 0.8pt,line join=round,line cap=round] ( 21.41,323.88) --
	( 24.12,316.36) --
	( 26.82,308.95) --
	( 29.52,301.66) --
	( 32.23,294.47) --
	( 34.93,287.39) --
	( 37.64,280.42) --
	( 40.34,273.56) --
	( 43.04,266.81) --
	( 45.75,260.17) --
	( 48.45,253.63) --
	( 51.15,247.21) --
	( 53.86,240.90) --
	( 56.56,234.69) --
	( 59.27,228.60) --
	( 61.97,222.61) --
	( 64.67,216.73) --
	( 67.38,210.97) --
	( 70.08,205.31) --
	( 72.78,199.76) --
	( 75.49,194.32) --
	( 78.19,188.99) --
	( 80.89,183.76) --
	( 83.60,178.65) --
	( 86.30,173.65) --
	( 89.01,168.76) --
	( 91.71,163.97) --
	( 94.41,159.29) --
	( 97.12,154.73) --
	( 99.82,150.27) --
	(102.52,145.92) --
	(105.23,141.69) --
	(107.93,137.56) --
	(110.64,133.54) --
	(113.34,129.63) --
	(116.04,125.82) --
	(118.75,122.13) --
	(121.45,118.55) --
	(124.15,115.07) --
	(126.86,111.71) --
	(129.56,108.45) --
	(132.27,105.31) --
	(134.97,102.27) --
	(137.67, 99.34) --
	(140.38, 96.53) --
	(143.08, 93.82) --
	(145.78, 91.22) --
	(148.49, 88.73) --
	(151.19, 86.34) --
	(153.89, 84.07) --
	(156.60, 81.91) --
	(159.30, 79.86) --
	(162.01, 77.91) --
	(164.71, 76.08) --
	(167.41, 74.35) --
	(170.12, 72.73) --
	(172.82, 71.23) --
	(175.52, 69.83) --
	(178.23, 68.54) --
	(180.93, 67.36) --
	(183.64, 66.29) --
	(186.34, 65.33) --
	(189.04, 64.47) --
	(191.75, 63.73) --
	(194.45, 63.10) --
	(197.15, 62.57) --
	(199.86, 62.16) --
	(202.56, 61.85) --
	(205.27, 61.66) --
	(207.97, 61.57) --
	(210.67, 61.59) --
	(213.38, 61.72) --
	(216.08, 61.96) --
	(218.78, 62.31) --
	(221.49, 62.77) --
	(224.19, 63.34) --
	(226.89, 64.02) --
	(229.60, 64.80) --
	(232.30, 65.70) --
	(235.01, 66.70) --
	(237.71, 67.82) --
	(240.41, 69.04) --
	(243.12, 70.37) --
	(245.82, 71.82) --
	(248.52, 73.37) --
	(251.23, 75.03) --
	(253.93, 76.80) --
	(256.64, 78.68) --
	(259.34, 80.66) --
	(262.04, 82.76) --
	(264.75, 84.97) --
	(267.45, 87.28) --
	(270.15, 89.71) --
	(272.86, 92.24) --
	(275.56, 94.89) --
	(278.27, 97.64) --
	(280.97,100.50) --
	(283.67,103.47) --
	(286.38,106.55) --
	(289.08,109.74);
\definecolor{drawColor}{RGB}{255,165,0}

\path[draw=drawColor,line width= 0.8pt,line join=round,line cap=round] ( 21.41, 30.78) --
	( 24.12, 28.98) --
	( 26.82, 27.38) --
	( 29.52, 25.96) --
	( 32.23, 24.74) --
	( 34.93, 23.71) --
	( 37.64, 22.87) --
	( 40.34, 22.22) --
	( 43.04, 21.76) --
	( 45.75, 21.49) --
	( 48.45, 21.41) --
	( 51.15, 21.53) --
	( 53.86, 21.83) --
	( 56.56, 22.33) --
	( 59.27, 23.02) --
	( 61.97, 23.90) --
	( 64.67, 24.97) --
	( 67.38, 26.23) --
	( 70.08, 27.68) --
	( 72.78, 29.33) --
	( 75.49, 31.16) --
	( 78.19, 33.19) --
	( 80.89, 35.41) --
	( 83.60, 37.82) --
	( 86.30, 40.42) --
	( 89.01, 43.21) --
	( 91.71, 46.19) --
	( 94.41, 49.36) --
	( 97.12, 52.73) --
	( 99.82, 56.28) --
	(102.52, 60.03) --
	(105.23, 63.97) --
	(107.93, 68.10) --
	(110.64, 72.42) --
	(113.34, 76.93) --
	(116.04, 81.63) --
	(118.75, 86.53) --
	(121.45, 91.61) --
	(124.15, 96.89) --
	(126.86,102.36) --
	(129.56,108.01) --
	(132.27,113.86) --
	(134.97,119.91) --
	(137.67,126.14) --
	(140.38,132.56) --
	(143.08,139.18) --
	(145.78,145.98) --
	(148.49,152.98) --
	(151.19,160.17) --
	(153.89,167.55) --
	(156.60,175.12) --
	(159.30,182.88) --
	(162.01,190.83) --
	(164.71,198.97) --
	(167.41,207.31) --
	(170.12,215.84) --
	(172.82,224.55) --
	(175.52,233.46) --
	(178.23,242.56) --
	(180.93,251.85) --
	(183.64,261.33) --
	(186.34,271.01) --
	(189.04,280.87) --
	(191.75,290.93) --
	(194.45,301.17) --
	(197.15,311.61) --
	(199.86,322.24) --
	(202.56,333.06) --
	(205.27,344.07) --
	(207.97,355.28) --
	(210.67,366.67) --
	(213.38,378.25) --
	(216.08,390.03) --
	(218.78,402.00) --
	(221.49,414.16) --
	(224.19,426.51) --
	(226.89,439.05) --
	(229.60,451.78) --
	(232.30,464.70) --
	(235.01,477.82) --
	(237.71,491.12) --
	(240.41,504.62) --
	(243.12,518.31) --
	(245.82,532.18) --
	(248.52,546.26) --
	(251.23,560.52) --
	(253.93,574.97) --
	(256.64,589.61) --
	(259.34,604.45) --
	(262.04,619.47) --
	(264.75,634.69) --
	(267.45,650.10) --
	(270.15,665.70) --
	(272.86,681.49) --
	(275.56,697.47) --
	(278.27,713.64) --
	(280.97,730.01) --
	(283.67,746.56) --
	(286.38,763.31) --
	(289.08,780.25);
\definecolor{fillColor}{RGB}{255,165,0}

\path[draw=drawColor,line width= 0.4pt,line join=round,line cap=round,fill=fillColor] ( 48.18, 21.41) circle (  2.25);
\definecolor{drawColor}{RGB}{0,0,0}
\definecolor{fillColor}{RGB}{0,0,0}

\path[draw=drawColor,line width= 0.4pt,line join=round,line cap=round,fill=fillColor] (208.78, 61.56) circle (  2.25);

\node[text=drawColor,anchor=base,inner sep=0pt, outer sep=0pt, scale=  1.00] at (208.78, 67.56) {$R(f_m^*)$};

\path[draw=drawColor,line width= 0.8pt,dash pattern=on 4pt off 4pt ,line join=round,line cap=round] (208.78, 21.41) --
	(208.78, 61.56);

\node[text=drawColor,anchor=base,inner sep=0pt, outer sep=0pt, scale=  1.00] at (208.78,  9.67) {$f_m^*$};

\path[draw=drawColor,line width= 0.4pt,line join=round,line cap=round,fill=fillColor] (155.25, 82.98) circle (  2.25);

\path[draw=drawColor,line width= 0.4pt,line join=round,line cap=round,fill=fillColor] (155.25,171.31) circle (  2.25);

\node[text=drawColor,anchor=base west,inner sep=0pt, outer sep=0pt, scale=  1.00] at (161.25, 80.68) {$R(\phi^*)$};

\node[text=drawColor,anchor=base west,inner sep=0pt, outer sep=0pt, scale=  1.00] at (161.25,169.01) {$R_N(\phi^*)$};

\node[text=drawColor,anchor=base,inner sep=0pt, outer sep=0pt, scale=  1.00] at (155.25,  9.67) {$f_{\phi^*}$};

\path[draw=drawColor,line width= 0.8pt,dash pattern=on 4pt off 4pt ,line join=round,line cap=round] (155.25, 21.41) --
	(155.25,171.31);

\path[draw=drawColor,line width= 0.4pt,line join=round,line cap=round,fill=fillColor] (101.71, 58.89) circle (  2.25);

\path[draw=drawColor,line width= 0.4pt,line join=round,line cap=round,fill=fillColor] (101.71,147.22) circle (  2.25);

\node[text=drawColor,anchor=base east,inner sep=0pt, outer sep=0pt, scale=  1.00] at ( 95.71, 56.59) {$R_N(\phi^N)$};

\node[text=drawColor,anchor=base east,inner sep=0pt, outer sep=0pt, scale=  1.00] at ( 95.71,144.92) {$R(\phi^N)$};

\node[text=drawColor,anchor=base,inner sep=0pt, outer sep=0pt, scale=  1.00] at (101.71,  9.67) {$f_{\phi^N}$};

\path[draw=drawColor,line width= 0.8pt,dash pattern=on 4pt off 4pt ,line join=round,line cap=round] (101.71, 21.41) --
	(101.71,147.22);

\path[draw=drawColor,line width= 0.4pt,line join=round,line cap=round,fill=fillColor] (125.80,100.20) circle (  2.25);

\path[draw=drawColor,line width= 0.4pt,line join=round,line cap=round,fill=fillColor] (125.80,113.01) circle (  2.25);

\node[text=drawColor,anchor=base,inner sep=0pt, outer sep=0pt, scale=  1.00] at (125.80,  9.67) {$f_{\tilde{\phi}^N}$};

\path[draw=drawColor,line width= 0.8pt,dash pattern=on 4pt off 4pt ,line join=round,line cap=round] (125.80, 21.41) --
	(125.80,113.01);
\end{scope}
\end{tikzpicture}
    }
    \caption{%
        Schematic illustration of the different estimators and risks in the decompositions~\cref{eq:errdec} and~\cref{eq:genErrdec}.
        Left:
        The coordinate system represents the space $\Fcal$ of all possible estimators,
        and the black line the subset that are neural networks $\Phi$.
        The blue contour lines correspond to the population risk $R$,
        its minimizer in $\Fcal$ is the Bayes estimator $f_m^*$,
        and in $\Phi$ is $f_{\phi_m^*}$.
        The orange contour lines correspond to the empirical risk $R_N$,
        its minimizer in the $\Phi$ is $f_{\phi_m^N}$.
        Its minimum over $\Fcal$ is usually zero, e.g., achieved by a $1$-nearest neighbor estimator.
        Right:
        The x-axis is a slice of $\Fcal$,
        and the blue and orange curves represent the population risk
        and empirical risk, respectively.
    }
    \label{fig:risk_illustration}
\end{figure}

Using the above chain of approximations
($f_m^* \approx f_{\phi_m^*} \approx f_{\phi_m^N}$),
the population risk $R(\phi_m^N)$ of the empirical risk minimizer
can be decomposed as
\begin{align}
    \label{eq:errdec}
    R(\phi_m^N)
    &=
    \underbrace{R(f_m^*)}_{\text{Bayes Risk}}
    + \underbrace{R(\phi_m^*)-R(f_m^*)}_{\text{Approximation Error}}
    + \underbrace{R(\phi_m^N) - R(\phi_m^*)}_{\text{Generalization Error}}
    .
\end{align}
The first term is the Bayes risk,
inherent to the task of predicting parameters
of a distribution
based on a finite number of observations
$Z = (Z^1, \dots, Z^m)$. 
The second term is the error incurred by approximating the Bayes estimator
$f_m^*(z) = \E (\theta \vert Z = z)$ %
with a neural network~$f_{\phi_m^*}$,
considering an optimal parametrization $\phi_m^*$
(with respect to a suitable choice of architecture and activation function,
discussed in \cref{sec:approximationError}).

The third term, the generalization error,
is due to using the parameters $\phi_m^N$,
learned from~$N$ training samples,
rather than the optimal $\phi_m^*$.
It can be further decomposed into
\begin{align}
    \label{eq:genErrdec}
    \underbrace{R(\phi_m^N) - R(\phi_m^*)}_{\text{Generalization Error}}
    &=
    \underbrace{R(\phi_m^N) - R_N(\phi_m^N)}_{\text{Generalization Error 2}}
    + \underbrace{R_N(\phi_m^N) - R_N(\phi_m^*)}_{\leq 0 \text{ a.s.}}
    + \underbrace{R_N(\phi_m^*) - R(\phi_m^*)}_{\to 0}
    ,
\end{align}
which is driven by the first term,
$R(\phi_m^N) - R_N(\phi_m^N)$,
i.e., the discrepancy between the true risk $R$ and the empirical risk $R_N$
used to find $\phi_m^N$.
The last two terms are easy to control with conventional techniques. Indeed,
the second term is non-positive, since $\phi_m^N$ is optimal with respect to
the empirical risk $R_N$,
and the third term converges to zero,
because
$R_N(\phi_m^*)$ is a finite sample approximation
of $R(\phi_m^*)$.

\cref{fig:risk_illustration} shows a schematic illustration of the estimators
 and risks appearing in the decompositions~\cref{eq:errdec} and~\cref{eq:genErrdec}, also including the regularized version $\tilde\phi_m^N$ of $\phi_m^N$.

\subsection{Examples}
\label{sec:exampleModels}

Before presenting the main theoretical results,
we introduce several models that are commonly used in the literature of neural estimators. Throughout the paper, we will verify the main assumptions on these examples to show the broad applicability of our results.

Neural estimators are particularly useful in settings
where the likelihood function is intractable
but simulation from the model is possible.
A typical example of this is parameter estimation in
for spatial or spatio-temporal Gaussian random field models,
where likelihood inference becomes computationally infeasible in higher dimensions~\citep{gerber2021fast,sainsbury2023likelihood}.
Another popular example are max-stable distributions~\cite{deh1984} or max-stable random fields 
\citep{lenzi2021neural}, where evaluation of the full likelihood is impossible already in moderate dimensions \citep{dombry2017bayesian}, but exact simulation is feasible \citep{dom2016}.

In \cref{app:linear} we discuss the illustrative example of estimation of the parameters of a linear model. Since in this case we have more explicit representations of risk terms in the decomposition~\cref{eq:errdec}, it provides intuition on the different error contributions in neural estimation.

\subsubsection{Gaussian random fields}
\label{sec:gauss}

A Gaussian random field is a stochastic process $W = \{W(s): s\in S\}$ on some domain $S\subseteq \mathbb R^k$
such that for any finite set of points $s_1, \dots, s_d \in S$,
the random vector $(W(s_1), \dots, W(s_d))$ follows a multivariate normal distribution.
Without losing generality, we restrict the model to have a known constant mean zero, and we consider stationary, isotropic covariance function.
The distribution is then fully characterized
by the covariance function $\Cov(W(s), W(s')) = C(\norm{s-s'})$.
Neural estimators are parametric methods, and we therefore consider a parametric family consisting of a continuous covariance function $\tilde C_{\tilde \theta}(h)$ with nugget effect $\tau^2 \indic{h=0}$, that is,
\begin{align*}
    C_\theta(h)
    =
    \tilde C_{\tilde \theta}(h) + \tau^2 \indic{h=0}
    .
\end{align*}
This model is
parametrized by $\theta = (\tau, \tilde \theta) \in \Theta \subset \R^p$, and for a fixed set of points $s_1, \dots, s_d\in S$,
we have $Z = (W(s_1), \dots, W(s_d)) \sim \normal\lr{\zerovec, \Sigma_\theta}$,
with $(\Sigma_\theta)_{ij} = C_\theta(\norm{s_i - s_j})$.
The following two specific examples of covariance functions
will be considered in later sections.
The powered exponential model is a
Gaussian random field with
parameters $\tilde\theta = (\lambda, \alpha) \in (0, \infty) \times (0,2)$
and continuous part of the covariance function
\begin{align}
    \label{eq:poweredExponential}
    \tilde C_{\tilde \theta}(h)
    &=
    \exp\left\{-\lr{{h}/{\lambda}}^{\alpha}\right\}
    , \quad h\geq 0.
\end{align}
The \Matern{} model is a Gaussian random field with continuous part of the covariance
\begin{align}
    \label{eq:matern}
    \tilde C_{\tilde \theta}(h)
    &=
    \frac{2^{1-\nu}}{\Gamma(\nu)}
    \lr{\sqrt{2\nu} \frac{h}{\lambda}}^\nu
    K_\nu\lr{\sqrt{2\nu} \frac{h}{\lambda}}
    , \quad h\geq 0,
\end{align}
where $\Gamma$ is the Gamma function and $K_\nu$ is the Bessel function of the second kind of order $\nu$.
In line with \cite{gerber2021fast},
we consider a fixed smoothness parameter $\nu > 0$,
leaving the range parameter $\lambda \in (0, \infty)$ as the only free parameter; see the left-hand side of \cref{fig:examples} for a realization of the Mat\'ern field.

For Gaussian random fields evaluated at many spatial locations likelihood estimation can become prohibitively expensive \citep{rue2005gaussian}. The simulation of a reasonable amount samples from such processes remains however feasible.

\begin{figure}[tb!]
    \centering 
    \includegraphics[width=0.47\textwidth]{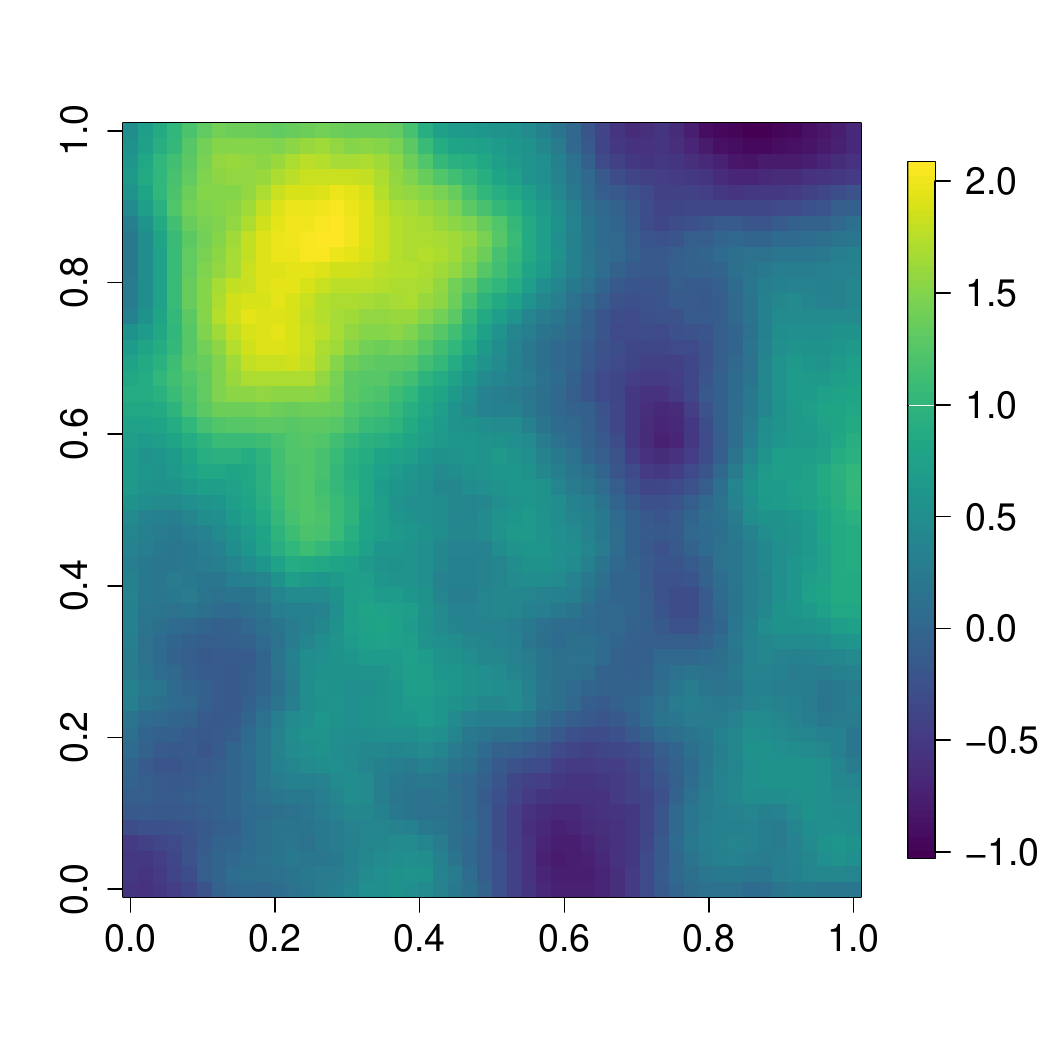}\hfill
        \includegraphics[width=0.47\textwidth]{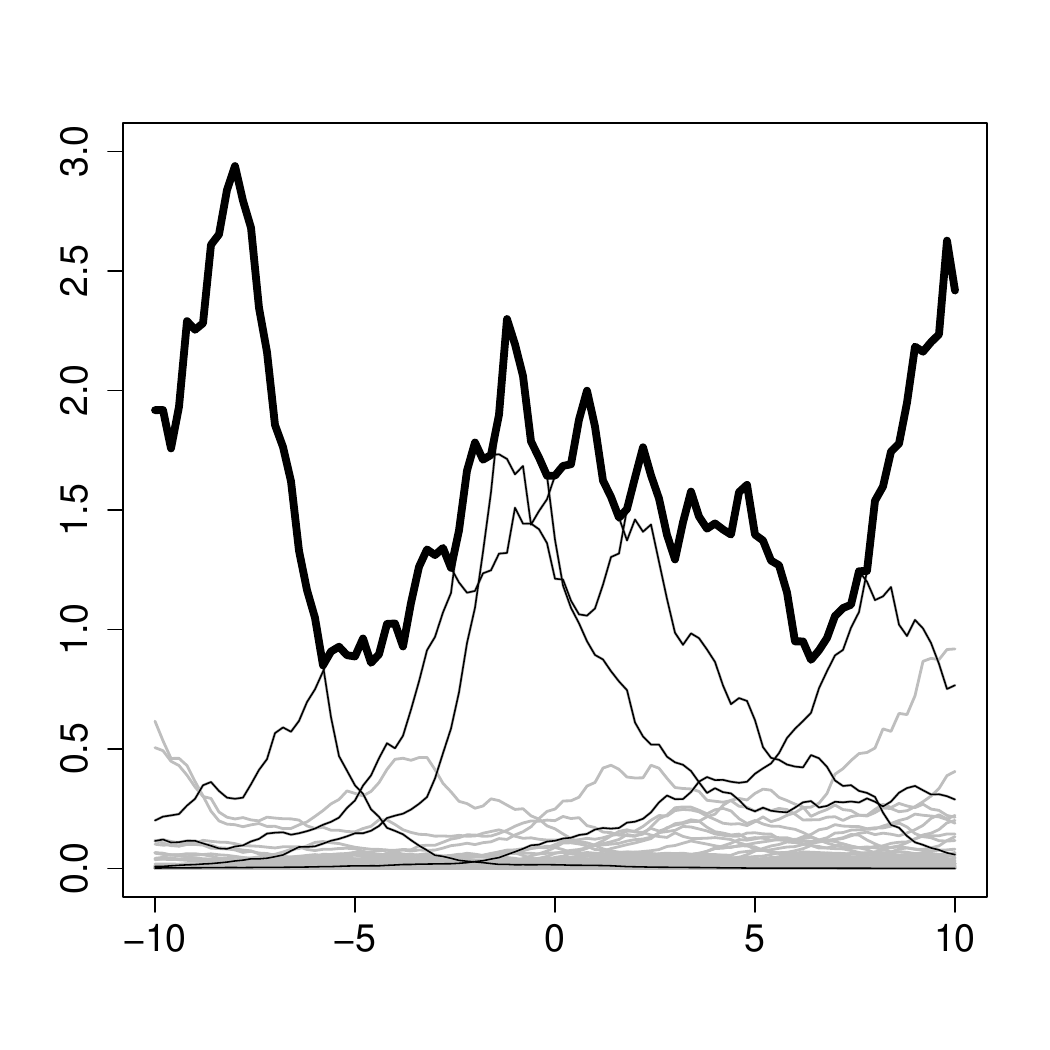}
        \caption{Left: realization of a Mat\'ern random field in $k=2$ dimensions with $\lambda=0.2$ and $\nu = 1.5$. Right: representation~\eqref{max_construction} of a max-stable process in $k=1$ dimension; the max-stable processes (thick black line) is  constructed from extremal functions (gray lines) that contributed to the pointwise maximum (thin back lines).}
        \label{fig:examples}
\end{figure}

\subsubsection{Max-stable distributions}
\label{sec:exampleModels:maxstable}
Max-stable distributions arise naturally in the study of extreme values of random processes. 
For independent copies $U_i$, $i\in\mathbb N$, of some stochastic process $U = \{U(s):s\in S\}$ on $S$, a max-stable process $W$ is the only possible limit in distribution of normalized maxima 
\begin{align}\label{max_conv}
    W(s)
    &=
    \lim_{m \to \infty}
    a_m(s)\inv
        \max_{i=1,\dots, m} \{
            U_i(s)      
        - b_m(s)
    \}
    ,\quad s\in S,
\end{align}
where $a_m(s)>0$ and $b_m(s)\in\mathbb R$ are suitable normalizing functions, and where the convergence is meant in the sense of finite dimensional distributions \citep{deh1984}.
Any max-stable process has the stochastic representation 
\begin{align}\label{max_construction}
    W(s)
    &=
    \max_{i \in \N}
    \xi_i
    Y_i(s)
    , \quad s\in S,
\end{align}
where $\xi_i$ are the points of a Poisson point process with intensity $\xi^{-2}d\xi$,
and $Y_i$, $i\in\mathbb N$, are independent copies of a random field $Y$, the so-called extremal functions \citep{dom2012}.
The right-hand side of \cref{fig:examples} illustrates this representation.

Choosing suitable families of extremal function $Y$ yields 
parametric models of max-stable processes. 
The most popular spatial model in the literature is the class of Brown--Resnick processes \citep{bro1977, kab2009},
where 
\begin{align}
    \label{eq:brownResnick}
    Y(s)
    &=
    \exp\{V(s) - \gamma(s)\}
    ,
\end{align}
is an intrinsically stationary
Gaussian random field
satisfying $V(0) = 0$ almost surely,
and with variogram
$\gamma_{c, \alpha}(h) = ch^\alpha$.
The parameter space for this model is
$\theta = (c, \alpha) \in \Theta = (0, \infty) \times (0,2)$,
and the marginal distributions are unit \Frechet{}.
The finite dimensional margin of the
Brown--Resnick process $W$ at locations $s_1, \dots, s_d\in S$ is a max-stable random vector $Z = (W(s_1), \dots, W(s_d))$ that follows a so-called H\"usler--Reiss distribution \citep{hueslerReiss1989}.
Similarly to normal distributions, the density of this model can be expressed in terms of a precision matrix \citep{HES2022} and is used for graphical modeling of multivariate extremes \citep{engelkeHitz2020}.

For a finite set as the domain $S =\{1,\dots, d\}$ in~\eqref{max_conv} we obtain a $d$-dimensional max-stable distribution $Z$. A classical parametric example is the logistic model \citep{gumbel1960} with unit Fréchet margins and
distribution function
\begin{align}
    \label{eq:logistic}
    \P_\theta\lr{
        Z_1 \leq z_1,\dots,Z_p
        \leq
        z_d
    }
    =
    \exp\lr{-\lr{
        z_1^{-\frac{1}{\theta}}
        + \cdots
        + z_d^{-\frac{1}{\theta}}
    }^\theta}
    ,
\end{align}
which is parametrized by a single parameter $\theta \in \Theta = (0,1)$.

The density of max-stable distributions is computationally intractable, since in $d$ dimensions it is a sum of $B_d$ terms, where $B_d$ is the $d$th Bell number \citep{wad2015}. This number grows super-exponentially in $d$ and makes full-likelihood inference impossible even in moderate dimensions. The reason is that many different extremal functions in the representation \cref{max_construction} can contribute to the maximum, explaining the explosion of terms; see the right-hand side of \cref{fig:examples} for an illustration and \cite{dom2016} for details.

While likelihood inference for max-stable processes is difficult, exact simulation is possible and computationally relatively efficient. In fact, the algorithm introduced in \citet{dom2016} requires in average the simulation of $d$ extremal functions $V$ to generate one realization of a $d$-dimensional max-stable random vector. For Brown--Resnick processes, for instance, this has to be multiplied by the simulation cost of a $d$-dimensional normal distribution.

\section{Bayes risk}
\label{sec:bayesRisk}
In this section we establish conditions for the Bayes Risk $R(f_m^*)$
to vanish.
This term only depends on the model family $\Pf_\theta$,
the prior distribution $\Pi$ of $\theta$, %
and the number $m$ of i.i.d. samples $(z^1, \dots, z^m) = z$
to which the Bayes estimator is applied.
The approximation quality of neural networks $f_\phi$
or the number $N$ of samples used to train each network
are irrelevant at this stage,
as they do not affect the Bayes estimator $f^*(z)$.

Consequently,
we can build on well-known results from Bayesian estimation theory,
and in particular the \BvM{} theorem.

\begin{theorem}[\BvM{}; Theorem~10.8 in \cite{vaart_1998}]
    \label{thm:bvm}
    Let $\set{p_\theta}_{\theta \in \Theta}$ be
    differentiable in quadratic mean at $\theta_0$ with non-singular
    Fisher information matrix $I_{\theta_0}$. Suppose that for every $\varepsilon > 0$ there exists a sequence of tests
    $(t_m^\varepsilon)_{m \in \mathbb{N}}$
    with
    \begin{align}
        \E_{\theta_0}[t_m^\varepsilon] \to 0
        , \quad \text{ and } \quad
        \sup_{\norm{\theta - \theta_0}> \varepsilon}
        \E_{\theta}[1-t_m^\varepsilon] \to 0
        , \quad \text{ as }  \quad
        m \to \infty.
    \end{align}

    Let $(Z_1,Z_2,\dots)$ be i.i.d. with respect to $p_{\theta_0}$ and parameter $\theta_0 \in \Theta$.
    Moreover, suppose the loss function $\ell$ satisfies $\ell(\theta, \theta_0) = \norm{\theta - \theta_0}^p$
    for some $p\geq 1$ and $\int_\Theta \norm{\theta}^p \pi(\theta) d\theta < \infty$. Let the density $\pi$ of the prior distribution be continuous and positive in a neighborhood of $\theta_0$.
    Then, the Bayes estimator is asymptotically normal and efficient.

\end{theorem}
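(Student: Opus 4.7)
The plan is to follow the classical two-stage argument for the Bernstein--von Mises theorem: first establish total-variation convergence of the suitably rescaled posterior to a Gaussian, and then translate this into the stated asymptotic normality and efficiency of the Bayes estimator, which under $\ell(\theta,\theta_0) = \norm{\theta-\theta_0}^p$ is the minimizer over $\Theta$ of the posterior expected loss.

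First, I would establish posterior concentration. The sequence of uniformly consistent tests $(t_m^\varepsilon)$ drives a Schwartz-type argument that forces the posterior to concentrate on balls $\set{\theta : \norm{\theta-\theta_0} \leq M_m/\sqrt{m}}$ for any sequence $M_m \to \infty$. To turn the likelihood-ratio bounds supplied by the tests into a posterior statement, the marginal likelihood appearing in the denominator of the posterior is bounded below using differentiability in quadratic mean together with the continuity and positivity of $\pi$ in a neighborhood of $\theta_0$.

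Second, inside that shrinking neighborhood I would substitute $\theta = \theta_0 + h/\sqrt{m}$ and exploit the LAN expansion implied by differentiability in quadratic mean,
\begin{align*}
\log \prod_{i=1}^m \frac{p_{\theta_0 + h/\sqrt{m}}(Z_i)}{p_{\theta_0}(Z_i)} = h\T \Delta_{m,\theta_0} - \tfrac{1}{2} h\T I_{\theta_0} h + o_{P_{\theta_0}}(1),
\end{align*}
where $\Delta_{m,\theta_0} = m^{-1/2}\sum_{i=1}^m \dot\ell_{\theta_0}(Z_i)$ is the normalized score and the remainder is uniform in $h$ on compacts. Combined with continuity of $\pi$ at $\theta_0$, this yields total-variation convergence of the rescaled posterior law of $\sqrt{m}(\theta-\theta_0)$ to $\normal(\Delta_{m,\theta_0},\, I_{\theta_0}\inv)$. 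To pass from this to the Bayes estimator $\hat\theta_m$, I would use the prior moment condition $\int_\Theta \norm{\theta}^p \pi(\theta)\, d\theta < \infty$ to upgrade TV convergence to convergence of the $p$-th posterior moment by uniform integrability, deduce $\sqrt{m}(\hat\theta_m - \theta_0) = \Delta_{m,\theta_0} + o_{P_{\theta_0}}(1)$, and invoke the convolution theorem to conclude efficiency.

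The main obstacle is this last uniform-integrability step: BvM supplies total-variation convergence only on the shrinking LAN neighborhood, whereas the Bayes action is a posterior functional computed over all of $\Theta$. Controlling the contribution from the complement requires combining the test sequence with the prior moment bound via a careful truncation argument, so that the tail mass of the posterior contributes negligibly to the moment determining $\hat\theta_m$ and the limiting variance $I_{\theta_0}\inv$ is not inflated by mass escaping the neighborhood.
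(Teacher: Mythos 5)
The paper does not prove this statement: it is imported verbatim as Theorem~10.8 of van der Vaart (1998) and used as an external ingredient, so there is no internal proof to compare against. Your outline is essentially the standard argument from that reference --- posterior concentration driven by the uniformly consistent tests, the LAN expansion from differentiability in quadratic mean, total-variation convergence of the rescaled posterior to $\normal(\Delta_{m,\theta_0}, I_{\theta_0}\inv)$, and then passage to the Bayes point estimator using the prior moment condition --- and it is correct in its main lines, including your identification of the tail-truncation step as the delicate part. The only point stated loosely is the final one: for general $p$ the Bayes estimator is the \emph{minimizer} of the posterior risk $t \mapsto \int \norm{t-\theta}^p \, d\Pi(\theta \mid Z)$ rather than a posterior moment, so after the uniform-integrability argument one still needs an argmin-continuity step (van der Vaart's argmax theorem) to transfer convergence of the posterior risk functions to convergence of their minimizers; only for $p=2$ does this collapse to convergence of the posterior mean as you describe.
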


\mysymbol{$t_m^\varepsilon$}{Test statistic for \BvM{} Theorem}

\cref{thm:bvm} allows conclusions about the point-wise risk
for a given true parameter $\theta_0$.
In order to extend these to the integrated Bayes risk $R(f_m^*)$,
we add the following assumption on the parameter space $\Theta$.

\begin{assumption} %
    \label{asm:parametersCompact}
    The parameter space $\Theta$
    of the considered model is compact.
    In particular, there exists $B < \infty$ with
    $\norm[\infty]{\theta} < B$ for all $\theta \in \Theta$.
\end{assumption}
\mysymbol{$B$}{Bound on parameter space $\Theta$}

This assumption will be used in many of the later results.
Some of these only require boundedness,
such as the following corollary about the Bayes risk $R(f_m^*)$,
but for simplicity we will always assume compactness throughout the paper.
Even though many models originally have unbounded or non-compact parameter spaces,
we can restrict them to compact subsets
thereby satisfying \cref{asm:parametersCompact}.
In practice, this is a very natural restriction,
as the range of ``reasonable'' parameter values is often known in advance.

\begin{corollary} %
    \provenlabel{cor:bayesRiskToZero}
    Let $\Theta$ be bounded (implied by \cref{asm:parametersCompact}),
    consider the quadratic loss from \cref{eq:quadraticLoss},
    and let the conditions from \cref{thm:bvm} be satisfied for some $\theta_0 \in \Theta$.
    Then
    \begin{align}
        \label{eq:bayesRiskPointwise}
        \lim_{m \to \infty}
        R_{\theta_0}(f_m^*)
        &=
        0.
    \end{align}
    If they are satisfied for all $\theta \in \Theta$,
    then the integrated risk converges as well:
    \begin{align}
        \label{eq:bayesRiskIntegrated}
        \lim_{m \to \infty}
        R(f_m^*)
        &=
        0 .
    \end{align}
\end{corollary}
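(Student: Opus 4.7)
The plan is to reduce both statements to bounded convergence: BvM gives convergence in probability of the posterior mean to the true parameter, and boundedness of the parameter space turns this into $L^2$ convergence, after which a second application of dominated convergence handles the integration over the prior.

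First I would establish that $f_m^*$ is uniformly bounded. Since $\pi$ is supported on $\Theta$ and $\norm[\infty]{\theta} < B$ under \cref{asm:parametersCompact}, the posterior distribution of $\theta$ given $Z=z$ is also supported on $\Theta$, so the conditional expectation satisfies $\norm[\infty]{f_m^*(z)} = \norm[\infty]{\E(\theta\mid Z=z)} \leq B$ for all $z \in \Zcal^m$. This yields the deterministic bound $\norm[2]{f_m^*(Z) - \theta_0}^2 \leq 4pB^2$ for any $\theta_0 \in \Theta$, which is the key uniform integrability input.

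For the pointwise statement \cref{eq:bayesRiskPointwise}, the \BvM{} theorem (\cref{thm:bvm}) applied at $\theta_0$ asserts that the Bayes estimator is asymptotically normal and efficient under $\Pf_{\theta_0}^m$. In particular, $\sqrt{m}\lr{f_m^*(Z) - \theta_0}$ converges in distribution to a centered normal, which implies $f_m^*(Z) \to \theta_0$ in $\Pf_{\theta_0}$-probability as $m \to \infty$. Combining this with the uniform bound above, the bounded convergence theorem yields
\begin{align*}
    R_{\theta_0}(f_m^*)
    = \E_{\theta_0}\!\brackets{\norm[2]{f_m^*(Z)-\theta_0}^2}
    \longrightarrow 0.
\end{align*}

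For the integrated statement \cref{eq:bayesRiskIntegrated}, applying the pointwise result at every $\theta \in \Theta$ gives $R_\theta(f_m^*) \to 0$ pointwise, and the uniform bound gives $R_\theta(f_m^*) \leq 4pB^2$ uniformly in $\theta$ and $m$. Since $\pi$ integrates to one on $\Theta$, dominated convergence applied to $R(f_m^*) = \int_\Theta R_\theta(f_m^*)\pi(\theta)d\theta$ concludes the proof.

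The main obstacle is the transition from convergence in distribution (which is what \BvM{} natively supplies) to $L^2$ convergence of the risk; this is where the compactness/boundedness hypothesis on $\Theta$ enters in an essential way, by forcing $f_m^*$ to stay in the convex hull of $\Theta$ and thereby providing the uniform envelope that makes bounded/dominated convergence applicable. Without this boundedness one would need a separate uniform integrability argument for $\norm[2]{f_m^*(Z)-\theta_0}^2$.
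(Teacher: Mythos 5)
Your proof is correct and follows essentially the same route as the paper's: \BvM{} gives convergence in probability of $f_m^*(Z)$ to $\theta_0$, boundedness of $\Theta$ supplies the uniform envelope $4pB^2$ that upgrades this to convergence of the pointwise risk, and dominated convergence over the prior handles the integrated risk. The only differences are cosmetic — the paper routes through an auxiliary lemma establishing the slightly stronger rate $m^{1/4}\norm[2]{f_m^*(Z)-\theta_0}\pto 0$ and then splits the expectation explicitly, whereas you invoke bounded convergence directly; you also make explicit the (correct) justification that $f_m^*$ is bounded because the posterior mean lies in the convex hull of $\Theta$, which the paper leaves implicit.
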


The condition on the prior distribution in \cref{thm:bvm} can be satisfied
by a suitable choice for $\pi$ such as
the continuous uniform distribution, for instance; in the sequel we will assume
that the prior distribution is chosen in such a way.
The remaining conditions from \cref{thm:bvm} and \cref{cor:bayesRiskToZero}
have to be checked on a per-model basis.
The results below confirm the conditions for
the examples given in \cref{sec:exampleModels}.

\begin{lemma}
    \provenlabel{lemma:bvmGauss}
    The Gaussian random field models with
    powered exponential covariance function \cref{eq:poweredExponential} and the \Matern{} covariance function \cref{eq:matern},
    evaluated at a fixed set of points $s_1, \dots, s_d$,
    satisfy the conditions of \cref{cor:bayesRiskToZero}
    if the points are located such that there are at least two distinct, non-zero distances between them.
\end{lemma}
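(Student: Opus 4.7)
The plan is to verify the three non-trivial hypotheses of \cref{thm:bvm} for every $\theta_0$ in the interior of $\Theta$: (i) differentiability in quadratic mean (QMD) of $\theta \mapsto p_{\theta,m}$ at $\theta_0$; (ii) non-singularity of the Fisher information $I_{\theta_0}$; and (iii) existence of uniformly consistent tests $t_m^\varepsilon$. The remaining requirements of \cref{cor:bayesRiskToZero} follow immediately: the quadratic loss has the required form with $p=2$, \cref{asm:parametersCompact} guarantees $\int_\Theta \|\theta\|^2 \pi(\theta)\,d\theta<\infty$, and choosing $\pi$ continuous and strictly positive on $\Theta$ (e.g., uniform) handles the prior condition.

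For (i), the multivariate Gaussian density $p_\theta(z) = (2\pi)^{-d/2}|\Sigma_\theta|^{-1/2}\exp(-\tfrac{1}{2}z\T \Sigma_\theta\inv z)$ depends smoothly on $\theta$: both the powered exponential covariance on $(0,\infty)\times(0,2)$ and the \Matern{} covariance on $(0,\infty)$ (for fixed $\nu$) are $C^\infty$, and the nugget contribution is smooth in $\tau$. Furthermore, $\Sigma_\theta$ stays uniformly positive definite on compact sub-neighborhoods of the interior of $\Theta$, so the log-likelihood is twice continuously differentiable in $\theta$ with a locally integrable envelope. Classical results (e.g., Lemma 7.6 in \cite{vaart_1998}) then yield QMD at every such $\theta_0$.

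For (ii), the Fisher information of a centered Gaussian is
\begin{align*}
    I_\theta[j,k] &= \tfrac{1}{2}\trace\bigl(\Sigma_\theta\inv\, \partial_j \Sigma_\theta \,\Sigma_\theta\inv\, \partial_k \Sigma_\theta\bigr),
\end{align*}
which is non-singular iff the symmetric matrices $\partial_1\Sigma_\theta,\dots,\partial_p\Sigma_\theta$ are linearly independent (since $v\T I_\theta v = \tfrac{1}{2}\|\Sigma_\theta^{-1/2}(\sum_j v_j\partial_j\Sigma_\theta)\Sigma_\theta^{-1/2}\|_F^2$). The derivative with respect to $\tau$ equals $2\tau\,\Id$ and affects only the diagonal, while derivatives with respect to $\tilde\theta$ vanish on the diagonal; matching entries therefore decouples the $\tau$-component and reduces the question to linear independence of the maps $h\mapsto \partial_{\tilde\theta_j}\tilde C_{\tilde\theta}(h)$ evaluated at the observed pairwise distances. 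For \Matern{} this is secured by a single non-zero distance, while for powered exponential one must check that the Jacobian
\begin{align*}
    \begin{pmatrix}
        \partial_\lambda \tilde C_{\tilde\theta}(h_1) & \partial_\alpha \tilde C_{\tilde\theta}(h_1)\\
        \partial_\lambda \tilde C_{\tilde\theta}(h_2) & \partial_\alpha \tilde C_{\tilde\theta}(h_2)
    \end{pmatrix}
\end{align*}
is non-singular; using $\tilde C_{\tilde\theta}(h)=\exp\{-(h/\lambda)^\alpha\}$, a direct computation reduces this to $\log h_1 \ne \log h_2$, which is precisely the hypothesis. This is the main technical obstacle and the only place where the assumption on the locations is actually used.

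For (iii), the standard route is to construct Neyman--Pearson tests from the log-likelihood ratio, which satisfy the required bound as soon as
\begin{align*}
    \inf_{\theta\in\Theta,\;\|\theta-\theta_0\|>\varepsilon} K(p_{\theta_0},p_\theta) > 0,
\end{align*}
where $K$ denotes the Kullback--Leibler divergence. For zero-mean Gaussians, $K(p_{\theta_0},p_\theta) = \tfrac{1}{2}[\trace(\Sigma_\theta\inv\Sigma_{\theta_0}) - d + \log(|\Sigma_\theta|/|\Sigma_{\theta_0}|)]$ is continuous in $\theta$ and vanishes iff $\Sigma_\theta = \Sigma_{\theta_0}$. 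Injectivity of $\theta\mapsto\Sigma_\theta$ follows from the same two-distance argument as in (ii): taking log-log ratios of $\tilde C_{\tilde\theta}(h_1)$ and $\tilde C_{\tilde\theta}(h_2)$ determines $\alpha$ and then $\lambda$ for the powered exponential (one distance suffices for \Matern{}), while the diagonal of $\Sigma_\theta$ pins down $\tau^2$. By compactness of $\{\theta\in\Theta:\|\theta-\theta_0\|\geq\varepsilon\}$, continuity and positivity give a strictly positive infimum, and a standard construction (Lemma 10.3 in \cite{vaart_1998}) produces tests with the required uniform consistency, concluding the verification of \cref{cor:bayesRiskToZero}.
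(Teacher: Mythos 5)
Your proposal is correct in substance and follows the same skeleton as the paper (QMD, non-singular Fisher information, tests, prior), but it diverges in how the tests are built and in how much of the Fisher-information computation is made explicit. The paper constructs the tests \emph{explicitly}: in \cref{lemma:BvMmultivariateNormal} it thresholds $\tnorm[\infty]{\hat\Sigma-\Sigma_{\theta_0}}$ at half the separation $I_\varepsilon=\inf_{\norm{\theta-\theta_0}\geq\varepsilon}\tnorm[\infty]{\Sigma_\theta-\Sigma_{\theta_0}}$ and bounds the type~II error uniformly by Chebyshev, so all that must be verified per model is $I_\varepsilon>0$; \cref{lemma:bvmTestsGaussExp,lemma:bvmTestsGaussMatern} obtain this from continuous invertibility of $\theta\mapsto(C_\theta(0),C_\theta(h_1),C_\theta(h_2))$ --- the same log-log-ratio identifiability argument you use, and the only place the two-distance hypothesis enters. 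Your route instead goes through likelihood-ratio tests and a Kullback--Leibler separation; this is workable but is the one step where you are breezy at the delicate point: a Neyman--Pearson test handles a \emph{simple} alternative, and turning pointwise KL separation into a single sequence of tests that is uniformly consistent over the composite alternative $\set{\norm{\theta-\theta_0}\geq\varepsilon}$ requires a covering/uniform-LLN argument or an existence lemma whose standard hypotheses are stated in terms of total-variation or Hellinger separation plus identifiability, continuity and compactness (which you do invoke) rather than KL alone; the paper's moment-based tests sidestep this entirely and are quantitative ($2S^2/((m-1)\delta^2)$). Conversely, your verification of non-singularity of $I_{\theta_0}$ via linear independence of $\partial_j\Sigma_\theta$, the decoupling of the nugget through the diagonal, and the $2\times2$ Jacobian determinant proportional to $\log h_1-\log h_2$ is more explicit than the paper, which establishes QMD through the exponential-family criterion (Lemma~7.6/Example~7.7 of \cite{vaart_1998}) and leaves the non-singularity largely implicit. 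One shared caveat: both arguments degenerate at $\tau=0$ (where $\partial_\tau\Sigma_\theta$ vanishes), so the statement should be read for $\theta_0$ with $\tau>0$ or with the nugget parametrized by $\tau^2$.
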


For max-stable distributions, the following result is obtained by using the tests constructed in \cite{dombry2017bayesian}.

\begin{lemma}
    \provenlabel{lemma:bvmEV}
    The logistic model from \cref{eq:logistic}
    and the \BrownResnick{} model from \cref{eq:brownResnick}
    satisfy the conditions of \cref{cor:bayesRiskToZero} %
    if the points are located such that there are at least two distinct, non-zero distances between them.
\end{lemma}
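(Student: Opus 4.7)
The plan is to verify, at every $\theta_0 \in \Theta$, the three ingredients required by \cref{thm:bvm}: differentiability in quadratic mean (DQM) of the density, non-singularity of the Fisher information matrix $I_{\theta_0}$, and the existence of uniformly consistent tests $(t_m^\varepsilon)_{m \in \mathbb{N}}$. The remaining hypotheses of \cref{cor:bayesRiskToZero} (finite prior moments and a continuous, positive prior on a neighborhood of $\theta_0$) follow from \cref{asm:parametersCompact} and from the standing assumption on the choice of $\pi$.

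For the test sequences, I would import the construction of \cite{dombry2017bayesian}. Both the logistic family \cref{eq:logistic} and the \BrownResnick{} family \cref{eq:brownResnick} fall within the class of parametric max-stable models treated there, for which tests separating $\theta_0$ from the complement of any open neighborhood are constructed whenever the parameter space is compact and the model is identifiable. In the \BrownResnick{} case, identifiability at the finite-dimensional level requires the geometric assumption on the locations, discussed below.

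For DQM, I would rely on the explicit form of the density. For the logistic model, the distribution function \cref{eq:logistic} is real-analytic in $\theta \in (0,1)$, and its $d$-th mixed partial derivative inherits this regularity; DQM then follows from the standard differentiation-under-the-integral argument of Lemma~7.6 in \cite{vaart_1998}. For the H\"usler--Reiss distribution arising as the finite-dimensional margin of the \BrownResnick{} process, the density admits an explicit representation via the precision matrix of the associated Gaussian field \citep{HES2022}, which depends smoothly on $(c,\alpha) \in (0,\infty) \times (0,2)$; the same argument then yields DQM.

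The main obstacle is verifying that $I_{\theta_0}$ is non-singular, and this is precisely where the geometric condition on the sampling locations enters in the \BrownResnick{} case. The finite-dimensional H\"usler--Reiss distribution depends on $(c,\alpha)$ only through the variogram values $\set{c\, h_{ij}^\alpha}_{i<j}$ at the pairwise distances $h_{ij}$. If all non-zero $h_{ij}$ coincided, this vector would reduce to the single quantity $c h^\alpha$, the two score components $\partial_c \log p_\theta$ and $\partial_\alpha \log p_\theta$ would be collinear, and $I_{\theta_0}$ would be singular. With two distinct non-zero distances $h_1 \neq h_2$, the $2 \times 2$ Jacobian of $(c,\alpha) \mapsto (c h_1^\alpha, c h_2^\alpha)$ has determinant $c\, h_1^\alpha h_2^\alpha \log(h_2/h_1) \neq 0$, so the chain rule transfers non-singularity from the ``variogram parametrization'' (itself non-degenerate by standard results on the H\"usler--Reiss family) to $(c,\alpha)$. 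For the one-parameter logistic model, non-singularity of the scalar Fisher information reduces to showing that the score does not vanish almost surely, which follows from the strict dependence on $\theta$ of the exponent in \cref{eq:logistic}. Combining these pieces delivers all hypotheses of \cref{cor:bayesRiskToZero} at every $\theta_0 \in \Theta$.
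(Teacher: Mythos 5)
Your proposal follows the same essential route as the paper: the heavy lifting is delegated to \cite{dombry2017bayesian}. The paper's own proof is a single sentence citing Proposition~3 and Corollary~2 of that reference, which already establish differentiability in quadratic mean, non-singularity of the Fisher information, and the existence of the required tests for both the logistic and the \BrownResnick{} models; you import only the test construction and reconstruct the other two ingredients by hand. The reconstruction is sound in outline, and your Jacobian computation for $(c,\alpha)\mapsto(c\,h_1^\alpha, c\,h_2^\alpha)$ is a nice way of making explicit where the two-distinct-distances condition enters (it is vacuous for the one-parameter logistic model, as you correctly note). Two of your intermediate claims are, however, asserted rather than proved: that the Fisher information of the H\"usler--Reiss family in the variogram parametrization is non-degenerate, and that the partition-sum density of a max-stable distribution satisfies the integrability and continuity conditions needed to invoke Lemma~7.6 of \cite{vaart_1998} for differentiability in quadratic mean. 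These are exactly the technical points that Proposition~3 and Corollary~2 of \cite{dombry2017bayesian} settle, so the cleanest repair is to cite them for those steps as well --- which collapses your argument back to the paper's one-line proof.
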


\section{Approximation error}
\label{sec:approximationError}

In the previous section we showed that,
under suitable assumptions,
the risk $R(f_m^*)$ of the Bayes estimator vanishes.
In the context of neural estimators,
this estimator is approximated by a neural network,
giving rise to the approximation error.
Assuming the quadratic loss function from \cref{eq:quadraticLoss},
this error can be expressed as
\begin{align}
    \label{eq:approxError}
    R(\phi_m^*) - R(f_m^*)
    &=
    \E\left[\norm[2]{
        f_{\phi_m^*}(Z)
        -
        f_m^*(Z)
    }^2\right]
    ;
\end{align}
see \cref{sec:proofApproxError} for details.
Note that without any restrictions,
the concept of an ``optimal'' network $\phi_m^*$ is not well-defined.
Instead, we show that the error term above can be made arbitrarily small
by using a suitable activation function,
large enough architecture for a given input size $z = (z^1, \dots, z^m)$,
and optimal parameter values.
The effect of training a model with this architecture using the empirical
risk $R_N$ instead of the population risk $R$
is investigated in \cref{sec:generalizationError}.

In order to bound the risk of the considered networks,
we make use of the universal approximation theorem.
For the sake of specificity,
we only consider the ReLU activation function, denoted by~$\sigma$.
However, the results given here and in \cref{sec:generalizationError}
also apply to other commonly used activation functions,
possibly with slight modifications.

\begin{theorem}[Cybenko, 1989] %
    \label{thm:cyb}
    For every continuous function $f: \R^D \to \R^p$,
    $M > 0$,
    and
    $\varepsilon > 0$
    there exists
    a shallow neural network parametrization $\phi = (w, \beta)$
    with one hidden layer of width $N_1 \in \NN$,
    such that
    \begin{align*}
        \sup_{x \in \left[-M,M\right]^D}
        \norm[2]{f(x) - f_\phi(x)}
        < \varepsilon
        .
    \end{align*}
\end{theorem}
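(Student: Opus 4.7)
The plan is to reduce the vector-valued statement to the scalar case and then establish density in $C(\lbrack -M,M \rbrack^D)$ via a Hahn--Banach / duality argument. If $f = (f_1, \dots, f_p)$ with each $f_k \in C(\lbrack -M,M \rbrack^D)$, I would approximate each $f_k$ to accuracy $\varepsilon / \sqrt{p}$ by a single-hidden-layer scalar network, then stack these networks in parallel (sharing the input layer, with independent output neurons) to obtain a shallow vector-valued network $f_\phi$ with $\sup_x \tnorm[2]{f(x) - f_\phi(x)} < \varepsilon$ on $\lbrack -M,M \rbrack^D$. So it suffices to treat $p = 1$.

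For $p = 1$, let $\Scal$ denote the linear span of functions $x \mapsto \sigma(w\T x + \beta) + c$ with $w \in \Rd[D]$ and $\beta, c \in \R$; the theorem is equivalent to density of $\Scal$ in $(C(\lbrack -M,M \rbrack^D), \tnorm[\infty]{\cdot})$. Suppose for contradiction that the closure $\overline{\Scal}$ is a proper subspace. By Hahn--Banach there is a nonzero continuous linear functional on $C(\lbrack -M,M \rbrack^D)$ vanishing on $\overline{\Scal}$; by the Riesz representation theorem it is of the form $g \mapsto \int g \, d\mu$ for some nonzero finite signed Borel measure $\mu$ on $\lbrack -M,M \rbrack^D$ satisfying
\[
    \int \sigma(w\T x + \beta) \, d\mu(x) = 0
    \quad \text{for all } w \in \Rd[D], \; \beta \in \R.
\]

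I would then derive a contradiction by mollification. Convolving the above identity in $\beta$ against a bump $\psi \in C_c^\infty(\R)$ and applying Fubini yields $\int (\sigma * \psi)(w\T x + \beta) \, d\mu(x) = 0$ for all $w, \beta$. Since $\sigma * \psi$ is $C^\infty$ and not a polynomial, higher $\beta$-derivatives may be taken inside the integral (justified by dominated convergence on the compact domain), producing a non-vanishing family of smooth ridge functions against which $\mu$ integrates to zero. Varying $w$ and invoking a Cram\'er--Wold style argument on the one-dimensional pushforwards $\mu \circ (w\T \cdot)\inv$ then forces $\mu = 0$, the desired contradiction. An alternative, cleaner route is to cite the Leshno--Lin--Pinkus--Schocken (1993) extension of Cybenko's theorem, which characterises universality of continuous single-hidden-layer networks on compacta by non-polynomiality of the activation; since ReLU is continuous and non-polynomial, the claim follows immediately.

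The main obstacle is this last step -- extracting $\mu = 0$ from orthogonality to all ReLU ridge functions. Cybenko's original argument relied on sigmoidal activations (bounded, with limits $0$ and $1$ at $\pm\infty$), which does not apply directly to the unbounded ReLU, so the mollification / non-polynomial argument above (or the Leshno et al.\ generalization) is the natural replacement. A secondary care point is to ensure that the approximating object has the form of a \emph{finite-width} single-hidden-layer network as in \cref{example:single_layer_NN} rather than an infinite series, which is automatic because $\Scal$ is defined as the span of finite linear combinations.
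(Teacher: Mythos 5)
The paper offers no proof of this statement at all: it is quoted as a classical result and attributed to \cite{cybenko1989approximation}, with the universal approximation property simply taken as an input to \cref{prop:approximation}. Your sketch is therefore not competing with an argument in the paper but reconstructing the standard one, and it does so essentially correctly: the reduction to $p=1$ by componentwise approximation at accuracy $\varepsilon/\sqrt{p}$ and parallel stacking is fine, and the Hahn--Banach/Riesz duality argument followed by mollification in $\beta$ is the classical route. Your most valuable observation is one the paper glosses over: Cybenko's 1989 theorem assumes a sigmoidal (bounded, discriminatory) activation and does not literally cover the ReLU used throughout \cref{sec:approximationError,sec:generalizationError}, so the correct citation is the Leshno--Lin--Pinkus--Schocken characterisation (density iff the continuous activation is not a polynomial), or Pinkus's 1999 survey. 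The only soft spot in your argument is the final step, where ``a Cram\'er--Wold style argument \dots forces $\mu=0$'' is asserted rather than carried out; for ReLU this can be made fully elementary: for each fixed $w$ the function $\beta \mapsto \int \sigma(w^\top x+\beta)\,d\mu(x)$ vanishes identically, and its second distributional derivative in $\beta$ is (a reflection of) the pushforward $\mu\circ(w^\top\cdot)^{-1}$, which must therefore vanish for every $w$; since these pushforwards determine the Fourier transform of the compactly supported signed measure $\mu$ along every ray, $\mu=0$. With that step filled in (or with the Leshno et al.\ citation substituted wholesale), your proof is complete and arguably more honest than the paper's attribution.
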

\mysymbol{$D = md$}{Input dimension of neural network}
\mysymbol{$M$}{Bound on input space of neural network, bound in uniform tightness condition}

\cref{example:single_layer_NN} provides details for the parametrization
of such a shallow network. In order to apply this result---and others like it---to the setting of neural estimators,
we need to introduce some assumptions on the considered models.

\begin{assumption} %
    \label{asm:bayesContinuous}
    For any $m\in\mathbb N$, the Bayes estimator function
    $f_m^*(z) = \E(\theta | Z = z)$
    is continuous on $\mathcal Z^m$. %
\end{assumption}

This assumption ensures that the Bayes estimator function
is actually in the class of functions that can be approximated well
by neural networks.
While the Bayes estimator is usually not available in closed form,
the following lemma gives a sufficient condition for its continuity.

\begin{lemma}
    \provenlabel{lemma:bayesContinuousThree}
    Consider a compact parameter space $\Theta$
    and the quadratic loss function from \cref{eq:quadraticLoss}.
    If $p_\theta(z)$ is continuous in $\theta$ and $z$,
    then $f(z) = \E(\theta | Z = z)$ is continuous.
\end{lemma}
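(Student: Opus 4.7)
The plan is to express the Bayes estimator as a ratio of two integrals over $\Theta$ and apply the dominated convergence theorem to each. Under the quadratic loss from \cref{eq:quadraticLoss}, Bayes' rule gives the posterior mean
\[
f_m^*(z) = \frac{N(z)}{D(z)}, \qquad N(z) = \int_\Theta \theta\, p_{\theta,m}(z)\, \pi(\theta)\, d\theta, \qquad D(z) = \int_\Theta p_{\theta,m}(z)\, \pi(\theta)\, d\theta,
\]
where $p_{\theta,m}(z) = \prod_{i=1}^m p_\theta(z^i)$ is jointly continuous in $(\theta, z)$ as a finite product of functions that are jointly continuous in $(\theta, z^i)$.

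First, I would establish continuity of $N$ and $D$ separately. Fix $z_0 \in \Zcal^m$ and let $z_n \to z_0$. Choose a compact neighborhood $K \subset \R^{md}$ of $z_0$, so that $z_n \in K$ for all sufficiently large $n$. Since $\Theta \times K$ is compact and $p_\theta(z')$ is continuous on it, there exists $M > 0$ with $p_\theta(z') \leq M$ for all $(\theta, z') \in \Theta \times K$, hence $p_{\theta,m}(z_n) \leq M^m$. Combined with the bound $\norm[\infty]{\theta} \leq B$ coming from compactness of $\Theta$, the integrand $\theta\, p_{\theta,m}(z_n)\, \pi(\theta)$ is bounded componentwise by $B M^m \pi(\theta)$, which is integrable on $\Theta$ since $\pi$ is a probability density. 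The pointwise limit $p_{\theta,m}(z_n) \to p_{\theta,m}(z_0)$ holds by continuity, so dominated convergence yields $N(z_n) \to N(z_0)$. Dropping the factor $\theta$ repeats the argument for $D$.

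Finally, $f_m^*$ is well-defined precisely where $D(z) > 0$, and on that set (open by continuity of $D$) the quotient $N/D$ is continuous, completing the proof. The only subtle point I would flag is ensuring $D(z) > 0$ throughout $\Zcal^m$ rather than only on the interior of the support of the marginal of $Z$; this is automatic for the models in \cref{sec:exampleModels} whose densities are strictly positive on $\Zcal^m$, and otherwise one restricts attention to the open set $\set{z \in \Zcal^m : D(z) > 0}$, which carries the full marginal mass of $Z$ and is the genuine domain of $f_m^*$. This is the main technical obstacle; the dominated convergence step itself is routine once the local compactness argument supplies a dominating function.
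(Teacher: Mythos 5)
Your proposal is correct and follows essentially the same route as the paper's proof: write the posterior mean as a ratio of integrals over $\Theta$, use compactness of $\Theta \times K$ to obtain a uniform bound on $p_{\theta,m}$ along a convergent sequence $z_n \to z_0$, apply dominated convergence with dominating functions proportional to $\pi(\theta)$, and handle positivity of the denominator by restricting to the genuine domain of $f$. The paper's proof is the same argument with $P = \sup_{\theta \in \Theta, z \in K} p_{\theta,m}(z)$ in place of your $M^m$.
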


The statement of universal approximation in~\cref{thm:cyb} holds only on 
bounded sets $[-M,M]^D$. We therefore require an assumption that ensures 
that the observations of $Z$ are, with high probability, eventually
contained in a compactum, independently of the parameter~$\theta_0$.

\begin{assumption} %
    \label{asm:modelTight}
    The model is uniformly tight, that is, for every $\varepsilon > 0$
    there exists $M > 0$ such that
    \begin{align*}
        \sup_{\theta_0 \in \Theta}\P_{\theta_0}\left(\norm[\infty]{Z} > M\right)
        <
        \varepsilon        
        .
    \end{align*}
\end{assumption}
While the stronger assumption of bounded realizations of the model would imply the above condition,
it would be too restrictive for the examples of Gaussian processes or extreme value models described in \cref{sec:exampleModels}.
Instead, the error incurred by observations from outside the compactum $[-M,M]^D$
can be controlled by \cref{asm:parametersCompact},
which ensures that the model parameters, and hence the corresponding loss function,
are bounded.

The following example illustrates that
\cref{asm:modelTight} is in some cases implied by \cref{asm:parametersCompact},
similar statements hold for the examples in \cref{sec:exampleModels}.
\begin{example}
    \label{example:tightIfBounded}
    The model $\Pf_\theta = \mathcal{N}(\mu,\sigma^2)$,
    with parameter
    $\theta = (\mu,\sigma) \in \Theta \subset \R \times (0,\infty)$
    is tight if and only if $\Theta$ is bounded.
\end{example}

Combining these assumptions and the universal approximation theorem,
we get the following result.

\begin{prop} %
    \provenlabel{prop:approximation}
    Suppose \cref{asm:parametersCompact,asm:bayesContinuous,asm:modelTight} hold.
    Then, for every $\varepsilon > 0$,
    there exists a ReLU neural network 
    with two hidden layers 
    and parameters $\phi_m^*$ that has bounded outputs and satisfies
    \begin{align}\label{eq:point_risk}
        R_{\theta_0}(\phi_m^*) - R_{\theta_0}(f_m^*)
        =
        \E_{\theta_0}\brackets{
            \norm[2]{f_{\phi_m^*}(Z)
            -
            f_m^*(Z)}^2
        }
        <
        \varepsilon
        ,
    \end{align}
    for all $\theta_0 \in \Theta$; here the expectation is over $Z \sim \Pf_{\theta_0}$
    for a fixed value of $\theta_0$.
    Furthermore, the same network satisfies 
    \begin{align}\label{eq:int_risk}
        R(\phi_m^*) - R(f_m^*)
        &=
        \E\brackets{
            \norm[2]{f_{\phi_m^*}(Z)-f_m^*(Z)}^2
        }
        <
        \varepsilon
        ,
    \end{align}
    where the expectation is over $\theta \sim \pi$ and $Z \sim \Pf_\theta$.
\end{prop}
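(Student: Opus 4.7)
The plan is to apply the universal approximation theorem (\cref{thm:cyb}) on a sufficiently large compact box chosen via uniform tightness, realize coordinate-wise clipping with an additional ReLU hidden layer to guarantee bounded outputs, and finally absorb the tail contribution using \cref{asm:parametersCompact,asm:modelTight}.

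First I would fix auxiliary constants $\varepsilon', \delta > 0$ to be chosen later. By \cref{asm:modelTight} there exists $M > 0$ with $\sup_{\theta_0 \in \Theta} \P_{\theta_0}(\norm[\infty]{Z} > M) < \delta$, and by \cref{asm:parametersCompact}, $\Theta \subseteq [-B,B]^p$; since $\theta \in \Theta$ almost surely, the Bayes estimator $f_m^*(z) = \E(\theta \mid Z = z)$ can be chosen to satisfy $f_m^*(z) \in [-B,B]^p$ for every $z$. By \cref{asm:bayesContinuous}, $f_m^*$ is continuous on the compactum $[-M,M]^D$, so \cref{thm:cyb} yields a parametrization $\tilde\phi$ of a one-hidden-layer ReLU network with $\sup_{z \in [-M,M]^D} \norm[2]{f_{\tilde\phi}(z) - f_m^*(z)} < \varepsilon'$.

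Second, to enforce bounded outputs I would append the coordinate-wise clipping map $\mathrm{clip}_B(x) = \sigma(x+B) - \sigma(x-B) - B$, which equals $\max(-B, \min(B, x))$ and is implementable as one additional ReLU hidden layer of width $2p$ followed by a linear combination. The resulting two-hidden-layer network $f_{\phi_m^*}$ has outputs in $[-B,B]^p$; since $\mathrm{clip}_B$ is $1$-Lipschitz and fixes $[-B,B]$, and $f_m^*(z) \in [-B,B]^p$, the approximation quality on the compactum is preserved, i.e. $\norm[2]{f_{\phi_m^*}(z) - f_m^*(z)} \leq \norm[2]{f_{\tilde\phi}(z) - f_m^*(z)} < \varepsilon'$ on $[-M,M]^D$.

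Third, I would split the expectation according to whether $Z$ lies in $[-M,M]^D$. Off the compactum, both $f_{\phi_m^*}(Z)$ and $f_m^*(Z)$ lie in $[-B,B]^p$, so their squared distance is at most $4B^2 p$. Hence for every $\theta_0 \in \Theta$,
\begin{align*}
    \E_{\theta_0}\norm[2]{f_{\phi_m^*}(Z) - f_m^*(Z)}^2
    \leq (\varepsilon')^2 + 4B^2 p\, \delta,
\end{align*}
and choosing $\varepsilon'$ and $\delta$ small enough makes this uniform bound smaller than any prescribed threshold. For the integrated risk \cref{eq:int_risk}, the identity $R(\phi_m^*) - R(f_m^*) = \E\norm[2]{f_{\phi_m^*}(Z) - f_m^*(Z)}^2$ is exact by the tower property applied to $f_m^*(Z) = \E(\theta \mid Z)$, so the displayed bound transfers directly; for the pointwise claim \cref{eq:point_risk}, the decomposition $R_{\theta_0}(\phi_m^*) - R_{\theta_0}(f_m^*) = \E_{\theta_0}\norm[2]{f_{\phi_m^*}(Z) - f_m^*(Z)}^2 + 2\E_{\theta_0}\langle f_{\phi_m^*}(Z) - f_m^*(Z),\, f_m^*(Z) - \theta_0\rangle$ produces an extra cross term that I would bound by Cauchy--Schwarz together with $\norm[2]{f_m^*(Z) - \theta_0} \leq 2B\sqrt{p}$, so that shrinking $\varepsilon'$ and $\delta$ further drives the pointwise gap below $\varepsilon$ uniformly in $\theta_0$.

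The main obstacle is obtaining uniformity in $\theta_0$: the universal approximation theorem is purely deterministic on a fixed compactum, but the mass of $\Pf_{\theta_0}$ outside that compactum must be controlled simultaneously over all $\theta_0 \in \Theta$, which is exactly the role played by \cref{asm:modelTight}. A secondary technical point is the explicit ReLU realization of the output clipping, needed to meet the boundedness requirement of the statement without degrading the approximation inside the box.
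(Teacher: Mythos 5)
Your proposal is correct and follows essentially the same route as the paper's proof: use uniform tightness (\cref{asm:modelTight}) to fix a compactum $[-M,M]^D$, apply \cref{thm:cyb} there to the continuous Bayes estimator, append a second ReLU layer realizing $x \mapsto \sigma(x+B)-\sigma(x-B)-B$ to clip the output into $[-B,B]^p$, and split the expectation into the box (where the sup-norm bound applies) and its complement (where the squared distance is at most $4pB^2$). The one point where you go beyond the paper is the pointwise claim: the paper simply asserts the identity $R_{\theta_0}(\phi_m^*)-R_{\theta_0}(f_m^*)=\E_{\theta_0}\brackets{\norm[2]{f_{\phi_m^*}(Z)-f_m^*(Z)}^2}$, which is justified for the integrated risk by the tower property but not for a fixed $\theta_0$, since the cross term $\E_{\theta_0}\brackets{\langle f_{\phi_m^*}(Z)-f_m^*(Z),\, f_m^*(Z)-\theta_0\rangle}$ need not vanish under $\Pf_{\theta_0}^m$; your Cauchy--Schwarz bound on that term, using $\norm[2]{f_m^*(Z)-\theta_0}\leq 2B\sqrt{p}$, is the more careful way to obtain the uniform pointwise bound.
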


Since \cref{thm:cyb} guarantees the existence of a uniformly good approximation,
the network parameters $\phi_m^*$ do not depend on $\theta_0$.
Hence, we obtain the result in~\cref{eq:int_risk} for the integrated risk $R$
by integrating out the prior distribution $\theta \sim \pi$ in~\cref{eq:point_risk}.
The network architecture used in the proof of \cref{prop:approximation} has two hidden layers:
one for the actual approximation of the Bayes estimator $f^*$,
and one to make sure the output is contained in the bounded set $\brackets{-B,B}^p$.
By \cref{asm:parametersCompact}, this interval contains the entire parameters space $\Theta$,
and the clipping does therefore not impact the approximation quality of the network.

As an intermediate theoretical result, we can find a sequence of  neural network parametrization $\phi^*_m$ for $m$ replicates
such that 
\[ R_{\theta_0}(\phi_m^*) \to 0 \qquad R(\phi_m^*) \to 0, \qquad  m\to\infty, \]
that is, the pointwise and the joint risks of the optimal neural
networks converge to zero. This follows by choosing 
different $\varepsilon_m$ in \cref{eq:point_risk} and \cref{eq:int_risk} for each value of $m$
such that $\varepsilon_m \to 0$ for $m \rightarrow \infty$,
and using \cref{cor:bayesRiskToZero} for convergence of the Bayes risk.
In practice, this result is not applicable directly since 
the optimal network is unknown. 
In the next section we therefore study the generalization error
that quantifies how far the estimated parameters $\phi_m^N$ are 
from the optimal $\phi^*_m$.

\begin{remark}
    \cref{thm:cyb} on universal approximation states the existence of a shallow neural network for arbitrary good approximation only. \citet{poggio2017and} show that in the absence of  additional assumptions on the smoothness of the function, the complexity of a shallow neural network is at least exponential in the input size when the activation function is not a polynomial. In the literature, this is referred to as the \textit{curse of dimensionality}. To overcome this, it is believed that deep neural networks are necessary.
    \citet{poggio2017and} also argue that deep neural networks can more efficiently approximate functions with hierarchical compositional structure, leading to significantly reduced complexity compared to shallow networks; see also \cite{schmidt2020nonparametric}.
    \citet{yarotsky2017error} constructs approximation schemes for deep ReLU networks, where the required network complexity depends explicitly on the smoothness of the target function. As the function's smoothness increases, the exponential dependence on the input dimension can be mitigated. However, when the smoothness is insufficient, the number of neurons per layer can still grow exponentially with dimension.
\end{remark}

We conclude this section by verifying the conditions for \cref{prop:approximation} to hold for the models considered in \cref{sec:exampleModels}.
We have to check \cref{asm:parametersCompact,asm:bayesContinuous,asm:modelTight}.
The applicability of \cref{asm:parametersCompact} is discussed following its statement,
and the other two assumptions have to be checked on a per-model basis,
which is done below.

\begin{lemma}
    \provenlabel{lemma:approxGauss}
    When restricted to a compact parameter space $\Theta$
    (\cref{asm:parametersCompact}),
    the powered exponential model from \cref{eq:poweredExponential}
    and the \Matern{} model from \cref{eq:matern}
    satisfy \cref{asm:modelTight,asm:bayesContinuous}.
\end{lemma}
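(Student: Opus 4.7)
The plan is to verify \cref{asm:modelTight} via a Gaussian tail bound uniform in $\theta$, and \cref{asm:bayesContinuous} by applying \cref{lemma:bayesContinuousThree} once joint continuity of the density $p_\theta(z)$ in $(\theta, z)$ has been established.

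For uniform tightness, every scalar entry of $Z = (Z^1, \ldots, Z^m)$, with $Z^i \sim \normal(\zerovec, \Sigma_\theta)$, is a centered Gaussian with variance $C_\theta(0) = 1 + \tau^2$; here I use that $\tilde C_{\tilde\theta}(0) = 1$ for the powered exponential, and for the \Matern{} model via the asymptotic $u^\nu K_\nu(u) \to 2^{\nu-1}\Gamma(\nu)$ as $u \to 0^+$. Compactness of $\Theta$ (\cref{asm:parametersCompact}) gives a uniform bound $\tau^2 \leq \tau_{\max}^2$, so the one-dimensional Gaussian tail bound $\P_{\theta_0}(\abs{Z^i_j} > M) \leq 2\exp\set{-M^2/(2(1+\tau_{\max}^2))}$ holds uniformly in $\theta_0 \in \Theta$, and a union bound over the $md$ entries of $Z$ completes the argument for any prescribed $\varepsilon > 0$.

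For continuity of the Bayes estimator, \cref{lemma:bayesContinuousThree} reduces the task to joint continuity of $(\theta, z) \mapsto p_{\theta, m}(z) = \prod_{i=1}^m p_\theta(z^i)$, where $p_\theta(z^i) = (2\pi)^{-d/2}\abs{\Sigma_\theta}^{-1/2}\exp\set{-\half (z^i)^\top \Sigma_\theta^{-1} z^i}$. It therefore suffices to show that $\theta \mapsto \Sigma_\theta$ is continuous and that $\Sigma_\theta$ is uniformly invertible. Entrywise continuity of $\Sigma_\theta$ reduces to continuity of $\theta \mapsto C_\theta(\norm{s_i-s_j})$, which for the powered exponential is immediate from the explicit formula, and for the \Matern{} model follows from smoothness of $K_\nu$ on $(0,\infty)$ combined with the behavior at the origin used above; continuity in $z$ is automatic.

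The main technical step is the uniform invertibility of $\Sigma_\theta$. Since both covariance functions are strictly positive definite on $\Rd[k]$, the matrix $\Sigma_\theta$ is positive definite for every $\theta \in \Theta$ whenever the sampling locations $s_1, \dots, s_d$ are distinct, which is implied by the distance condition already used in \cref{lemma:bvmGauss}. Continuity of the smallest-eigenvalue map together with compactness of $\Theta$ then yields a uniform lower bound $\lambda_{\min}(\Sigma_\theta) \geq c > 0$, so $\theta \mapsto \Sigma_\theta^{-1}$ and $\theta \mapsto \abs{\Sigma_\theta}$ are continuous (and bounded), and joint continuity of $p_{\theta,m}(z)$ in $(\theta, z)$ follows. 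I expect this uniform positive-definiteness step to be the only nontrivial obstacle; the rest of the verification is routine continuity and Gaussian tail-bound bookkeeping.
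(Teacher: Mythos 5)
Your proposal is correct and follows essentially the same route as the paper: uniform tightness via the uniform variance bound $1+\tau_{\max}^2$ on the marginals together with a Gaussian tail estimate and a union bound, and continuity of the Bayes estimator by reducing, via \cref{lemma:bayesContinuousThree}, to joint continuity of the Gaussian density, which in turn follows from continuity of $\theta \mapsto \Sigma_\theta$ and its positive definiteness. Your uniform lower bound on $\lambda_{\min}(\Sigma_\theta)$ is slightly more than needed---pointwise invertibility plus continuity of $\Sigma_\theta$ already gives continuity of $\Sigma_\theta^{-1}$ and $\det{\Sigma_\theta}$, which is all that \cref{lemma:bayesContinuousThree} requires---but it is a harmless and arguably cleaner way to make the step explicit.
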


\begin{lemma}
    \provenlabel{lemma:approxEV}
    When restricted to a compact parameter space $\Theta$,
    the logistic model from \cref{eq:logistic}
    and the \BrownResnick{} model from \cref{eq:brownResnick}
    satisfy \cref{asm:bayesContinuous,asm:modelTight}.
\end{lemma}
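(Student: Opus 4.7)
The plan is to verify \cref{asm:modelTight} and \cref{asm:bayesContinuous} for the logistic and Brown--Resnick models in parallel, since the arguments rely on common structural features of max-stable distributions with unit Fréchet margins. For continuity of the Bayes estimator, I will invoke \cref{lemma:bayesContinuousThree}, which reduces the task to showing joint continuity of the density $p_\theta(z)$ in $(\theta,z)$; this transfers to the $m$-sample density via $p_{\theta,m}(z) = \prod_{i=1}^m p_\theta(z^i)$, and hence it is enough to work with the single-sample density in each case.

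Uniform tightness is the easier half and is essentially parameter-free. Both models have unit Fréchet marginals, so $\P_\theta(Z^i_j > M) = 1-\exp(-1/M)$ is independent of $\theta$. A union bound over the $md$ coordinates of the $m$-sample then yields
\[
\sup_{\theta \in \Theta} \P_\theta\bigl(\norm[\infty]{Z} > M\bigr) \;\leq\; md\bigl(1-\exp(-1/M)\bigr),
\]
which tends to zero as $M \to \infty$, establishing \cref{asm:modelTight} uniformly in~$\theta$ with no need to use the compactness of $\Theta$ at all.

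For joint continuity of $p_\theta(z)$ in the logistic case, I would start from the CDF $F_\theta(z) = \exp(-V_\theta(z))$ with $V_\theta(z) = (\sum_{j=1}^d z_j^{-1/\theta})^\theta$, which is smooth on $(0,1) \times (0,\infty)^d$. Differentiating $d$ times with the Faà di Bruno formula expresses $p_\theta(z)$ as a polynomial in $F_\theta$ and the partial derivatives of $V_\theta$, all of which are jointly continuous in $(\theta,z)$. For the Brown--Resnick model, I would use the representation of the Hüsler--Reiss density as a sum over partitions of $\{1,\dots,d\}$ as in \cite{wad2015}, where each term is a product of multivariate normal densities and Gaussian tail probabilities whose covariance matrices are built from the variogram $\gamma_{c,\alpha}(h) = ch^\alpha$. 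Since both the variogram and the (linear) map from variograms to the relevant covariance matrices are continuous in $(c,\alpha)$, and compactness of $\Theta$ together with distinct spatial locations ensures these matrices stay uniformly bounded and positive definite, each partition-indexed term is jointly continuous in $(\theta,z)$, and the finite sum inherits continuity.

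The main obstacle is the combinatorial structure of the Hüsler--Reiss density: one must track that every partition-indexed term remains well-defined across the compact parameter space without any conditional covariance matrix degenerating, and that the expressions involving inverses and determinants depend continuously on $(c,\alpha)$. Once this non-degeneracy is verified via compactness of $\Theta$ and distinctness of the spatial locations, the remaining continuity claims are routine consequences of the smoothness of the Gaussian density and CDF in their parameters, and no delicate tail or integrability arguments are required beyond the Fréchet marginal structure already exploited for tightness.
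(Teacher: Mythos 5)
Your proof follows essentially the same route as the paper: the tightness argument via the parameter-free unit \Frechet{} margins and a union bound is identical, and your continuity argument rests on the same partition-sum density representation that the paper invokes by citing equation~(1) and Condition~A2 of \cite{dombry2017asymptotic}, where those conditions are verified for the logistic and \BrownResnick{} models. The only difference is that you sketch the joint-continuity and non-degeneracy checks directly (Faà di Bruno for the logistic CDF, positive definiteness of the variogram-induced covariance matrices for Hüsler--Reiss) rather than delegating them to that reference; both versions then conclude via \cref{lemma:bayesContinuousThree}.
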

More generally,
any family of multivariate distributions with standardized univariate margins
is uniformly tight,
as the parameter only affects the multivariate interactions,
but not the overall magnitude of the observations.
For Gaussian random fields,
it is sufficient to assume that the covariance function is bounded
over $\Theta$, to guarantee tightness of the model.

\section{Generalization error}

\label{sec:generalizationError}

In this section we consider the generalization error and its decomposition in
\cref{eq:errdec,eq:genErrdec}.
As discussed in \cref{sec:errdec},
this error term is mainly driven by the term
$R(\phi_m^N) - R_N(\phi_m^N)$,
which captures the difference between the true risk $R$
and the empirical risk $R_N$,
with respect to which the neural network $\phi_m^N$ is trained.
The remaining terms in \cref{eq:genErrdec}
are bounded with conventional techniques in the proof of \cref{cor:approximationRisk}.

In order to bound this error term,
we use the concept of robust algorithms,
introduced by \cite{xu2012robustness}.
Throughout, they assume that the loss function is upper bounded by some $E<\infty$.
We also consider this assumption to be satisfied,
since \cref{asm:parametersCompact} implies $\tnorm[\infty]{\theta} < B$,
and the output $\hat \theta$ of a neural network can be restricted to be bounded by $B$,
as described in the proof of \cref{prop:approximation},
yielding a maximal quadratic loss of
\[\norm[2]{\theta - \hat\theta}^2 \leq 4pB^2:=E.\]
Applying results based on robustness directly to neural networks
requires the input and output space of the network to be compact.
In our setting, this is satisfied for the output space,
since \cref{asm:parametersCompact} implies that $\Theta$ is bounded,
but is not a sound assumption for the input space,
which consists of samples from the support of the analyzed distribution.
We therefore employ the concept of pseudo-robustness,
also introduced by \cite{xu2012robustness},
which can be used together with the \cref{asm:modelTight} of a uniformly tight model
to bound the error incurred by input values coming from outside a certain compact set.

Considering two sets $\Xcal$ and $\Ycal$,
an algorithm $A$ is a mapping from a training sample $s = (x, y) \in \lr{\Xcal \times \Ycal}^N$
to a function $A_s \in \mathcal{F}$,
where $\mathcal{F}$ is a subset of functions $f: \Xcal \to \Ycal$.

\begin{example}
    In the general setting of neural estimators we have that $\Xcal = \Zcal^m$ is the space of $m$-dimensional samples and $\Ycal = \Theta \subset \mathbb R^p$ is the Euclidean space containing the parameter space. The set of functions $\Fcal$ is the set of neural networks considered in the training; one example of such as function class is the set of single-layer, fully-connected, feedforward neural networks in Example~\ref{example:single_layer_NN}. For a set of training points $s = ((z_1,\theta_1), \dots, (z_N,\theta_N) ) \in (\Xcal \times \Ycal)^N$, the algorithm $A$ then is the empirical risk minimizer $s\mapsto \phi^N_m$ as defined in~\cref{eq:intro_emp_risk_minimizer}.
\end{example}

The following definition is a simplified version of Definition~5 in \cite{xu2012robustness}.
\begin{definition}[Pseudo-Robustness]
    Let $K \in \NN$, $\varepsilon \geq 0$,
    and
    $\hat N: \lr{\Xcal \times \Ycal}^N \to \set{1, \dots, N}$.
    An algorithm $A: s \mapsto A_s$
    is called ($K$, $\varepsilon$, $\hat N$)-pseudo-robust
    if $\Xcal \times \Ycal$ can be partitioned into $K$ disjoint sets $\set{C_i}_{i=1, \dots, K}$,
    such that
    for all $s \in \lr{\Xcal \times \Ycal}^N$,
    there exists a subset of training samples,
    indexed by $\hat I \subseteq \set{1, \dots, N}$
    with $\tabs{\hat I} = \hat N(s)$,
    that satisfies the following:
    For all
    $i \in \set{1, \dots, K}$,
    $j \in \hat I$,
    satisfying
    $s_j = (x_j,y_j) \in C_i$,
    and any
    $(x',y') \in C_i$,
    it holds that
    \begin{align*}
        \abs{
            \ell(A_s(x_j), y_j)
            -
            \ell(A_s(x'), y')
        }
        \leq
        \varepsilon
        .
    \end{align*}
\end{definition}

Generally speaking, the value of $\hat N$ needs to be close to $N$
in order to obtain relevant error bounds from the results below.
If $\hat N \equiv N$,
the algorithm is called $(K, \varepsilon)$-robust;
see Definition~2 in the same paper.
Note that (pseudo-)robustness itself is a deterministic property.
However,
if the input to the algorithm is a random variable $S$
consisting of $N$ i.i.d.~samples from the joint distribution
of random variables $X$ and $Y$,
each taking values in $\Xcal \times \Ycal$,
Theorem~4 in \cite{xu2012robustness}
shows that
the generalization error
of a ($K$, $\varepsilon$, $\hat N$)-pseudo-robust algorithm
can be bounded with probability $1 - \delta$ for any $\delta\in(0,1)$ by
\begin{align}
    \label{eq:robustnessBound}
    \abs{
        R(A_S)
        -
        R_N(A_S)
    }
    \leq
    \frac{\hat N(S)}{N}
    \varepsilon %
    +
    E \lr{
        \frac{N - \hat N(S)}{N}
        +
        \sqrt{\frac{2K\log(2)+2\log(\frac{1}{\delta})}{N}}
    }
    .
\end{align}

The error bound in this result still depends on the realization of $S$
through the number $\hat N(S)$.
In the general case, $\hat N$ can be an arbitrary function,
however, in our setting
it will be determined by the number of samples $(X,Y)$ that fall into a certain set $\Xcal' \times \Ycal'$.
Approximating the ratio of ``outliers'', $(N - \hat N(S)) / N$, by its expectation
and bounding the resulting error term then yields the following result.
\begin{lemma}
    \provenlabel{lemma:pseudoRobustnessBound}
    Let $A: s \mapsto A_s$ be a $(K, \varepsilon, \hat N)$-pseudo-robust algorithm,
    where $\hat I(s) = \setm{i}{(x_i, y_i) \in \Xcal' \times \Ycal'}$
    for some sets $\Xcal' \subseteq \Xcal$ and $\Ycal' \subseteq \Ycal$,
    and $\hat N(S) = \tabs{\hat I(s)}$.
    Then, for any $\delta\in(0,1)$ the generalization error
    can be bounded with probability at least $1 - \delta$ by
    \begin{align*}
        \abs{
            R(A_S)
            -
            R_N(A_S)
        }
        \leq
        \varepsilon
        +
        E  \lr{
            \P\left((X,Y) \notin \Xcal' \times \Ycal'\right)
            +
            \sqrt{\frac{\log(\tfrac{2}{\delta})}{2N}}
            +
            \sqrt{\frac{2K\log(2)+2\log(\frac{2}{\delta})}{N}}
        }
        .
    \end{align*}
\end{lemma}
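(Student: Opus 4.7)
The plan is to start from the high-probability bound in \cref{eq:robustnessBound}, which is the pseudo-robustness guarantee of \cite{xu2012robustness}, and then control the random quantity $\hat N(S)/N$ using concentration, since under the lemma's hypothesis $N - \hat N(S) = \sum_{i=1}^N \indic{(X_i, Y_i) \notin \Xcal' \times \Ycal'}$ is a sum of i.i.d.\ Bernoulli random variables.

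First, note that \cref{eq:robustnessBound} applied with confidence parameter $\delta/2$ gives, with probability at least $1 - \delta/2$,
\begin{align*}
    \abs{R(A_S) - R_N(A_S)}
    \leq
    \frac{\hat N(S)}{N}\varepsilon
    + E\lr{\frac{N-\hat N(S)}{N} + \sqrt{\frac{2K\log(2) + 2\log(2/\delta)}{N}}}.
\end{align*}
Since $\hat N(S) \leq N$, the first term is at most $\varepsilon$, which is already the form we want. The remaining task is to replace the random fraction $(N-\hat N(S))/N$ by a deterministic quantity, namely $\P((X,Y) \notin \Xcal' \times \Ycal')$.

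For this second step, let $p := \P((X,Y) \notin \Xcal'\times\Ycal')$, and observe that by construction $N - \hat N(S) = \sum_{i=1}^N \indic{(X_i,Y_i) \notin \Xcal'\times\Ycal'}$, a sum of $N$ i.i.d.\ Bernoulli($p$) variables. Hoeffding's inequality then yields
\begin{align*}
    \P\lr{\frac{N-\hat N(S)}{N} > p + \sqrt{\frac{\log(2/\delta)}{2N}}}
    \leq \frac{\delta}{2}.
\end{align*}
Combining the two events via a union bound, with probability at least $1-\delta$ both inequalities hold simultaneously, and plugging the Hoeffding estimate into the pseudo-robustness bound gives exactly the claimed inequality.

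There is no real obstacle here; the argument is essentially bookkeeping around two standard tools (the Xu--Mannor bound and Hoeffding's inequality). The only mildly delicate point is to split the total failure probability $\delta$ into two halves so that the union bound yields the stated probability $1-\delta$, and to notice that $\hat N(S)/N \leq 1$ so that the coefficient in front of $\varepsilon$ can be replaced by $1$ without any probabilistic control; this is the sole source of the $\varepsilon$ (rather than something like $\varepsilon (1-p)$) in the final bound.
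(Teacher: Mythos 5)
Your proposal is correct and follows essentially the same route as the paper's own proof: apply the Xu--Mannor pseudo-robustness bound with confidence $\delta/2$, bound $\hat N(S)/N \leq 1$ to isolate $\varepsilon$, control the outlier fraction $(N - \hat N(S))/N$ by a one-sided Hoeffding bound with the remaining $\delta/2$, and combine via a union bound. No differences worth noting.
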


Lastly,
to apply the concept of robustness to neural estimators,
we need to establish that algorithms yielding neural networks are pseudo-robust.
To this end, the following result is a modification of Example~7 in \cite{xu2012robustness},
considering the quadratic loss function and the ReLU activation function.
We consider the set of restricted (fully connected feedforward) neural networks $f_\phi$
with $L$ layers,
that map from $\Xcal \subseteq \Rd[md]$ to a set $\Ycal \subseteq \Rd[p]$,
bounded by $B < \infty$.
These networks can be parametrized by 
\begin{align}
    \label{eq:restrictedParameters}
    \Phi^{\alpha, L}
    &=
    \setm{
        \phi = (w, \beta)
    }{
        N_l \leq \alpha,
        \sum_{i=1}^{N_l}
        \abs{w_{i,j}^l}
        \leq \alpha
        , \forall
        l = 1, \dots, L
        \;\land\;
        \abs{f_\phi(x)} \leq B
        , \forall x \in \Xcal
    }
    ,
\end{align}
for some $\alpha \geq 1$,
where
$N_l$ the size of the $l$th layer for $l=1,\dots, L$,
and $w^{(l)} \in \R^{N_l \times N_{l+1}}$
the corresponding weight matrix.
The first inequality is imposed to ensure existence of the
minimizer in \cref{eq:restrictedMinimizer} below.

\begin{lemma} %
    \provenlabel{lemma:NNRobust}
    Let $\Ycal$ be bounded by $B < \infty$,
    let $\Xcal'$ be a bounded subset of $\Xcal$,
    and fix $\alpha > 0$.
    Then, an algorithm taking values in $\mathcal F = \setm{f_\phi: \Xcal \rightarrow \Ycal}{\phi \in \Phi^{\alpha, L}}$
    is $(K(\gamma), c\gamma, \hat N)$-pseudo-robust for all $\gamma > 0$,
    where
    with %
    \begin{align*}
        K(\gamma)
        &=
        \mathcal{N}(\gamma/2, \Xcal' \times \Ycal, \norm[\infty]{\cdot})
        \\
        c
        &=
        4Bp(\alpha^L + 1)
        \\
        \hat N(S)
        &=
        \#\setm{i}{(x_i, y_i) \in \Xcal'}
        ,
    \end{align*}
    where $\mathcal{N}(r, \Scal, \norm{\cdot})$ denotes the covering number
    of a set $\Scal$ with respect to the norm $\norm{\cdot}$,
    using balls of radius $r$.
\end{lemma}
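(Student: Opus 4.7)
My plan is to combine a Lipschitz estimate on every $f_\phi \in \Fcal$ with a $(\gamma/2)$-covering of $\Xcal' \times \Ycal$, and then convert the resulting pointwise closeness of network outputs into a bound on differences of the quadratic loss.

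First I would show that the row-sum constraint $\sum_j \tabs{w^{(l)}_{i,j}} \leq \alpha$ makes each layer $h \mapsto \sigma(w^{(l)} h + \beta^{(l)})$ an $\alpha$-Lipschitz map in $\norm[\infty]{\cdot}$, since the affine part has $\ell_\infty$-operator norm at most $\alpha$ and ReLU is $1$-Lipschitz coordinate-wise. Composing over $L$ layers yields $\norm[\infty]{f_\phi(x) - f_\phi(x')} \leq \alpha^L \norm[\infty]{x - x'}$. I would then build a partition of $\Xcal \times \Ycal$ from a minimal $(\gamma/2)$-cover of $\Xcal' \times \Ycal$ of size $K(\gamma) = \covering(\gamma/2, \Xcal' \times \Ycal, \norm[\infty]{\cdot})$: each point of $\Xcal' \times \Ycal$ is assigned to a unique covering ball, and the remainder $(\Xcal \setminus \Xcal') \times \Ycal$ is absorbed into the partition, so that every cell containing a ``good'' sample (those indexed by $\hat I(s) = \setm{i}{x_i \in \Xcal'}$) has $\norm[\infty]{\cdot}$-diameter at most $\gamma$ on its intersection with $\Xcal' \times \Ycal$.

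For the core estimate, fix $j \in \hat I$ so that $s_j = (x_j, y_j) \in \Xcal' \times \Ycal$ sits in some cell $C_i$, and take $(x', y')$ in the same cell. Writing $a = f_\phi(x_j) - y_j$ and $b = f_\phi(x') - y'$, the identity $\norm[2]{a}^2 - \norm[2]{b}^2 = \bilin{a - b, a + b}$ combined with Cauchy--Schwarz gives
\begin{align*}
    \abs{\ell(f_\phi(x_j), y_j) - \ell(f_\phi(x'), y')}
    \leq
    \norm[2]{a - b}\, \norm[2]{a + b}.
\end{align*}
The boundedness hypotheses $\norm[\infty]{f_\phi} \leq B$ and $\norm[\infty]{y} \leq B$ imply $\norm[2]{a + b} \leq 4B\sqrt{p}$. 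The Lipschitz bound together with $\norm[\infty]{x_j - x'} \leq \gamma$ and $\norm[\infty]{y_j - y'} \leq \gamma$ yields $\norm[\infty]{a - b} \leq (\alpha^L + 1)\gamma$, hence $\norm[2]{a - b} \leq \sqrt{p}(\alpha^L + 1)\gamma$. Multiplying gives the claimed constant $c = 4Bp(\alpha^L + 1)$.

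The main obstacle is the partition bookkeeping on the complement of $\Xcal' \times \Ycal$: the pseudo-robustness definition demands the loss bound for any $(x', y')$ in the cell $C_i$ containing $s_j$, whereas the diameter estimate only holds on $\Xcal' \times \Ycal$. Since $\hat I$ excludes samples with $x_j \notin \Xcal'$, no training point indexes the outside portion of any cell, and one either restricts the range of $(x', y')$ in the proof to the covering portion of $C_i$ or allocates a single extra cell to the complement, absorbing the harmless $+1$ into $K(\gamma)$ for the subsequent generalization bound. Everything else is routine norm-comparison algebra.
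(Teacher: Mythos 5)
Your proposal is correct and follows essentially the same route as the paper's proof: a Lipschitz bound of $\alpha^L$ on the restricted network class (which the paper imports from Lemma~4 of \cite{xu2012robustness} rather than re-deriving), a $(\gamma/2)$-covering of $\Xcal' \times \Ycal$ to build the partition, and a difference-of-squares factorization of the quadratic loss bounded by $4B$ times the output perturbation. The only cosmetic difference is that you apply the identity $\tnorm[2]{a}^2 - \tnorm[2]{b}^2 = \tbilin{a-b,\, a+b}$ with Cauchy--Schwarz in $\R^p$, whereas the paper argues coordinate-wise and bounds the sum by $p$ times the maximum; both yield the identical constant $c = 4pB(\alpha^L+1)$, and your explicit treatment of the cells outside $\Xcal'\times\Ycal$ is, if anything, more careful than the paper's.
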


The above results allows us to bound the generalization error of neural estimators
under some restrictions.
In our setting, $\Ycal$ is already bounded
by the assumption of a bounded parameter space $\Theta$
(\cref{asm:parametersCompact}) and by choosing $\Xcal' = [-M,M]^{md}$
the probability 
\begin{align}\label{M_choice}
    \P(X \notin \Xcal') = \P(\|Z\|_\infty > M)    
\end{align}
can be controlled using model tightness from \cref{asm:modelTight}.

In order to satisfy the restriction from \cref{eq:restrictedParameters},
the absolute sum of the weights in each layer needs to be bounded by $\alpha$,
which is not guaranteed by the empirical risk minimizer $\phi_m^N$.
However, we can consider the restricted optimizer
$\phi_m^{N,\alpha}$ defined as
\begin{align}
    \label{eq:restrictedMinimizer}
    \phi_m^{N,\alpha}
    &=
    \argmin_{\phi \in \Phi^{\alpha, L}}
    R_N(\phi)
    ,
\end{align}
and choose $\alpha = \alpha(N)$ to increase in $N$ at a certain rate.
Here, the architecture,
and in particular the number of input samples $m$ and layers $L$,
are fixed according to the architecture of the ``optimal'' neural network $\phi^*_m$
from \cref{prop:approximation}.
For large enough $N$,
the weights $\phi_m^*$
are admissible for the restricted empirical risk minimizer,
and robustness results can be used to show that $R(\phi_m^{N,\alpha(N)})$
is close to $R(\phi_m^*)$.
\mysymbol{$\phi_m^{N,\alpha}$}{Restricted empirical risk minimizer}

Putting the above considerations together yields the following result.

\begin{corollary} %
    \provenlabel{cor:approximationRisk}
    Suppose \cref{asm:parametersCompact,asm:modelTight} hold.
    Consider a fixed network architecture with $L$ layers,
    let $\phi_m^*$ be its optimal weights with respect to risk $R$,
    and let $\phi_m^{N,\alpha(N)}$ be the restricted empirical risk minimizer
    defined in~\eqref{eq:restrictedMinimizer}.
    Assume that the output of the network $\phi_m^*$ is bounded by the same $B$ as in
    \cref{eq:restrictedParameters} and \cref{asm:parametersCompact}.
    Then there exists $N_1 \in \NN$ and a functional $\zeta$ such that 
    for every $\delta \in (0,1)$ we have $\zeta(\delta,N) \to 0$ %
    as $N \to \infty$, and for any $N \geq N_1$
    it holds with probability at least $1-\delta$
    \begin{align*}
        R(\phi_m^{N,\alpha(N)}) - R(\phi_m^*) < \zeta(\delta,N) %
        . 
    \end{align*}
    The rate of $\alpha(N)$ and the functional $\zeta$ are given explicitly in the proof.
\end{corollary}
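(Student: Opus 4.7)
The plan is to bound the difference $R(\phi_m^{N,\alpha(N)}) - R(\phi_m^*)$ using the decomposition of \cref{eq:genErrdec} applied to the restricted minimizer,
\begin{align*}
    R(\phi_m^{N,\alpha(N)}) - R(\phi_m^*)
    = \underbrace{R(\phi_m^{N,\alpha(N)}) - R_N(\phi_m^{N,\alpha(N)})}_{T_1}
    + \underbrace{R_N(\phi_m^{N,\alpha(N)}) - R_N(\phi_m^*)}_{T_2}
    + \underbrace{R_N(\phi_m^*) - R(\phi_m^*)}_{T_3},
\end{align*}
and to control each term separately.

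The term $T_2$ is handled first: since $\phi_m^*$ is a fixed parametrization whose output is bounded by $B$ by assumption, its layer widths and the $\ell^1$-sums $\sum_i |w_{i,j}^{(l)}|$ of its weights are fixed finite numbers. Choosing $\alpha(N) \to \infty$ therefore guarantees the existence of $N_1$ with $\phi_m^* \in \Phi^{\alpha(N), L}$ for all $N \geq N_1$; for such $N$ the optimality of $\phi_m^{N,\alpha(N)}$ inside $\Phi^{\alpha(N), L}$ gives $T_2 \leq 0$ deterministically. For $T_3$, Hoeffding's inequality applied to the i.i.d.\ summands $\ell(f_{\phi_m^*}(Z_i), \theta_i) \in [0, E]$ with $E = 4pB^2$ yields $|T_3| \leq E\sqrt{\log(4/\delta)/(2N)}$ with probability at least $1 - \delta/2$.

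The core of the proof is bounding $T_1$. I would invoke \cref{lemma:NNRobust} with $\Xcal' = [-M(N), M(N)]^{md}$ and $\Ycal = [-B, B]^p$ to obtain $(K(\gamma), c\gamma, \hat N)$-pseudo-robustness of the algorithm $s \mapsto \phi_m^{N,\alpha(N)}$, with $c = 4Bp(\alpha(N)^L + 1)$ and $\hat N(S) = \#\{i : Z_i \in \Xcal'\}$. Feeding this into \cref{lemma:pseudoRobustnessBound}, using that $\P((Z,\theta) \notin \Xcal' \times \Theta) = \P(\|Z\|_\infty > M(N))$ is uniformly controlled by \cref{asm:modelTight}, and estimating the covering number by $K(\gamma) \leq (C \max(M(N), B) / \gamma)^{md + p}$ for a constant $C$, yields with probability at least $1 - \delta/2$,
\begin{align*}
    |T_1|
    \leq 4Bp(\alpha(N)^L + 1)\gamma
    + E\Bigl(
        \P(\|Z\|_\infty > M(N))
        + \sqrt{\tfrac{\log(4/\delta)}{2N}}
        + \sqrt{\tfrac{2K(\gamma)\log 2 + 2\log(4/\delta)}{N}}
    \Bigr).
\end{align*}
A union bound combines the two high-probability events into one of probability at least $1-\delta$, and the functional $\zeta(\delta, N)$ is defined as the sum of the bounds on $|T_1|$ and $|T_3|$.

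The main obstacle is balancing the three free parameters $\alpha(N), \gamma(N), M(N)$ so that every summand above vanishes as $N\to\infty$: $\alpha(N) \to \infty$ is needed for $T_2 \leq 0$; $\alpha(N)^L \gamma(N) \to 0$ is needed to kill the pseudo-robustness $\varepsilon$-term; $M(N) \to \infty$ must be slow enough that $\P(\|Z\|_\infty > M(N)) \to 0$ uniformly over $\theta$ by \cref{asm:modelTight}; and the covering number $K(\gamma(N))$ must remain $o(N)$, even though it grows polynomially in $M(N)/\gamma(N)$. A viable schedule, for instance, is $\alpha(N) = \log N$, $\gamma(N) = (\log N)^{-(L+2)}$, and $M(N) = \log N$, which keeps $K(\gamma(N))$ poly-logarithmic in $N$ and lets all four conditions hold simultaneously. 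Verifying this balance and writing $\zeta$ out explicitly is the technical heart of the proof.
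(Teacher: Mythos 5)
Your proposal is correct and follows essentially the same route as the paper: the same three-term decomposition, $T_2\leq 0$ once $\alpha(N)\to\infty$ makes $\phi_m^*$ admissible, Hoeffding for $T_3$, and \cref{lemma:NNRobust} fed into \cref{lemma:pseudoRobustnessBound} with $\Xcal'=[-M(N),M(N)]^{md}$ for $T_1$. The only substantive difference is your schedule $\alpha(N)=M(N)=\log N$, $\gamma(N)=(\log N)^{-(L+2)}$, which does make every term vanish, whereas the paper uses polynomial rates $M(N)=N^{\frac{1-2\xi}{D+p}-\xi-\kappa}$, $\gamma(N)=N^{-\xi-\kappa}$, $\alpha(N)=N^{\kappa/L}$; the polynomial choice is what later permits the explicit growth conditions on $N(m)$ in the rate discussion (e.g., for Fr\'echet margins your $M(N)=\log N$ would give only a $1/\log N$ decay of the tail term, which suffices for consistency but yields much weaker rates).
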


Note that the approach from \cref{cor:approximationRisk}
cannot be used to bound $R(\phi_m^N)$,
the risk of the unrestricted empirical risk minimizer,
since its weights change with $N$,
and, in particular, might diverge at a faster rate
than required to obtain the bound in the corollary.
Restriction of the empirical minimizer by penalizing the size of the weights as in~\cref{eq:restrictedParameters} 
is a well-known regularization technique in machine learning
which has been shown to work well in practice \citep{goodfellowEtAl2016}.
It is therefore natural to consider the risk of the estimator $\phi_m^{N,\alpha(N)}$ rather than the empirical
risk minimizer $\phi_m^{N}$.

\mysymbol{$\Xcal = \Zcal^m \subseteq \R^{md}$}{Input space of neural network}
\mysymbol{$\Ycal \approx \Theta$}{Output space of neural network}
\mysymbol{$\mathcal{F}$}{Subset of functions $f: \Xcal \to \Ycal$}
\mysymbol{$s = (x, y) \in \lr{\Xcal \times \Ycal}$}{Training sample (realization)}
\mysymbol{$S = (X, Y)$}{Training sample (random variable)}
\mysymbol{$A_s \in \mathcal{F}$}{Function output by algorithm $A$ for sample $s$}
\mysymbol{$E$}{Upper bound on loss function}
\mysymbol{$K$}{Number of disjoint sets in robustness definition}
\mysymbol{$\varepsilon(s)$}{Robustness parameter}
\mysymbol{$\delta$}{Probability of error bound}

\begin{remark}
    \cref{lemma:approxGauss,lemma:approxEV} show that \cref{asm:modelTight}
    is satisfied for the examples from \cref{sec:exampleModels},
    no further assumptions on the distribution of the data need to be checked.
\end{remark}

\section{Statistical guarantees for neural estimators}
\label{sec:combi}

\subsection{Consistency results}

The previous sections we have analyzed the different 
components of the risk of a neural estimator.
This involved both statistical aspects of the considered 
model class and properties of the neural networks 
used to construct the estimator. 
We can now combine these results to obtain 
asymptotic guarantees for neural estimators. 
In particular, putting together \cref{cor:bayesRiskToZero},
\cref{prop:approximation},
and \cref{cor:approximationRisk},
and choosing the training sample size $N(m)$, the 
size of the neural network $\alpha(m)$
and the probability $\delta(m)$ appropriately,
we get that the risk of the neural estimator converges to zero as $m\to\infty$.
As above,
we consider the quadratic loss function and use the ReLU activation function in the neural network.

\begin{theorem} %
    \provenlabel{thm:approxconsistency}
    Suppose that
    the conditions from \cref{thm:bvm} as well as
    \cref{asm:parametersCompact,asm:bayesContinuous,asm:modelTight} hold.
    For input sample size $m$,
    consider the network architecture implied by $\phi_m^*$ from \cref{prop:approximation}
    for $\varepsilon_m = R(f_m^*)$.
    For replicate count~$m$,
    training set size~$N$,
    and~$\alpha>0$,
    let $\phi_m^{N,\alpha}$
    denote the restricted empirical risk minimizer from \cref{eq:restrictedMinimizer}.
    Then, there exist $\alpha(m)$ and $N(m) \to \infty$
    such that
    \begin{align}
        \label{eq:approxconsistency}
        R(\phi_m^{N(m),\alpha(m)})
        \pto
        0
        ,\quad m \to \infty
        .
    \end{align}
\end{theorem}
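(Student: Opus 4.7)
The plan is to apply the telescoping decomposition from \cref{eq:errdec} to our restricted empirical risk minimizer,
\begin{align*}
    R(\phi_m^{N,\alpha})
    =
    \underbrace{R(f_m^*)}_{\text{Bayes Risk}}
    + \underbrace{R(\phi_m^*) - R(f_m^*)}_{\text{Approximation Error}}
    + \underbrace{R(\phi_m^{N,\alpha}) - R(\phi_m^*)}_{\text{Generalization Error}}
    ,
\end{align*}
and to control each term in turn using the three main results already proved in the paper. The Bayes risk term is purely deterministic and handled by \cref{cor:bayesRiskToZero}; the approximation error term is deterministic once the architecture is fixed and handled by \cref{prop:approximation}; the generalization error is random and handled with high probability by \cref{cor:approximationRisk}.

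First, by \cref{cor:bayesRiskToZero}, $R(f_m^*) \to 0$ as $m \to \infty$. Setting $\varepsilon_m := R(f_m^*)$ in \cref{prop:approximation}, I obtain for each $m$ a two-hidden-layer ReLU network with bounded output, parametrized by $\phi_m^*$, satisfying $R(\phi_m^*) - R(f_m^*) < \varepsilon_m = R(f_m^*)$. The sum of the first two terms is therefore bounded by $2R(f_m^*)$, which tends to zero deterministically. For the third term, I apply \cref{cor:approximationRisk} to this fixed (in $m$) architecture: it yields a threshold $N_1(m)$, a weight-budget rate $\alpha_m(N)$ depending on the architecture, and a functional $\zeta_m(\delta, N)$ with $\zeta_m(\delta, N) \to 0$ as $N \to \infty$ for every fixed $\delta \in (0,1)$, such that with probability at least $1-\delta$,
\begin{align*}
    R(\phi_m^{N,\alpha_m(N)}) - R(\phi_m^*) < \zeta_m(\delta, N), \qquad N \geq N_1(m).
\end{align*}
The hypothesis on $\phi_m^*$ in that corollary (bounded output by $B$) is exactly what \cref{prop:approximation} delivers, and \cref{asm:parametersCompact,asm:modelTight} hold by assumption.

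To couple these bounds across $m$, I would pick $\delta_m := 1/m$ and choose $N(m) \geq \max\{N_1(m), m\}$ large enough that $\zeta_m(\delta_m, N(m)) < 1/m$; such a choice exists because $\zeta_m(\delta_m, \cdot) \to 0$ for each fixed $m$. Then setting $\alpha(m) := \alpha_m(N(m))$ and combining the three bounds yields, with probability at least $1 - 1/m$,
\begin{align*}
    R(\phi_m^{N(m),\alpha(m)})
    \;\leq\;
    2 R(f_m^*) + \frac{1}{m}
    \;\longrightarrow\; 0
    .
\end{align*}
The convergence in probability stated in \cref{eq:approxconsistency} then follows: for any $\eta > 0$, eventually $2R(f_m^*) + 1/m < \eta$, so $\P(R(\phi_m^{N(m),\alpha(m)}) > \eta) \leq 1/m \to 0$.

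The main obstacle is the $m$-dependence of the generalization bound, which enters through the architecture produced by \cref{prop:approximation}: as $\varepsilon_m \to 0$ the required network width, the covering number $K(\gamma)$ and the Lipschitz-type constant $c$ in \cref{lemma:NNRobust} all grow, and the weight bound $\alpha_m(N)$ must grow with $N$ at a specific rate. What needs checking is that, for each fixed $m$, one can still pick $N(m)$ large enough to beat these $m$-dependent constants while also requiring $N(m) \to \infty$. This is true because, with $m$ fixed, $\zeta_m(\delta, N)$ is the explicit expression from the proof of \cref{cor:approximationRisk} and tends to zero in $N$; the diagonal selection above then suffices without requiring any quantitative control on how fast $N(m)$ must diverge.
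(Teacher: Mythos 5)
Your proposal is correct and follows essentially the same route as the paper's own proof: the decomposition \cref{eq:errdec}, \cref{cor:bayesRiskToZero} for the Bayes risk, \cref{prop:approximation} with $\varepsilon_m = R(f_m^*)$ for the approximation error, and \cref{cor:approximationRisk} with a diagonal choice of $N(m)$ for the generalization error. The only (immaterial) difference is your choice of failure probability $\delta_m = 1/m$ and tolerance $1/m$, where the paper uses $\delta_m = 3\varepsilon_m$ and tolerance $\varepsilon_m$; both yield the claimed convergence in probability.
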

The probability in the above results is over the distribution of the training data.
Using the boundedness of the parameter space $\Theta$,
and hence of the loss function,
we also obtain the following convergences.
\begin{corollary}
    \provenlabel{cor:approxconsistency}
    Consider the assumptions and notation of \cref{thm:approxconsistency}.
    Write $\phi_m$ short for $\phi^{N(m),\alpha(m)}_m$,
    and let $S_m$ denote the training sample
    used to train $\phi_m$.
    Then we have the following convergences
    \begin{align}\label{L2_p_conv}
        f_{\phi_m}(Z)
        \Ltwoto
        \theta
        ,\qquad
        f_{\phi_m}(Z)
        \pto
        \theta
        ,
    \end{align}
    where the left-hand sides are random variables with respect to the joint
    distribution of $S_m, \theta, Z$,
    and the $\theta$ on the right-hand sides is a random variable itself.

    With respect to the pointwise risk $R_\theta$,
    we have
    \begin{align*}
        R_{\theta}\lr{
            \phi_m
        }
        \Loneto
        0
        ,\qquad
        R_{\theta}\lr{
            \phi_m
        }
        \pto
        0
        .
    \end{align*}
    The two expressions are random variables with respect to the distribution
    of $S_m, \theta$.
\end{corollary}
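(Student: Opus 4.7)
The plan is to bootstrap everything from Theorem~\ref{thm:approxconsistency}, which states $R(\phi_m) \pto 0$ with respect to the distribution of the training sample $S_m$, combined with the fact that the pointwise loss is uniformly bounded. Indeed, by \cref{asm:parametersCompact} we have $\tnorm[\infty]{\theta} \leq B$, and the architecture used in \cref{thm:approxconsistency} (via \cref{prop:approximation}) has outputs bounded by $B$ as well. Hence
\[
    \norm[2]{f_{\phi_m}(Z) - \theta}^2 \leq 4pB^2 =: E,
\]
so in particular $R(\phi_m) \leq E$ almost surely.

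First I would upgrade the convergence in probability of $R(\phi_m)$ to an $L^1$ statement. Since $R(\phi_m)$ is bounded above by the constant $E$ and converges to zero in probability, the bounded convergence theorem yields $\E_{S_m}\left[R(\phi_m)\right] \to 0$, i.e., $R(\phi_m) \Loneto 0$. Next, by the tower property,
\[
    \E_{S_m, \theta, Z}\left[\norm[2]{f_{\phi_m}(Z) - \theta}^2\right]
    = \E_{S_m}\left[R(\phi_m)\right] \to 0,
\]
which is exactly the $L^2$ convergence $f_{\phi_m}(Z) \Ltwoto \theta$ claimed in \cref{L2_p_conv}. The corresponding convergence in probability $f_{\phi_m}(Z) \pto \theta$ then follows from the standard implication $L^2 \Rightarrow L^1 \Rightarrow$ in probability (applied to the scalar random variable $\norm[2]{f_{\phi_m}(Z) - \theta}$).

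For the second pair of statements, I would use that $R_\theta(\phi_m) \geq 0$ and $R(\phi_m) = \E_{\theta \sim \Pi}\left[R_\theta(\phi_m)\right]$ by definition. Therefore
\[
    \E_{S_m, \theta}\left[R_\theta(\phi_m)\right]
    = \E_{S_m}\left[R(\phi_m)\right] \to 0,
\]
giving $R_\theta(\phi_m) \Loneto 0$ with respect to the joint distribution of $(S_m, \theta)$. Convergence in probability then follows immediately from the $L^1$ convergence.

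I do not expect a serious obstacle here: all claims are essentially bookkeeping around Fubini, bounded/dominated convergence, and the standard hierarchy of modes of convergence. The only point that requires mild care is making the sources of randomness explicit at each step, in particular distinguishing the training randomness $S_m$ (which enters through $\phi_m$) from the test pair $(\theta, Z)$, and verifying that the uniform bound $E$ on the loss (which rests on \cref{asm:parametersCompact} and on the output-clipping used to construct $\phi_m^*$) is inherited by the restricted empirical risk minimizer $\phi_m^{N,\alpha}$ used throughout \cref{thm:approxconsistency}.
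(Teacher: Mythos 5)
Your proposal is correct and follows essentially the same route as the paper's proof: both rest on the uniform bound $E = 4pB^2$ on the loss, upgrade $R(\phi_m)\pto 0$ to $L^1$ convergence via boundedness, and then use the tower property to pass between the joint expectation, the integrated risk, and the pointwise risk, with convergence in probability following from the $L^1$/$L^2$ statements. The only cosmetic difference is that you invoke the bounded convergence theorem where the paper cites the equivalence of convergence in probability and in $r$-th mean for uniformly bounded variables; these are the same fact.
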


The convergences in~\cref{L2_p_conv} can be seen as consistency 
of the neural estimator in a Bayesian sense.
In general, when conditioning on a single fixed $\theta_0\in\Theta$,
the true parameter in the frequentist setting,
the convergence results need not hold.
It is an open question whether on can find suitable additional conditions under which such
classical pointwise consistency guarantees
hold.

Considering the relevant assumptions,
checked in \cref{lemma:bvmGauss,lemma:bvmEV,lemma:approxEV,lemma:approxGauss},
we conclude that the risk of neural Bayes estimators
applied to the examples of Gaussian random fields and max-stable distributions from \cref{sec:exampleModels}
converges to zero in the sense of \cref{thm:approxconsistency,cor:approxconsistency}.

\subsection{Discussion of convergence rates and Bayes efficiency}
\label{sec:rates}

\cref{thm:approxconsistency} and \cref{cor:approxconsistency} state that  neural estimators 
are consistent in a Bayesian sense. 
In order to show that neural estimators can be fully efficient
compared to the Bayes estimator, we need that the risk ratio satisfies
\begin{align}\label{bayes_efficiency}
   \frac{R(\phi_m^{N(m),\alpha(m)})}{R(f_m^*)}
    &=
    1
    + \frac{R(\phi_m^*)-R(f_m^*)}{R(f_m^*)}
    + \frac{R(\phi_m^N) - R(\phi_m^*)}{R(f_m^*)} \pto 1, \quad m\to\infty.
\end{align}
We discuss the rates of the different terms and conditions
that balance the network complexity, the training sample size
and properties of the statistical model to ensure 
this convergence.

Let $\varepsilon_m = R(f_m^*)$ denote the Bayes risk for
estimation based on $m$ replicates.
For a given $m$, the approximation error $R(\phi_m^*)-R(f_m^*)$ can be 
made arbitrarily small because of bound \cref{eq:int_risk} in 
\cref{prop:approximation}. We choose as target for this error any 
sequence $\eta_m$ such that $\eta_m/\varepsilon_m \to 0$.
The size of the neural network $\phi_m^*$ that achieves this approximation grows both in the input dimension $m$ and the required approximation accuracy $\eta_m$. For the shallow neural networks considered here, and under mild assumptions on the target function $f_m^*$, this growth can 
be linear in $\eta_m^{-m}$ \citep{yarotsky2017error}.
Other architectures could improve the required network size substantially,
both in practice \citep[e.g.,][]{sainsbury2023likelihood} and in theory.

Once the network architecture is fixed for every $m\in\mathbb N$, we control
the generalization error $R(\phi_m^N) - R(\phi_m^*)$ by suitable
choices of the training sample size $N(m)$ and the size of the neural
network $\alpha(m)$
in the parameterization~\cref{eq:restrictedParameters}.
We first observe that the tail heaviness of the statistical model is a driving factor in
the generalization error, where heavier tails with
larger probabilities $\P( \|Z\|_\infty > M)$ in~\cref{M_choice} for large $M$ lead to larger errors. To compensate, we need to choose suitably large training sizes 
$N(m)$. For instance, for sub-Gaussian tails, including the Gaussian model
in \cref{sec:gauss}, 
this
implies that
$N(m)$ grows faster than $\log(\eta_m^{-1})^{2(md+p)}$.
For heavier tails, such as the polynomial tail decay of the 
max-stable models in \cref{sec:exampleModels:maxstable},
the training size should grow faster with rate $(\eta_m)^{-C(md+p)}$
for some constant $C$.

The network size $\alpha(m)$ needs to increase fast enough such that
the optimal neural network $\phi_m^*$ is contained in the 
admissible set~\cref{eq:restrictedParameters}.
Larger and more expressive networks are more prone to overfitting
and therefore require larger training sizes $N(m)$ to control 
the generalization error.
A suitable choice of $\alpha(m)$ and a sequence $N(m)$ that asymptotically grows faster than $\eta_m^{-K m^2}$ as $m\to\infty$, for a large enough constant $K$,  ensures such a trade-off. The convergence in~\eqref{bayes_efficiency} then holds if the tail of the model $Z$ is not too heavy,
such as a sub-Gaussian or polynomial tail.

A detailed rate discussion can be found in~\cref{sec:rate}.

\section{Simulation study}
\label{sec:simulation}

We conduct several numerical experiments in order to illustrate the 
behavior of the neural estimator. This will help to better  
understand the implication of the theoretical guarantees on the different risk terms
in a practical application.
We concentrate here on the max-stable logistic model introduced in~\eqref{eq:logistic}, fixing the dimension to \( d = 5 \). The prior distribution $\Pi$ for the model parameter \(\theta\) is always uniform over \((0, 1)\), and we consider different numbers of simultaneous replicates \( m \in \{1, \dots, 10\} \).
In this relatively low dimension, it is still possible to obtain 
the Bayes estimator exactly through the Markov chain Monte Carlo (MCMC) algorithm proposed in~\cite{dombry2017bayesian}, which we will use as the baseline throughout.

Additional simulations for the illustrative example of linear model fitting 
can be found in~\cref{app:linear}.

\subsection{Comparison of estimators}

In this section we illustrate the potential of the neural estimator $\hat \theta_{\text{N}} = f_{\phi^N_m}(Z)$ to effectively approximate the Bayes estimator $\hat \theta_{\text{B}} = f^*_m(Z)$ obtained by MCMC. 
We choose a large number $N=10^4$ of training samples to mostly ignore the generalization error in~\eqref{eq:errdec}.
Moreover, using the automated machine learning approach~\citep{H2OAutoML20}, we perform an extensive grid search over a set $\Phi$ of many different architectures to minimize the approximation error.

We train the models through regularization techniques such as dropout. 
Although our theoretical analysis and sample guarantees only hold for the empirical risk minimizer, there are several reasons to use regularization techniques in practice. 
First, due to a complex landscape of local minima, standard algorithms take very long to converge to the true global minimum and might be stuck in local minima and flat regions. Therefore, it can be computationally very hard to find the empirical risk minimizer.
Second, the estimator resulting from regularization-based training usually performs better on the test data set than the true minimizer \(\phi_m^N\).

To evaluate the performance of the neural estimator, we generate 500 independent test samples of \(\theta\) from the prior distribution $\Pi$. Theoretical results suggest a decreasing risk for the MCMC Bayes estimator as \(m\) increases, and similar behavior is expected for the neural estimator, provided it has sufficient approximation capacity.  The results in Figure~\ref{fig:risk_log_model_n} show that the optimal neural estimator approximates the Bayes estimator well. Both estimators have a strongly decreasing risk as \(m\) increases.

\begin{figure}[h]
    \centering
    \begin{minipage}[b]{0.495\textwidth}
        \centering
        \includegraphics[width=\textwidth]{./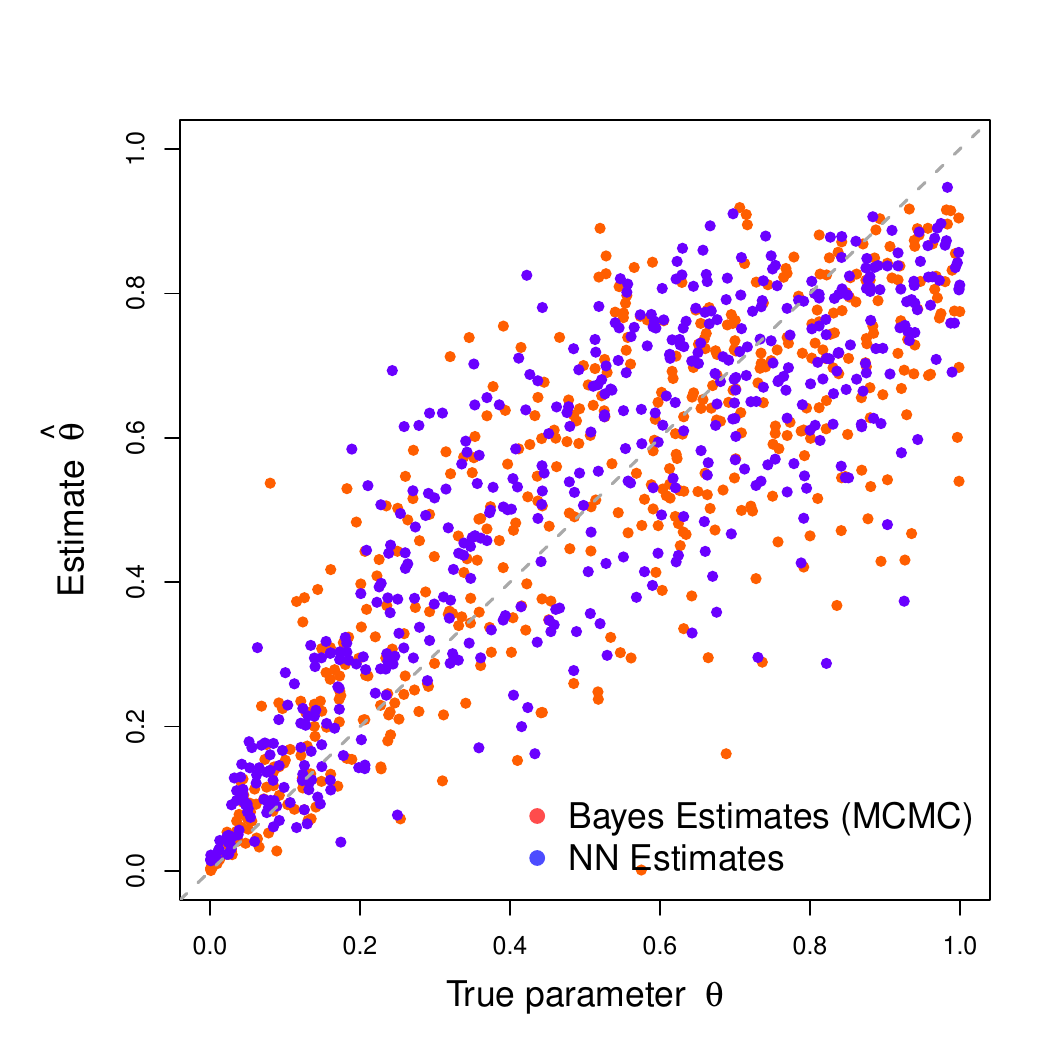}
    \end{minipage}
    \begin{minipage}[b]{0.495\textwidth}
        \centering
        \includegraphics[width=\textwidth]{./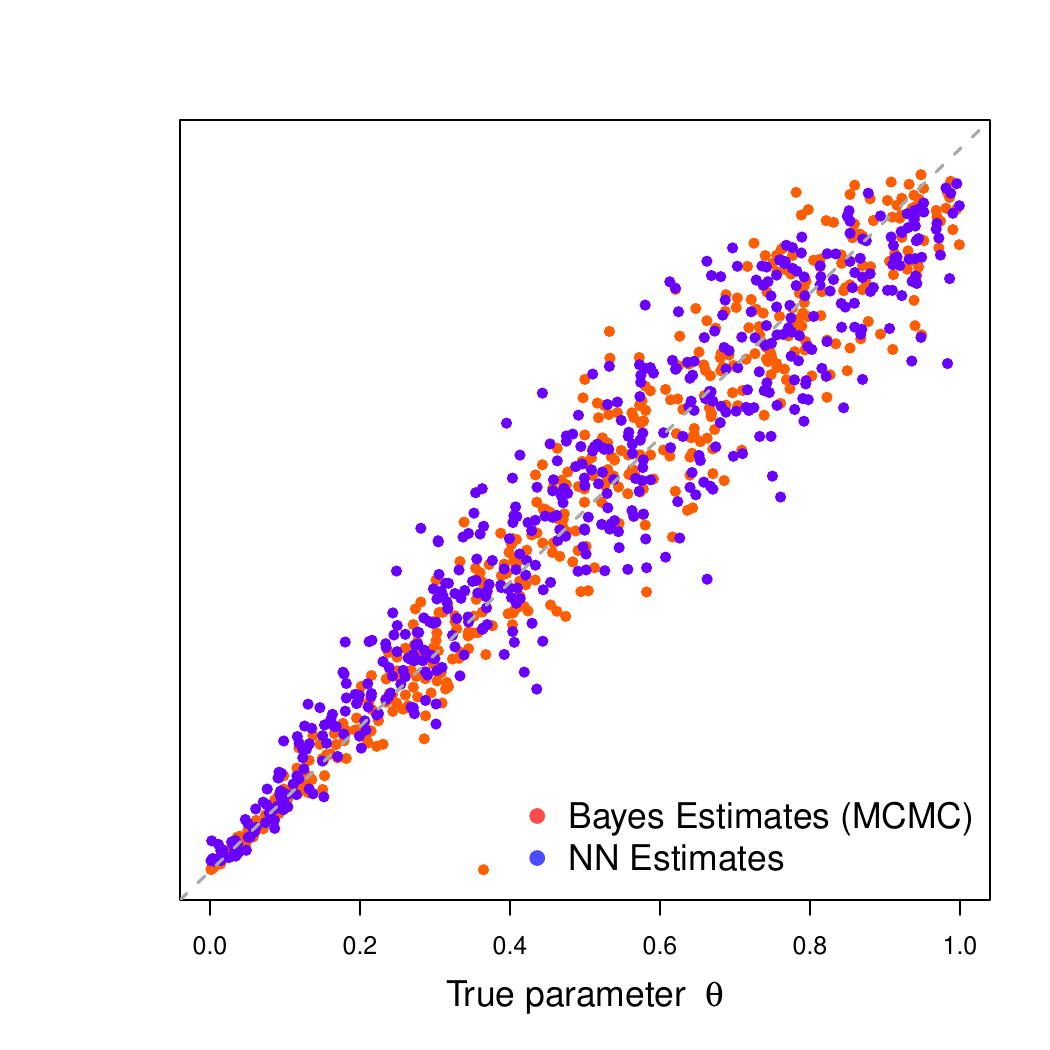}
    \end{minipage}
    
    \caption{Performance of MCMC Bayes estimator and optimal neural estimator for $m=1$ (left) and $m=10$ (right) simultaneous samples, evaluated on 500 test data points with parameters $\theta$ sampled from the uniform prior $\Pi$ on $(0,1)$. The neural estimator was trained with a large number of simulated training samples $N$ and optimized over different network architectures.}
    \label{fig:risk_log_model_n}
\end{figure}

\subsection{Evaluation of training data effects}

We study more carefully the effect of different amounts of training samples $N$ and simultaneous replicates $m$ on the terms of the risk decomposition~\cref{eq:errdec}. In Figure~\ref{fig:errdec_combined} we track all risks that appear in this decomposition, implicitly also containing the approximation error and the generalization error as the differences of relevant curves. All population risks are approximated by the risk on a large test data set.

In addition to the risk of the non-regularized, empirical risk minimizer $\phi_m^N$, we also include the regularized estimator $\tilde{\phi}_m^N$, since it is 
typically applied in practice and improves the risk especially for small $N$.
The optimal neural estimator $\phi^*_m$ is again computed with a large $N$ and by a grid search over many architectures.

We observe that the optimal neural estimator has a comparable risk to the Bayes risk across different numbers of replicates $m=1,\dots, 10$, confirming the results of the simulations in the previous section. Note that the Bayes risk satisfies by definition that  $R(f_m^*) \leq R(\phi_m^*)$, and violations of this in the figure may be due to numerical instability of the MCMC approximation.

The non-regularized and the regularized neural estimator constructed on a small training set of size $N=100$ seem to overfit and exhibit a large generalization error, especially for a growing number of replicates $m$. Following our theoretical results, the reason is that $N$ is not large enough compared to $m$. This is usually not a problem since simulation of the model should be relatively cheap, and we can use larger numbers of training samples. Indeed, for $N=1000$ both neural estimators perform better. The risk of the regularized estimator has a smaller generalization error and quickly approaches the Bayes risk. This confirms that our theoretical results on the empirical risk minimizer are conservative compared to the regularized estimators usually applied in practice, which
typically have even lower risks and potentially better convergence rates.

\begin{figure}[h]
    \centering
    \begin{subfigure}[t]{0.495\textwidth}
        \centering
        \includegraphics[width=\textwidth]{./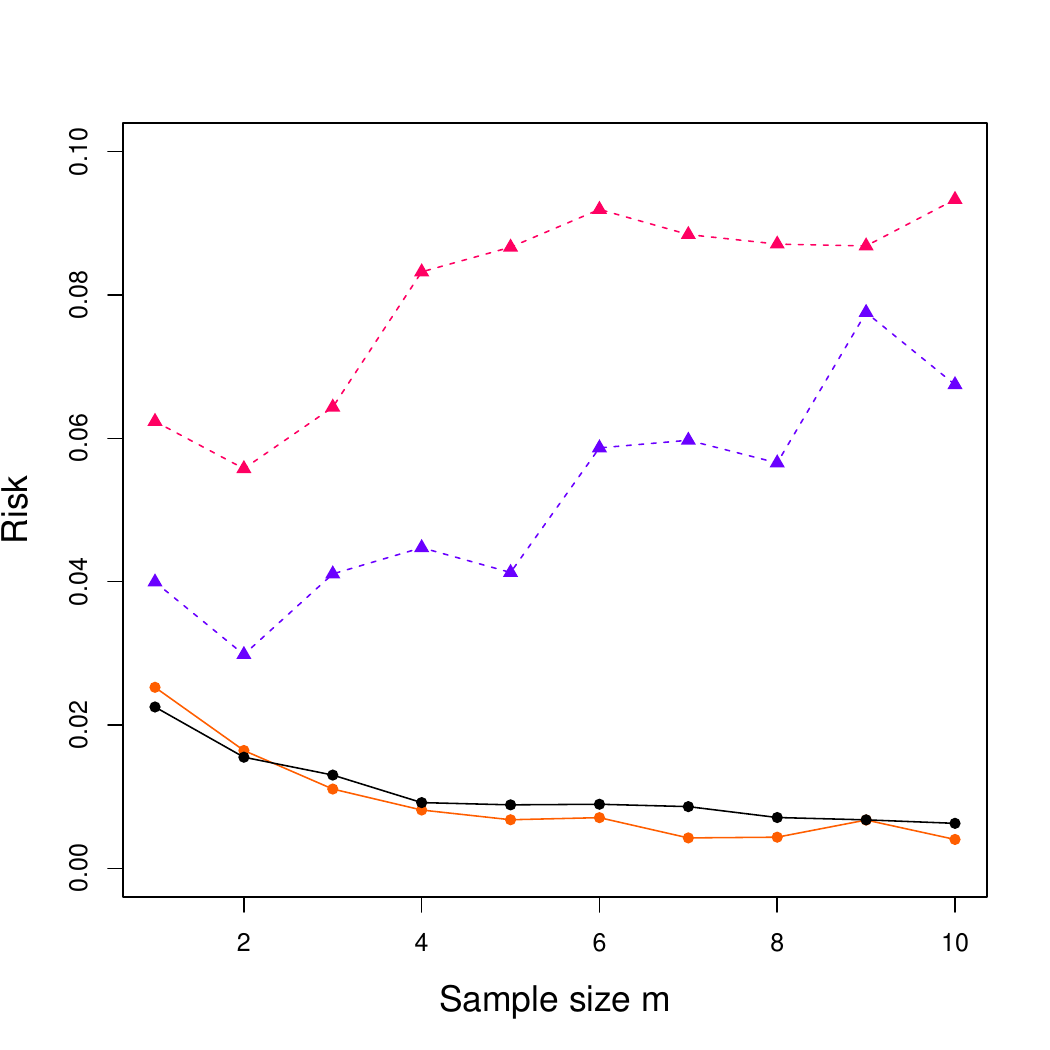}
    \end{subfigure}
    \begin{subfigure}[t]{0.495\textwidth}
        \centering
        \includegraphics[width=\textwidth]{./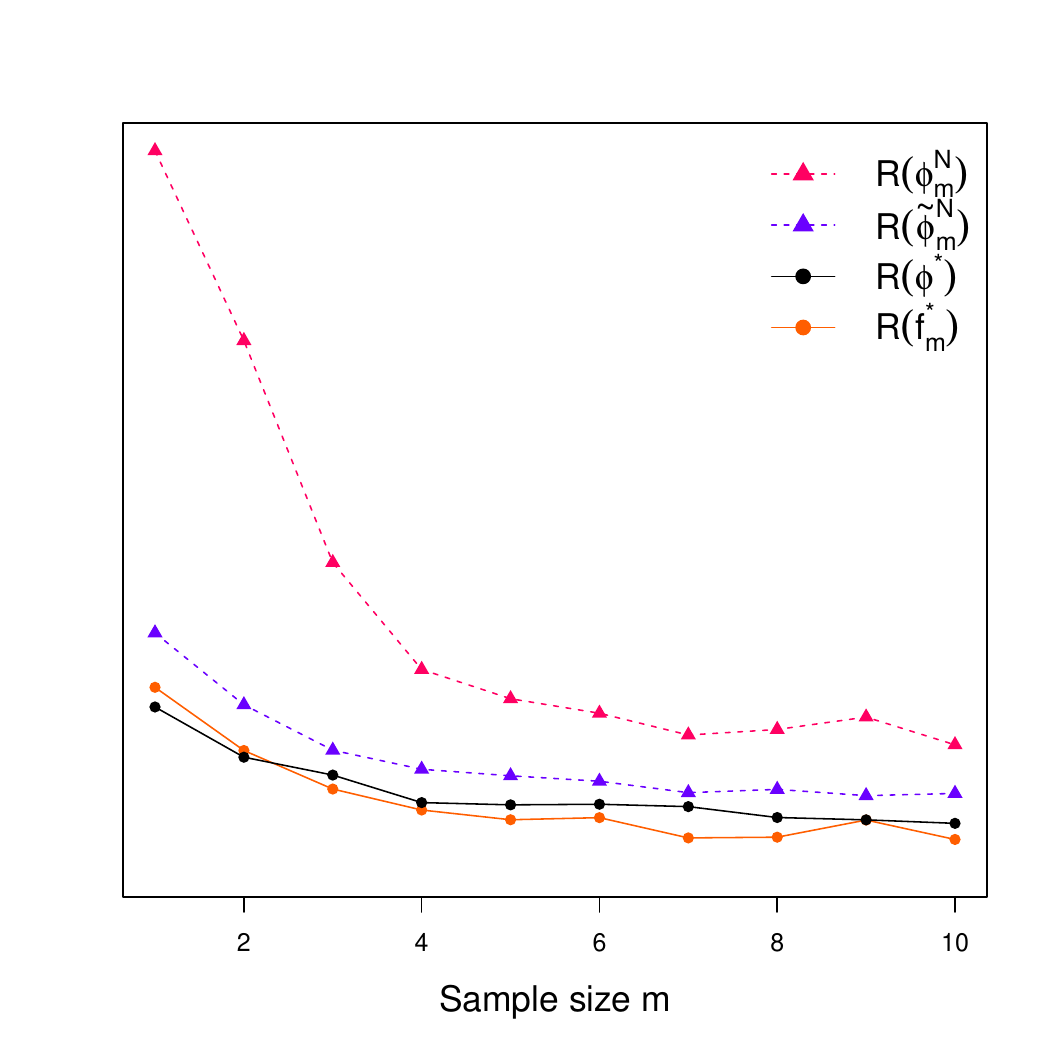}
    \end{subfigure}
    \caption{%
        Comparison of risks appearing in decomposition~\cref{eq:errdec} for the Bayes estimator and neural estimators for different number of replicates $m$.
        Neural networks were trained on $N=100$ (left) and $N=1000$ (right) training samples.
    }
    \label{fig:errdec_combined}
\end{figure}

\subsection{Generalization error decomposition}

Following the theoretical results in~\cref{sec:generalizationError}, we 
 analyze the decomposition~\eqref{eq:genErrdec} of the generalization error when varying the number of replicates \(m\) and the amount of training data $N$. 
Figure~\ref{fig:generrdec_combined} compares the training risks $R_N$
and the population (or test) risks $R$ of the regularized and non-regularized neural estimators.

Consistent with predictions from Hoeffding's inequality, the training and test risks of the optimal neural estimator $\phi_m^*$ are 
almost identical since no overfitting can occur. 
Similarly to~\cref{fig:errdec_combined} we see that for $N=100$ both
the regularized and non-regularized neural estimators overfit, that is,
the training and test risks have a large gap.
This gap diminishes for larger $N=1000$, but is still present in particular for the
non-regularized empirical risk minimizer.

\begin{figure}
    \centering
    \begin{subfigure}[t]{0.495\textwidth}
        \centering
        \includegraphics[width=\textwidth]{./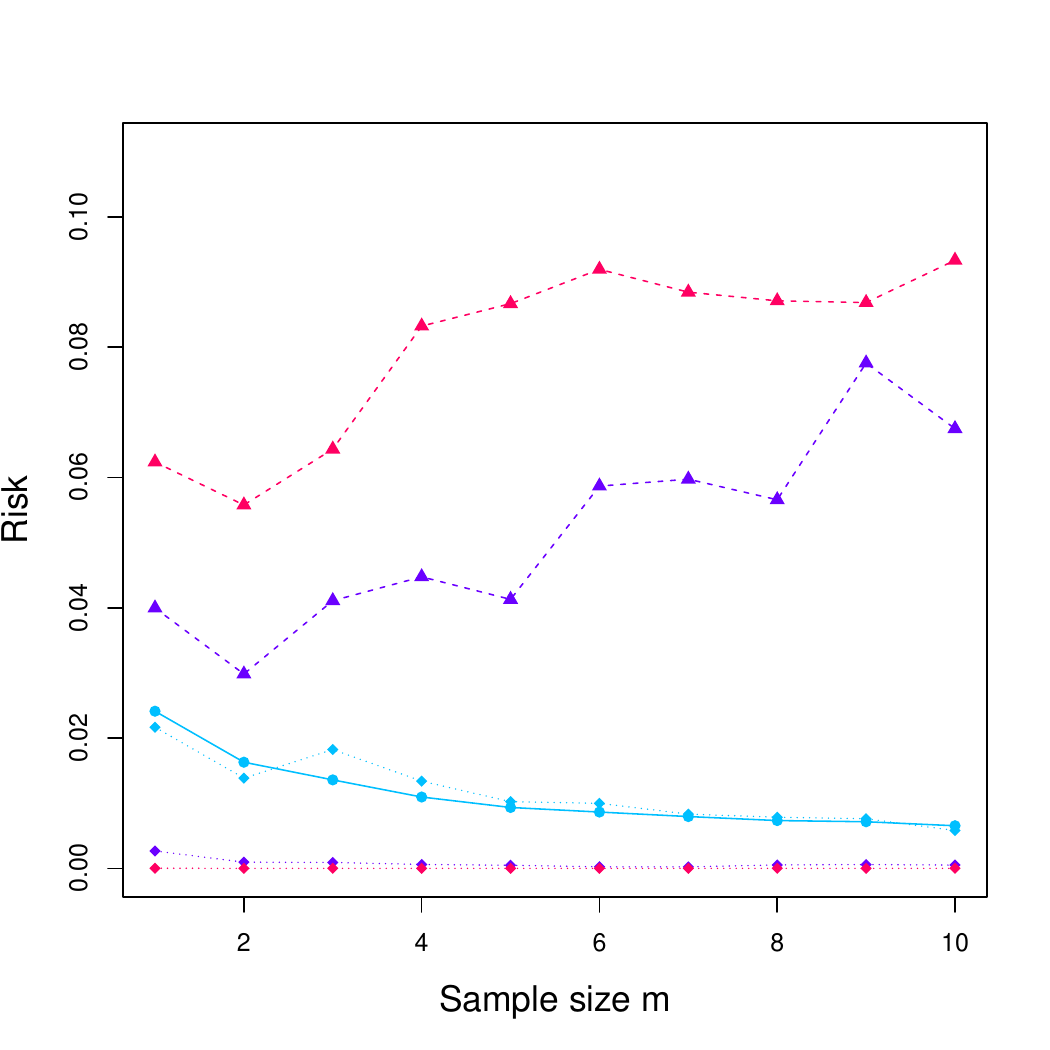}
    \end{subfigure}
    \begin{subfigure}[t]{0.495\textwidth}
        \centering
        \includegraphics[width=\textwidth]{./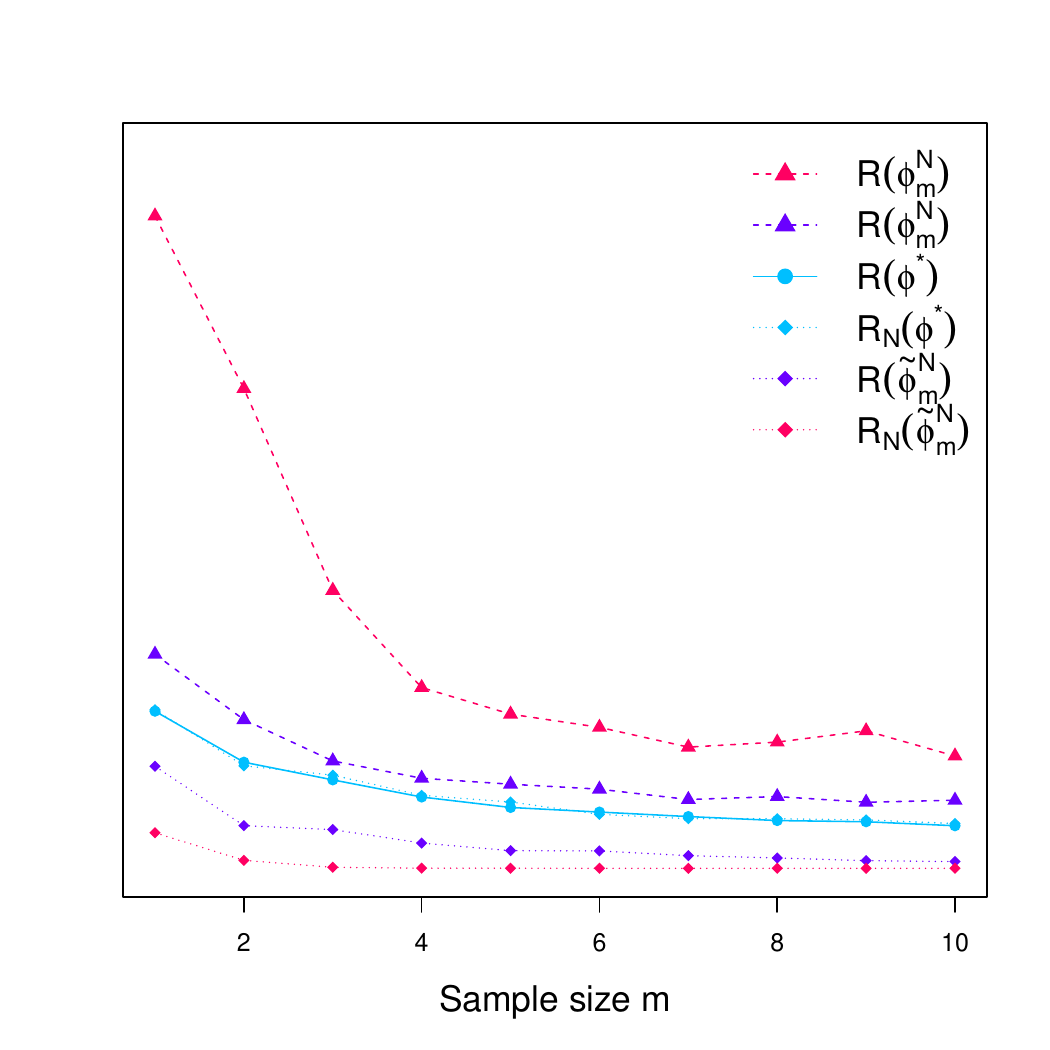}
    \end{subfigure}
    \caption{Comparison of risks appearing in decomposition~\cref{eq:genErrdec} of the generalization error for neural estimators for different number of replicates $m$. Neural networks were trained on $N=100$ (left) and $N=1000$ (right) training samples.}    \label{fig:generrdec_combined}
\end{figure}

Our theoretical results in~\cref{sec:combi} highlight the importance of scaling
the number of training samples \(N\) properly with the number of replicates \(m\). Figure~\ref{fig:generrdec_all} studies this interplay for an increasing number of $N$ and $m\in\{1,10\}$ in case of the regularized estimator. We observe that while the training and test risks
are very different for small $N$, for larger $N$ this difference decreases. Eventually, both the training and test risk converge to the risk of the optimal neural network, and therefore also the Bayes risk. 
This highlights that a sufficiently large number of training samples $N$ 
is crucial to ensure good performance of neural estimators. Since neural estimators are 
applied in scenarios where simulation is much cheaper than estimation, this
should not be the main bottleneck. Training neural networks with larger 
sample size is certainly more expensive, but 
efficient implementations of backpropagation alleviate this issue.
Moreover, this training has to be done only once. Neural estimators are then
amortized in the sense that they can be applied to any new data 
set with $m$ replicates from a distribution $\Pf_\theta$ for 
any $\theta \in\Theta$. This estimation step is extremely fast since 
it involves only a forward pass in the trained neural network.

\begin{figure}[h]
    \centering 
        \includegraphics[width=0.495\textwidth]{./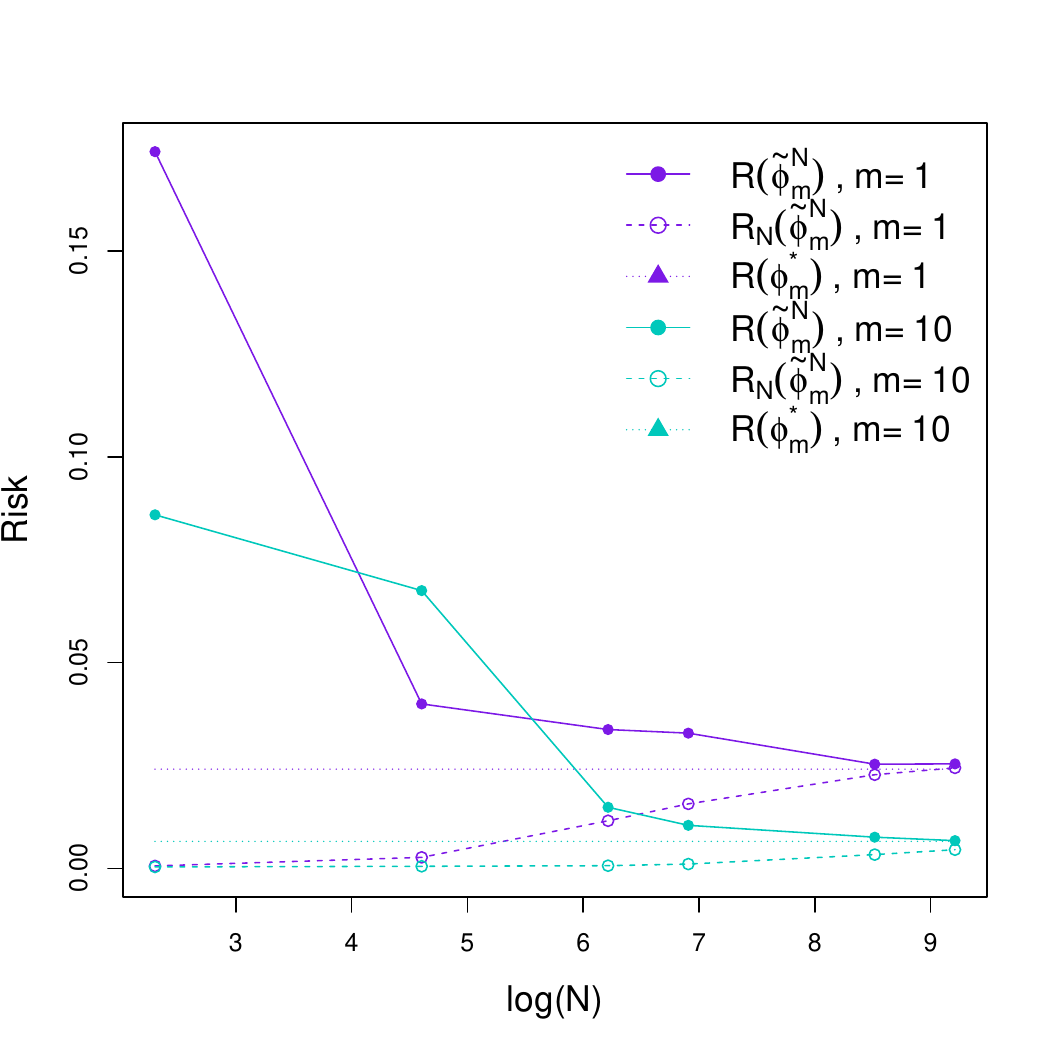}
        \caption{Training and test risks of neural estimators for different number of replicates $m$ and training samples $N$ compared to the optimal neural network risk.}
        \label{fig:generrdec_all}
\end{figure}

\section{Conclusion}

The results of this paper are a first step in establishing theoretical 
guarantees for statistical estimators based on neural networks.
Certain bounds that we obtain are conservative since we 
tried to avoid additional assumptions; see the discussion in \cref{sec:rates}.
In the theory of statistical learning for deep neural networks, it is known that one can obtain better convergence
rates and sharper bounds on the required network complexity
under assumptions on the 
smoothness \citep{yarotsky2017error} or a compositional structure of the regression function \citep{poggio2017and, schmidt2020nonparametric}, 
or if the data generating distribution approximately
concentrates on a low-dimensional manifold \citep{jia2023}.

The assumption of a compact parameter space $\Theta$ is crucial in several of our proofs. This emphasizes the fact that neural networks excel at interpolating data but struggle with extrapolation.
In the setting of neural estimators, this means that the whole range of parameters of 
interest 
has to be covered with training samples $\theta_i$. This 
poses an obvious problem in applications where natural bounds 
on this range are not available.
While extrapolation is known to be a difficult problem, there
are first statistical approaches to address this issue 
\citep{shen2024engressionextrapolationlensdistributional,buritica024progressionextrapolationprincipleregression}.

There remain many open questions related to whether 
our results can be extended to neural estimators 
that are based on other architectures 
such as convolutional neural network \citep{gerber2021fast,lenzi2021neural},
or that use different loss functions such the quantile loss. 
Moreover, at the moment it is unclear if neural estimators satisfy some form of asymptotic normality,
and whether the popular bootstrap method for uncertainty quantification
is consistent.

\clearpage
\appendix

\section{Proofs of general results}

\subsection{Bayes Risk}

\begin{lemma}
    \provenlabel{lemma:bayesConvergence}
    Under the assumptions of \cref{thm:bvm} for $\theta_0 \in \Theta$,
    we have
    \begin{align*}
        \P_{\theta_0}\lr{
            m^{1/4} \norm[2]{f_m^*(Z) - \theta_0} > \varepsilon
        }
        \to
        0
    \end{align*}
    for all $\varepsilon > 0$.
\end{lemma}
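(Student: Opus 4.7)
My plan is to exploit the fact that under the Bernstein--von Mises assumptions, the Bayes estimator is asymptotically normal and efficient, i.e.\ it satisfies $\sqrt{m}(f_m^*(Z) - \theta_0) \Rightarrow \normal(0, I_{\theta_0}\inv)$ under $\P_{\theta_0}$. The quantity $m^{1/4}$ grows strictly slower than $\sqrt{m}$, so $m^{1/4}\norm[2]{f_m^*(Z) - \theta_0}$ is essentially $m^{-1/4}$ times something that is bounded in probability, and must therefore go to zero in probability. Concretely, I would write
\begin{align*}
    m^{1/4} \norm[2]{f_m^*(Z) - \theta_0}
    =
    m^{-1/4}\cdot \sqrt{m}\norm[2]{f_m^*(Z) - \theta_0},
\end{align*}
apply \cref{thm:bvm} together with the continuous mapping theorem to the Euclidean norm to conclude that $\sqrt{m}\norm[2]{f_m^*(Z) - \theta_0}$ converges in distribution to $\norm[2]{G}$ with $G \sim \normal(0, I_{\theta_0}\inv)$, hence is $O_{\P_{\theta_0}}(1)$, and then use Slutsky together with $m^{-1/4} \to 0$ to conclude convergence to zero in probability.

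The key step is the invocation of \cref{thm:bvm}: the theorem, as stated, asserts asymptotic normality and efficiency of the Bayes estimator, which (in the van der Vaart formulation of Theorem~10.8) directly yields $\sqrt{m}(f_m^*(Z) - \theta_0) \Rightarrow \normal(0, I_{\theta_0}\inv)$ under $\P_{\theta_0}$. Given this, the rest is a standard probabilistic argument: for any $\varepsilon>0$ and any $M>0$,
\begin{align*}
    \P_{\theta_0}\lr{
        m^{1/4}\norm[2]{f_m^*(Z)-\theta_0} > \varepsilon
    }
    \leq
    \P_{\theta_0}\lr{
        \sqrt{m}\norm[2]{f_m^*(Z)-\theta_0} > M
    }
    + \indic{m^{-1/4} M > \varepsilon}.
\end{align*}
The indicator vanishes for $m$ large enough, and the first term can be made arbitrarily small by choosing $M$ large, using tightness of the weakly convergent sequence.

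I expect no serious obstacle: the main subtlety is merely to be precise that \quotes{asymptotically normal and efficient} as used in \cref{thm:bvm} does indeed mean $\sqrt{m}$-convergence of the posterior mean to a centered normal limit under $\P_{\theta_0}$, and not only of the posterior distribution around the MLE. This follows from the standard BvM argument because the posterior mean and the efficient (MLE-type) centering differ by $o_{\P_{\theta_0}}(m^{-1/2})$, together with the uniform integrability / moment assumption $\int \norm{\theta}^p \pi(\theta)d\theta < \infty$ which, combined with the exponential concentration of the posterior guaranteed by the testing hypothesis, allows one to pass from posterior convergence in distribution to convergence of posterior means. No additional assumptions beyond those of \cref{thm:bvm} are needed.
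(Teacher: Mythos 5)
Your proposal is correct and follows essentially the same route as the paper: both proofs invoke the $\sqrt{m}$-rate asymptotic normality of the Bayes estimator from \cref{thm:bvm} (via Theorem~10.8 of van der Vaart) and then exploit that $m^{1/4}$ grows strictly slower than $\sqrt{m}$, concluding by tightness of the weakly convergent sequence. The paper phrases this via an explicit quantile bound in the one-dimensional case (extending to $p$ dimensions by $\tnorm[2]{x}\leq\sqrt{p}\tnorm[\infty]{x}$) rather than your $O_{\P}(1)\cdot o(1)$ formulation, but the argument is the same.
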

\begin{proof}[\proofref{lemma:bayesConvergence}]
    For simplicity, consider the one-dimensional case.
    The $p$-dimensional case then follows by the bound
    $\norm[2]{x} \leq \sqrt{p} \norm[\infty]{x}$.

    Under the assumptions of \cref{thm:bvm},
    \cite[Theorem~10.8]{vaart_1998} implies that
    $\sqrt{m}(f_m^*(Z) - \theta_0)$ converges in distribution to a centered normal distribution
    with distribution function, say, $F$.
    Fix any $\delta > 0$ and let $x = F\inv(1-\delta/4)$.
    Then, for large $m$, we have
    \begin{align*}
        \P_{\theta_0}\lr{
            m^{1/4} \lr{f_m^*(Z) - \theta_0} > \varepsilon
        }
        &=
        \P_{\theta_0}\lr{
            \sqrt{m} \lr{f_m^*(Z) - \theta_0} > m^{1/4} \varepsilon
        }
        \\ &\leq
        \P_{\theta_0}\lr{
            \sqrt{m} \lr{f_m^*(Z) - \theta_0} > x
        }
        \\ &\leq
        1 - \P_{\theta_0}\lr{
            \sqrt{m} \lr{f_m^*(Z) - \theta_0} < x
        }
        \\ &\leq
        1 - F(x) + \delta/4
        \\ &=
        \delta/2
        .
    \end{align*}
    The last inequality follows for large enough $m$ from the convergence in distribution
    of $\sqrt{m}(f_m^*(Z) - \theta_0)$ to $F$.
    By symmetry of the normal distribution, we also have
    \begin{align*}
        \P_{\theta_0}\lr{
            m^{1/4} \abs{f_m^*(Z) - \theta_0} > \varepsilon
        }
        \leq
        \delta
        ,
    \end{align*}
    and hence the claimed convergence in probability.
\end{proof}

\begin{proof}[\proofref{cor:bayesRiskToZero}]
    First, consider the pointwise result in \cref{eq:bayesRiskPointwise}.
    Under the assumptions of \cref{thm:bvm} for $\theta_0 \in \Theta$,
    we have by \cref{lemma:bayesConvergence} that
    \begin{align*}
        \P_{\theta_0}\lr{
            m^{1/4} \norm[2]{f_m^*(Z) - \theta_0} > \varepsilon
        }
        \to
        0
        ,
    \end{align*}
    for all $\varepsilon > 0$.
    For a bounded parameter space $\tnorm[\infty]{\Theta} < B$,
    this implies
    \begin{align*}
        R_{\theta_0}(f_m^*)
        &=
        \E_{\theta_0}\brackets{\norm[2]{f_m^*(Z) - \theta_0}^2}
        \\ &=
        \E_{\theta_0}\brackets{
            \norm[2]{f_m^*(Z) - \theta_0}^2 \indic{\norm[2]{f_m^*(Z) - \theta_0} \leq \varepsilon / m^{1/4}}
        }
        +
        \E_{\theta_0}\brackets{
            \norm[2]{f_m^*(Z) - \theta_0}^2 \indic{\norm[2]{f_m^*(Z) - \theta_0} > \varepsilon / m^{1/4}}
        }
        \\ &\leq
        \varepsilon^2 / m^{1/2}
        +
        4pB^2 \P_{\theta_0}\lr{
            m^{1/4} \norm[2]{f_m^*(Z) - \theta_0} > \varepsilon
        }
        \\ &\to
        0
        .
    \end{align*}
    
    For the integrated result in \cref{eq:bayesRiskIntegrated}, consider
    \begin{align*}
        \limit{m} R(f_m^*)
        &=
        \limit{m} \E_{\theta \sim \Pi} R_{\theta}(f_m^*)
        .
    \end{align*}
    Since $\Theta$ is bounded by $B$,
    the risk $R_{\theta}(f_m^*)$ is bounded by $4pB^2$.
    Furthermore, we show above that $R_{\theta}(f_m^*) \to 0$ for all $\theta \in \Theta$.
    Therefore, by the dominated convergence theorem,
    we have
    \begin{align*}
        \limit{m} R(f_m^*)
        &=
        \E_{\theta \sim \Pi} \limit{m} R_{\theta}(f_m^*)
        =
        0
        .
        \qedhere
    \end{align*}
\end{proof}

\subsection{Approximation Error}
\label{sec:proofApproxError}

\begin{proof}[\proofref{eq:approxError}]
    For the squared loss function, and any $f$ (such as $f_{\phi_m^*}$) we have
    \begin{align*}
        R(f)
        &=
        \E\brackets{\norm[2]{f(Z) - \theta}^2}
        \\ &=
        \E\brackets{\norm[2]{f(Z) - \E\brackets{\theta|Z} + \E\brackets{\theta|Z} - \theta}^2}
        \\ &=
        \E\brackets{\norm[2]{f(Z) - \E\brackets{\theta|Z}}^2
        + \E\brackets{\E\brackets{\theta|Z} - \theta}^2}
        + 2 \underbrace{%
            \E\brackets{\E\brackets{(f(Z)-\E\brackets{\theta|Z})(\E\brackets{\theta|Z} - \theta)|Z}}
        }_{=0}
        \\ &=
        \E\brackets{f(Z) - f^*(Z)}^2 + R(f_m^*)
        .
        \qedhere
    \end{align*}
\end{proof}

\begin{proof}[\proofref{lemma:bayesContinuousThree}]
    Note that the continuity of $p_\theta$ implies that of $p_{\theta,m}$.
    Let $B < \infty$ be such that $\Theta \subseteq [-B,B]$.
    Consider a point $z_0 \in \Zcal^m$
    and a sequence $z_n \to z_0$, eventually contained in the domain of $f$.
    Let $K \subset \Zcal^m$ be a compact set containing $z_0$ and almost all $z_n$,
    and define
    \begin{align*}
        P
        &=
        \sup_{\theta \in \Theta, z \in K} p_{\theta,m}(z)
        .
    \end{align*}
    Since $p$ is continuous and $\Theta \times K$ is compact,
    $P$ is finite.
    Next, let
    \begin{align*}
        q_n(\theta)
        &=
        \theta \pi(\theta) p_{\theta,m}(z_n)
        , \qquad
        r_n(\theta)
        =
        \pi(\theta) p_{\theta,m}(z_n)
        .
    \end{align*}
    These functions are dominated by
    $BP\pi(\theta)$ and $P\pi(\theta)$, respectively,
    and converge pointwise to $q_0$ and $r_0$.
    Hence by dominated convergence,
    \begin{align*}
        f(z_n)
        &=
        \E\lr{\theta | Z = z_n}
        =
        \frac{
            \int_\Theta q_n(\theta) d\theta
        }{
            \int_\Theta r_n(\theta) d\theta
        }
        \,\to\,
        \frac{
            \int_\Theta q_0(\theta) d\theta
        }{
            \int_\Theta r_0(\theta) d\theta
        }
        =
        f(z_0)
        .
    \end{align*}
    Note that for large $n$ the denominator
    $\int_\Theta r_n(\theta) d\theta$
    is positive,
    since $z_n$ is eventually in the domain of $f$.
    Since the choice of $z_0$ and $z_n$ was arbitrary, this implies that $f$ is continuous.
\end{proof}

\begin{proof}[\proofref{prop:approximation}]
    Fix $\varepsilon > 0$ and consider $f^*$ to be continuous by \cref{asm:bayesContinuous}.
    Let $B$ be the bound on $\Theta$ from \cref{asm:parametersCompact}.
    Let $\varepsilon_1 = \frac{\varepsilon}{8pB^2}$ and,
    using \cref{asm:modelTight},
    choose $M$ such that
    \begin{align*}
        \P_{\theta_0}\lr{
            \norm[\infty]{Z} > M
        } < \varepsilon_1
        \quad \forall
        \theta_0 \in \Theta.
    \end{align*}
    Next, let $\varepsilon_2 = \sqrt{\frac{\varepsilon}{2}}$ and,
    using \cref{thm:cyb},
    choose a neural network $f_{\phi_m^*}$ with
    \begin{align*}
        \sup_{z \in \brackets{-M,M}^D}
        \norm[2]{f_{\phi_m^*}(z) - f^*(z)}
        <
        \varepsilon_2
        .
    \end{align*}
    By adding another layer to this network that maps each output
    to the interval $[-B,B]$,
    we can ensure that the output of the network is bounded by $B$.
    This can be achieved using the ReLU activation function $\sigma$
    and the mapping
    \begin{align*}
        x
        \mapsto
        \sigma(X+B) - \sigma(X-B) - B
        ,
    \end{align*}
    which is the identity on $[-B,B]$ and equal to $\pm B$ elsewhere.
    Then,
    for any $\theta_0 \in \Theta$,
    \begin{align*}
        R_{\theta_0}(f_{\phi_m^*})
        -
        R_{\theta_0}(f^*)
        &=
        \E_{\theta_0}\brackets{\norm[2]{f_{\phi_m^*}(Z) - f_m^*(Z)}^2}
        \\ &\leq
        \P_{\theta_0}\lr{
            \norm[\infty]{Z} > M
        }
        4pB^2
        +
        \P_{\theta_0}\lr{
            \norm[\infty]{Z} \leq M
        }
        \varepsilon_2^2
        \\ &\leq
        \varepsilon_1 4pB^2 + \varepsilon_2^2
        \\ &=
        \frac{\varepsilon}{8pB^2}4pB^2 + \frac{\varepsilon}{2}
        \\ &=
        \varepsilon
        .
    \end{align*}
    Since this bound holds pointwise for all $\theta_0 \in \Theta$,
    we can integrate over $\theta_0$ to obtain
    \begin{align*}
        R(f_{\phi_m^*})
        -
        R(f_m^*)
        &=
        \E_{\theta_0 \sim \Pi}\brackets{
            R_{\theta_0}(f_{\phi_m^*})
            -
            R_{\theta_0}(f_m^*)
        }
        \\ &\leq
        \E_{\theta_0 \sim \Pi}\brackets{
            \varepsilon
        }
        \\ &=
        \varepsilon
        . \qedhere
    \end{align*}
\end{proof}

\subsection{Generalization Error}

\begin{proof}[\proofref{lemma:pseudoRobustnessBound}]
    Starting from \cref{eq:robustnessBound},
    with ${\delta}/{2}$ in place of $\delta$,
    we obtain with probability at least $1-{\delta}/{2}$ that
    \begin{align*}
        \abs{
            R(A_S)
            -
            R_N(A_S)
        }
        &\leq
        \frac{\hat N(S)}{N}
        \varepsilon
        +
        E \cdot \lr{
            \frac{N - \hat N(S)}{N}
            +
            \sqrt{\frac{2K\log(2)+2\log(\frac{2}{\delta})}{N}}
        }
        \\ &\leq
        E \cdot \lr{
            \frac{N - \hat N(S)}{N}
        }
        +
        \varepsilon
        +
        E \cdot 
            \sqrt{\frac{2K\log(2)+2\log(\frac{2}{\delta})}{N}}        
        .
    \end{align*}
    Note that the ratio $\frac{N-\hat N(S)}{N}$
    is the proportion of (i.i.d.) samples not in $\Xcal' \times \Ycal'$.
    Hence, we can use Hoeffding's inequality
    to bound this term as
    \begin{align*}
        \P\lr{
            \frac{N-\hat N(S)}{N}
            -
            \P\lr{(X,Y) \notin \Xcal' \times \Ycal'}
            >
            t
        }
        &\leq
        \exp\lr{
            -2Nt^2
        }
        \\
        \Rightarrow \quad
        \P\lr{
            \frac{N-\hat N(S)}{N}
            -
            \P\lr{(X,Y) \notin \Xcal' \times \Ycal'}
            >
            \sqrt{\frac{\log\lr{\frac{2}{\delta}}}{2N}}
        }
        &\leq
        \frac{\delta}{2}
        \\
        \Rightarrow \quad
        \P\lr{
            \frac{N-\hat N(S)}{N}
            \leq
            \P\lr{(X,Y) \notin \Xcal' \times \Ycal'}
            +
            \sqrt{\frac{\log\lr{\frac{2}{\delta}}}{2N}}
        }
        &\geq
        1-\frac{\delta}{2}
        .
    \end{align*}
    Plugging this into the above bound,
    and adapting the probability bound of the statement,
    we get with probability at least $1-\delta$ that
    \begin{align*}
        \abs{
            R(A_S)
            -
            R_N(A_S)
        }
        &\leq
        E \cdot \lr{
            \P\lr{(X,Y) \notin \Xcal' \times \Ycal'}
            +
            \sqrt{\frac{\log\lr{\frac{2}{\delta}}}{2N}}
        }
        +
        \varepsilon
        +
        E \cdot
            \sqrt{\frac{2K\log(2)+2\log(\frac{2}{\delta})}{N}}
        \\ &=
        \varepsilon
        +
        E \cdot \lr{
            \P\lr{(X,Y) \notin \Xcal' \times \Ycal'}
            +
            \sqrt{\frac{\log\lr{\frac{2}{\delta}}}{2N}}
            +
            \sqrt{\frac{2K\log(2)+2\log(\frac{2}{\delta})}{N}}
        }
        .
    \end{align*}
\end{proof}

\begin{proof}[\proofref{lemma:NNRobust}]
    Consider a neural network with $L$ layers,
    a single output,
    and activation function satisfying $\abs{\sigma(a) - \sigma(b)} \leq \beta \abs{a-b}$.
    For the absolute value loss function $\ell'(y_1,y_2) = \abs{y_1-y_2}$,
    and any $x, x' \in \Xcal$, $y, y' \in \Ycal$,
    Lemma~4 in \cite{xu2012robustness} then yields
    \begin{align}
        \bigabs{
            \tabs{A_s(x) - y}
            -
            \tabs{A_s(x') - y'}
        }
        &=
        \abs{\ell'(A_s(x), y) - \ell'(A_s(x'), y')}
        \nonumber
        \\ &\leq
        (1 + \alpha^L \beta^L)\norm[\infty]{(x,y)-(x',y')}
        .
        \label{eq:NNRobustLOne}
    \end{align}
    For the quadratic loss function $\ell$,
    $p$-dimensional output,
    and the ReLU activation function used in the lemma, which implies $\beta = 1$,
    we have
    \begin{align}
        \abs{\ell(A_s(x), y) - \ell(A_s(x'), y')}
        &=
        \abs{
            \norm[2]{A_s(x) - y}^2
            -
            \norm[2]{A_s(x') - y'}^2
        }
        \nonumber
        \\ &\leq
        \sum_{i=1}^p
        \abs{
            \abs{A_s(x)_i - y_i}^2
            -
            \abs{A_s(x')_i - y'_i}^2
        }
        \nonumber
        \\ &\leq
        \sum_{i=1}^p
        \abs{
            \biglr{
                \abs{A_s(x)_i - y_i}
                -
                \abs{A_s(x')_i - y'_i}
            }
            \biglr{
                \abs{A_s(x)_i - y_i}
                +
                \abs{A_s(x')_i - y'_i}
            }
        }
        \nonumber
        \\ &=
        4B
        \sum_{i=1}^p
        \bigabs{
            \abs{A_s(x)_i - y_i}
            -
            \abs{A_s(x')_i - y'_i}
        }
        \label{eq:NNRobustLTwoA}
        \\ &\leq
        4pB
        \max_i
        \bigabs{
            \abs{A_s(x)_i - y_i}
            -
            \abs{A_s(x')_i - y'_i}
        }
        \nonumber
        \\ &\leq
        4pB
        (1 + \alpha^L)
        \norm[\infty]{(x,y)-(x',y')}
        \label{eq:NNRobustLTwoB}
        .
    \end{align}
    In \cref{eq:NNRobustLTwoA} we use that $\Ycal$ is bounded by $B$,
    and in \cref{eq:NNRobustLTwoB} we use \cref{eq:NNRobustLOne}.
    
    Setting $c(s) \equiv c = 4pB(1+\alpha^L)$,
    and considering the infinity norm,
    Example~4 in \cite{xu2012robustness} implies that the neural networks are
    $(K(\gamma), c\gamma)$-robust on compact sets $\Xcal' \times \Ycal$ for
    \begin{align*}
        K(\gamma)
        &=
        \covering(\gamma/2, \Xcal' \times \Ycal, \norm[\infty]{\cdot})
        \\
        c
        &=
        4pB(\alpha^L + 1)
        .
    \end{align*}

    Furthermore, since the bound $c\gamma$ does not depend on $s$,
    the definition of pseudo-robustness is immediately satisfied, as well, for
    $\hat N(S) = \#\setm{i}{(x_i, y_i) \in \Xcal' \times \Ycal}$.
\end{proof}

Before proving \cref{cor:approximationRisk},
we establish a lemma about the term labelled ``Generalization Error 2'' in~\cref{eq:genErrdec}.
Recall the notation $D = md$.
\begin{lemma}
    \provenlabel{lemma:approximationRiskTwo}
    Suppose \cref{asm:parametersCompact,asm:modelTight} hold.
    Consider a fixed network architecture with $L$ layers,
    let $\phi_m^*$ be its optimal weights with respect to risk $R$,
    and let $\phi_m^{N,\alpha(N)}$ be the restricted empirical risk minimizer
    defined in~\eqref{eq:restrictedMinimizer}.
    Assume that the output of the network $\phi_m^*$ is bounded by the same $B$ as in
    \cref{eq:restrictedParameters} and \cref{asm:parametersCompact}.
    Then there exists $\zeta_1(\delta, N)$ that converges to zero as $N$ grows,
    and $\alpha(N)$,
    such that for every $\delta \in (0,1)$,
    with probability at least $1-\delta$,
    \begin{align*}
        R(\phi_m^{N,\alpha(N)}) - R_N(\phi_m^{N,\alpha(N)})
        <
        \zeta_1(\delta, N)
        .
    \end{align*}
    An expression for $\zeta_1(\delta, N)$ is given in the proof.
\end{lemma}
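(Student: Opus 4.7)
The plan is to apply \cref{lemma:pseudoRobustnessBound} to the restricted empirical risk minimizer $\phi_m^{N,\alpha(N)}$, using the pseudo-robustness parameters provided by \cref{lemma:NNRobust}, and then tune the free parameters $\alpha(N)$, $\gamma(N)$ and an auxiliary truncation level $M(N)$ so that every term in the resulting bound tends to zero.

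First I would set up the robustness framework with $\Xcal = \Zcal^m$ and $\Ycal = [-B,B]^p$, which is bounded by \cref{asm:parametersCompact}. As the bounded subset of $\Xcal$ over which pseudo-robustness is taken, I would choose $\Xcal' = [-M(N), M(N)]^{md}$ for a sequence $M(N) \to \infty$. By \cref{asm:modelTight}, the tail probability $\P((X,Y) \notin \Xcal' \times \Ycal) = \P(\norm[\infty]{Z} > M(N))$ can be made arbitrarily small independently of $\theta_0$ by taking $M(N)$ large enough. \cref{lemma:NNRobust} then guarantees that the restricted ERM algorithm is $(K(\gamma), c\gamma, \hat N)$-pseudo-robust with
\begin{align*}
    K(\gamma) = \covering(\gamma/2, \Xcal' \times \Ycal, \norm[\infty]{\cdot}),
    \quad
    c = 4pB(\alpha(N)^L + 1),
\end{align*}
and $\hat N(S) = \#\tsetm{i}{(x_i,y_i) \in \Xcal'}$. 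A standard volume argument gives the order $K(\gamma) \lesssim (M(N)/\gamma)^{md+p}$ up to constants depending on $B$ and the dimensions.

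Next I would plug these parameters into \cref{lemma:pseudoRobustnessBound}, using $E = 4pB^2$ as the uniform bound on the loss guaranteed by \cref{asm:parametersCompact} together with the output clipping in the definition~\cref{eq:restrictedParameters}. This yields, with probability at least $1-\delta$,
\begin{align*}
    R(\phi_m^{N,\alpha(N)}) - R_N(\phi_m^{N,\alpha(N)})
    \,\leq\,
    4pB(\alpha(N)^L + 1)\gamma
    + 4pB^2 \left(
        \P(\norm[\infty]{Z}>M(N))
        + \sqrt{\tfrac{\log(2/\delta)}{2N}}
        + \sqrt{\tfrac{2K(\gamma)\log 2 + 2\log(2/\delta)}{N}}
    \right),
\end{align*}
which I would take as the definition of $\zeta_1(\delta, N)$.

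The main obstacle is choosing the three free schedules $\alpha(N)$, $\gamma(N)$, $M(N)$ consistently so that $\zeta_1(\delta, N) \to 0$ for every fixed $\delta$: the first term forces $\gamma(N) \ll \alpha(N)^{-L}$, while the last term forces $(M(N)/\gamma(N))^{md+p} \ll N$, so $M(N)$ and $1/\gamma(N)$ can grow only polylogarithmically in $N$ once $\alpha(N)$ is fixed. A workable recipe is to let $\alpha(N)$ be any slowly diverging sequence (e.g.\ $\alpha(N) = \log\log N$) that eventually contains $\phi_m^*$ in the admissible set~\cref{eq:restrictedParameters}, then set $\gamma(N) = \alpha(N)^{-(L+1)}$ to kill the first term, and finally choose $M(N)$ growing slowly enough (using the rate in \cref{asm:modelTight}) that both $\P(\norm[\infty]{Z}>M(N)) \to 0$ and $(M(N)/\gamma(N))^{md+p}/N \to 0$. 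The remaining Hoeffding-type term $\sqrt{\log(2/\delta)/(2N)}$ vanishes automatically. Verifying that such a joint schedule exists and making the constants explicit is the only nontrivial bookkeeping; the rest is substitution.
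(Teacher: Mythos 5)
Your proposal is correct and follows essentially the same route as the paper: invoke \cref{lemma:NNRobust} to obtain the pseudo-robustness constants on $\Xcal' \times \Ycal$ with $\Xcal' = [-M(N),M(N)]^{md}$, plug them into \cref{lemma:pseudoRobustnessBound} with $E = 4pB^2$, and let $\alpha$, $\gamma$, $M$ depend on $N$ so that every term vanishes. The only difference is cosmetic: the paper chooses explicit power-law schedules $M(N)=N^{\frac{1-2\xi}{D+p}-\xi-\kappa}$, $\gamma(N)=N^{-\xi-\kappa}$, $\alpha(N)=N^{\kappa/L}$ (which feed into the later rate discussion), whereas your slower iterated-logarithm schedule also works for mere convergence --- though your remark that $M(N)$ and $1/\gamma(N)$ can grow ``only polylogarithmically'' is unnecessarily conservative, since any polynomial rate with $(M/\gamma)^{md+p} = o(N)$ is admissible.
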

\begin{proof}[\proofref{lemma:approximationRiskTwo}]
    For $M<\infty$ let $\Xcal' = \brackets{-M, M}^D \cap \Xcal \subseteq \Xcal$. %
    By \cref{asm:parametersCompact},
    the output space $\Ycal$ is bounded by $B$.
    Hence, a uniformly spaced covering with hyperrectangles yields %
    \begin{align*}
        \covering(\gamma/2, \Xcal' \times \Ycal, \norm[\infty]{\cdot})
        &\leq
        \ceil{\frac{2B}{\gamma}}^p \ceil{\frac{2M}{\gamma}}^D
        .
    \end{align*}
    
    Recall that for $p$-dimensional outputs bounded by $B$,
    the quadratic loss is bounded by $E=4pB^2$.
    Using the statements and notation of
    \cref{lemma:pseudoRobustnessBound,lemma:NNRobust} %
    we have with probability $1-\delta$ that
    \begin{align}
        R(\phi_m^N) - R_N(\phi_m^N)
        &\leq
        \varepsilon
        +
        E \cdot \lr{
            \P(\norm[\infty]{Z} > M)
            +
            \sqrt{\frac{\log(\tfrac{2}{\delta})}{2N}}
            +
            \sqrt{\frac{2K\log(2)+2\log(\frac{2}{\delta})}{N}}
        }
        \nonumber
        \\ &=
        4pB(\alpha^L + 1) \gamma
        \label{eq:approxRiskTwoA}
        \\ &\phantom{=}
        +
        4pB^2
        \P(\norm[\infty]{Z} > M)
        \label{eq:approxRiskTwoB}
        \\ &\phantom{=}
        +
        4pB^2
        \sqrt{\frac{\log(\tfrac{2}{\delta})}{2N}}
        \label{eq:approxRiskTwoC}
        \\ &\phantom{=}
        +
        4pB^2
        \sqrt{\frac{
            2
            \ceil{\frac{2B}{\gamma}}^p
            \ceil{\frac{2M}{\gamma}}^D
            \log(2)+2\log(\frac{2}{\delta})
        }{N}}
        \label{eq:approxRiskTwoD}.
    \end{align}
    In \cref{eq:approxRiskTwoB}, we use the fact that the probability
    $\P\lr{(X,Y) \notin \Xcal' \times \Ycal'}$
    can be rewritten as
    $\P\lr{\norm[\infty]{Z} > M}$,
    where the probability is taken with respect to both the prior distribution
    $\theta \sim \Pi$ and the data distribution $Z \sim p_\theta$.
    For this sum to converge to zero,
    we let $\alpha$, $\gamma$, and $M$ be functions of $N$.
    To this end, %
    fix $0 < \xi < \frac{1}{D + p}$
    and $0 < \kappa < \frac{1-2\xi}{D+p} - \xi$,
    and set
    \begin{gather}
        \begin{aligned}
        M(N)
        &=
        N^{\frac{1-2\xi}{D+p} - \xi - \kappa}
        , \\
        \gamma(N)
        &=
        N^{-\xi - \kappa}
        , \\
        \alpha(N)
        &=
        N^{\frac{\kappa}{L}}
        .
        \end{aligned}
        \label{eq:approximationRiskTwoParams}
    \end{gather}
    
    Note that 
    $M(N) \to \infty$ 
    for $N \to \infty$,
    and hence by
    the assumption of a uniformly tight model in~\cref{asm:modelTight},
    term~\cref{eq:approxRiskTwoB} goes to zero.
    Apart from $N$ itself, all expressions in~\cref{eq:approxRiskTwoC} are constant w.r.t. $N$,
    and hence this term goes to zero as well.

    For \cref{eq:approxRiskTwoA}, we have
    \begin{align*}
        4pB(\alpha(N)^L + 1) \gamma(N)
        &=
        4pB(N^{\kappa} + 1) N^{-\xi - \kappa}
        \\ &=
        4pB(1 + N^{-\kappa}) N^{-\xi}
        \to
        0
        .
    \end{align*}

    For \cref{eq:approxRiskTwoD}, we observe that, for large $N$, $M(N)$ will be larger than $B$,
    so we have
    \begin{align}
        2
        \lr{\frac{2B}{\gamma(N)}}^p
        \lr{\frac{2M(N)}{\gamma(N)}}^D
        N\inv
        &\leq
        2\lr{\frac{2M(N)}{\gamma(N)}}^{D+p} N\inv
        \nonumber
        \\ &=
        2^{D+p+1}
        \lr{\frac{\;
            N^{\frac{1-2\xi}{D+p} - \xi - \kappa}
        \;}{
            N^{-\xi - \kappa}
        }}^{D+p}
        N\inv
        \nonumber
        \\ &=
        2^{D+p+1}
        N^{-2\xi}
        \label{eq:approxRiskTwoDTrafo}
        \\ &\to
        0
        .
        \nonumber
    \end{align}
    Up to some vanishing noise introduced by the ceiling operation $\ceil{\cdot}$,
    expression \cref{eq:approxRiskTwoD} is a monotone transformation of this term,
    and also converges to zero.
    
    The expression for $\zeta_1(\delta, N)$ is then given by the sum of the terms in \cref{eq:approxRiskTwoA} to \cref{eq:approxRiskTwoD},
    replacing $M$, $\gamma$, and $\alpha$ by their respective functions of $N$.
\end{proof}

\begin{proof}[\proofref{cor:approximationRisk}]
    We have the decomposition as in \cref{eq:genErrdec}
    \begin{align}
        R(\phi_m^{N,\alpha(N)}) - R(\phi_m^*)
        &=
        R(\phi_m^{N,\alpha(N)}) - R_N(\phi_m^{N,\alpha(N)})
        \label{eq:approxRiskA}
        \\ &\phantom{=}
        + R_N(\phi_m^{N,\alpha(N)}) - R_N(\phi_m^*)
        \label{eq:approxRiskB}
        \\ &\phantom{=}
        + R_N(\phi_m^*) - R(\phi_m^*)
        \label{eq:approxRiskC}
        .
    \end{align}
    Using the rates defined in \cref{eq:approximationRiskTwoParams},
    the term on the right-hand side of \cref{eq:approxRiskA}
    satisfies with probability at least $1-\delta/2$ that
    \begin{align*}
        R(\phi_m^{N,\alpha(N)}) - R_N(\phi_m^{N,\alpha(N)})
        <
        \zeta_1(\delta/2, N)
        .
    \end{align*}
    To bound \cref{eq:approxRiskB},
    let $N_1$ be such that $\phi_m^* \in \Phi^{\alpha(N), L}$ for all $N \geq N_1$; such an $N_1$ exists since $\alpha(N) \to \infty$ as $N\to\infty$.
    For these $N\geq N_1$ the parametrization $\phi_m^*$ is a candidate for the empirical risk minimizer
    $\phi_m^{N,\alpha(N)}$ and hence
    \begin{align*}
        R_N(\phi_m^{N,\alpha(N)}) - R_N(\phi_m^*)
        &\leq
        0
        .
    \end{align*}
    Lastly, for \cref{eq:approxRiskC},
    observe that $R_N(\phi_m^*)$ is a finite sample approximation of $R(\phi_m^*)$,
    and hence we can use Hoeffding's inequality to bound the difference.
    \begin{align}
        \P\lr{
            R_N(\phi_m^*)
            -
            R(\phi_m^*)
            >
            \lambda
        }
        &=
        \P\lr{
            N\inv\sum_{i=1}^N \ell{(\phi_m^*(Z_i), Y_i)}
            -
            \E\brackets{\ell{(\phi_m^*(Z), Y)}}
            >
            \lambda
        }
        \nonumber
        \\ &\leq
        \exp\lr{
            -\frac{N\lambda^2}{8p^2B^4}
        }
        .
        \label{hoeffding}
    \end{align}
    Setting $\lambda = pB^2\sqrt{\frac{8\log(\frac{2}{\delta})}{N}}$,
    we obtain with probability at least $1-\delta/2$ that
    \begin{align*}
        R_N(\phi_m^*) - R(\phi_m^*)
        <
        pB^2\sqrt{\frac{8\log(\frac{2}{\delta})}{N}}
        =:
        \zeta_2(\delta/2, N)
        .
    \end{align*}
    Note that $\zeta_2(\delta, N) \to 0$ for $N \to \infty$.
    Putting everything together,
    we have for all $N \geq N_1$ with probability at least $1-\delta$ that
    \begin{align*}
        R(\phi_m^{N,\alpha(N)}) - R(\phi_m^*)
        <
        \zeta_1(\delta/2, N)
        +
        \zeta_2(\delta/2, N)
        =:
        \zeta(\delta, N)
        .
    \end{align*}
    Since $\zeta(\delta, N)$ is the sum of two converging sequences,
    it also converges to zero for $N \to \infty$.
\end{proof}

\subsection{Joint Result}

\begin{proof}[\proofref{thm:approxconsistency}]
    Recall the error decomposition~\cref{eq:errdec}:
    \begin{align*}
        R(\phi_m^N)
        &=
        \underbrace{R(f_m^*)}_{\text{Bayes Risk}}
        + \underbrace{R(\phi_m^*)-R(f_m^*)}_{\text{Approximation Error}}
        + \underbrace{R(\phi_m^N) - R(\phi_m^*)}_{\text{Generalization Error}}
        .
    \end{align*}
    
    For $m \in \N$, let
    $\varepsilon_m = R(f_m^*)$
    and, using \cref{prop:approximation},
    choose $\phi_m^*$ such that
    \begin{align}
        \label{eq:approxConsistencyA}
        R(\phi_m^*) - R(f_m^*)
        &\leq
        \varepsilon_m
        .
    \end{align}
    This $\phi_m^*$ satisfies the conditions of \cref{cor:approximationRisk},
    so we can,
    with $N_1$ and $\zeta$ from the \namecref{cor:approximationRisk},
    choose $N(m)$ satisfying $N(m) \geq N_1$ and
    \begin{align}
        \label{eq:approxConsistencyB}
        \zeta(3\varepsilon_m, N(m))
        &\leq
        \varepsilon_m
        .
    \end{align}

    Putting these bounds together,
    we have with probability at least $1 - 3\varepsilon_m$ that
    \begin{align*}
        R(\phi_m^{N(m), \alpha(m)})
        &=
        R(f_m^*)
        + \lr{R(\phi_m^*)-R(f_m^*)}
        + \lr{R(\phi_m^{N(m), \alpha(m)}) - R(\phi_m^*)}
        \\ &\leq
        \varepsilon_m
        +
        \varepsilon_m
        +
        \varepsilon_m
        =
        3\varepsilon_m
        .
    \end{align*}
    By \cref{cor:bayesRiskToZero},
    the term $\varepsilon_m$ converges to zero as $m \to \infty$.

    To show that $R(\phi_M^{N(m), \alpha(m)})$ converges to zero in probability,
    consider a fixed $\varepsilon > 0$,
    and choose $M$ such that $3\varepsilon_m < \varepsilon$ for all $m \geq M$.
    This yields
    \begin{align*}
        \P\lr{
            R(\phi_m^{N(m), \alpha(m)}) > \varepsilon
        }
        &\leq
        \P\lr{
            R(\phi_m^{N(m), \alpha(m)}) > 3\varepsilon_m
        }
        \leq
        3\varepsilon_m
        \leq
        \varepsilon
        , \quad
        \forall m \geq M
        ,
    \end{align*}
    which implies
    \begin{align*}
        R(\phi_m^{N(m), \alpha(m)}) \pto 0
        \quad \text{as } m \to \infty
        .
    \end{align*}
\end{proof}

\begin{proof}[\proofref{cor:approxconsistency}]
    In this proof, we repeatedly make use of the fact that
    for uniformly bounded random variables $0 \leq \abs{X_m} < B < \infty$,
    convergence in probability and convergence in $r$-th mean are equivalent,
    i.e.,
    \begin{align*}
        X_m \pto 0
        \;\Longleftrightarrow\;
        X_m \Lrto 0
        .
    \end{align*}

    Recall that the parameter space $\Theta$ is bounded by some $B < \infty$,
    and we constructed $\phi_m$ such that its output is also bounded by $B$.
    The quadratic loss function
    $\ell(\theta, \hat\theta) = (\theta - \hat\theta)^2$
    is therefore nonnegative and bounded by $E = 4pB^2$.
    Recall that the risk $R$ is the expectation of the loss function,
    conditioned on the training sample $S_m$.
    Consequently, the risk $R(\phi_m(Z))$ is bounded by $E$ as well.

    Next, note that the convergence stated in \cref{thm:approxconsistency}
    is equivalent to the convergence of $R(\phi_m)$ to zero in probability.
    Hence, we also have convergence in mean, implying
    \begin{align}
        R(\phi_m)
        &\Loneto
        0
        \nonumber
        \\
        \Rightarrow\quad
        \E(\E(\ell(\theta, f_{\phi_m}(Z)) \mid S_m))
        &\longrightarrow
        0
        \nonumber
        \\
        \Rightarrow\quad
        \E(\ell(\theta, f_{\phi_m}(Z)))
        &\longrightarrow
        0
        \label{eq:corApproxConsistencyLossLone}
        \\
        \Rightarrow\quad
        \E((\theta - f_{\phi_m}(Z))^2)
        &\longrightarrow
        0
        \nonumber
        \\
        \Rightarrow\quad
        f_{\phi_m}(Z)
        &\Ltwoto
        \theta
        \nonumber
        ,
    \end{align}
    which in turn implies $f_{\phi_m}(Z) \pto \theta$,
    completing the proof of the first two expressions in the corollary.

    Regarding the pointwise risk,
    recall that $R_\theta(\phi_m) = \E(\ell(\theta, f_{\phi_m}(Z)) \mid S_m, \theta)$,
    which is a random variable with respect to the distribution of $S_m, \theta$.
    Starting with the convergence in \cref{eq:corApproxConsistencyLossLone},
    we have
    \begin{align*}
        \Rightarrow\quad
        \E(\ell(\theta, f_{\phi_m}(Z)))
        &\longrightarrow
        0
        \\
        \Rightarrow\quad
        \E(\E(
            \ell(\theta, f_{\phi_m}(Z))
            \mid
            S_m, \theta
        ))
        &\longrightarrow
        0
        \\
        \Rightarrow\quad
        R_\theta(\phi_m)
        &\Loneto
        0
        ,
    \end{align*}
    which is again equivalent to convergence in probability,
    completing the proof.
\end{proof}

\subsection{Rate Discussion}
\label{sec:rate}
\subsubsection{Setup}

\cref{thm:approxconsistency} guarantees the existence of a sequence $N(m)$ that ensures convergence
of the generalization error to zero
at the desired rate.
In the following, we analyze how the convergence rate depends on $N$ and derive a lower bound on $N(m)$ sufficient to guarantee this convergence.

In the proof of \cref{lemma:approximationRiskTwo},
the generalization error is decomposed into the terms
\cref{eq:approxRiskTwoA,eq:approxRiskTwoB,eq:approxRiskTwoC,eq:approxRiskTwoD},
whose joint convergence to zero
is guaranteed by the choices of $M(N)$, $\gamma(N)$, and $\alpha(N)$
in~\eqref{eq:approximationRiskTwoParams},
which depend on suitably chosen constants $\kappa$ and $\xi$.
In order to investigate the required growth rate of $N(m)$,
we first define these values $\kappa$ and $\xi$
as functions of $m$:
\begin{align}
    \label{eq:ratesChoiceXiKappa}
    \xi(m) &= \frac{1}{2(md + p)},
    \qquad
    \kappa(m) = \frac{1 - \frac{2}{(md + p)}}{4(md + p)}.
\end{align}
These values are chosen as the midpoint of their admissible ranges, respectively,
different choices would yield similar rates to the ones discussed below.
Note that for large $m$,
the value of $\kappa(m)$ is asymptotically equivalent to $\frac{1}{4(md + p)}$.

We recall the decomposition~\cref{eq:genErrdec} and note that its
second term can be dropped since it is almost surely negative.
The third term in this decomposition can be bounded by Hoeffding's 
inequality as in~\cref{hoeffding}, and a sufficient condition for the 
training sample size is that $N(m)$ grows faster than $\varepsilon_m^{-2}$
as $m\to\infty$, which is a much weaker condition than the ones derived below.

\subsubsection{Generalization Error}
Firstly,
for~\cref{eq:approxRiskTwoDTrafo}, and hence \cref{eq:approxRiskTwoD}, to converge to zero, we need to choose $N(m)$ such that
\begin{align*}
    2^{md+p} N(m)^{-2\xi(m)} \to 0,
\end{align*}
implying superexponential growth in $m$.
In order to obtain convergence at a certain rate~$\varepsilon_m$,
for example at the same rate as the Bayes risk,
we choose
\begin{align}
    \label{eq:rateGeneralization}
    N(m) \geq \left(\frac{1}{\varepsilon_m}\right)^{md+p}\exp\left((\log 2)(md+p)^2\right)
    .
\end{align}
The two remaining terms~\eqref{eq:approxRiskTwoA} and~\eqref{eq:approxRiskTwoC} then converge to zero faster than~\eqref{eq:approxRiskTwoD}.
Convergence of~\eqref{eq:approxRiskTwoB} is guaranteed by the choice of $M$, the rate will be discussed below.

\subsubsection{Network Complexity}
In order to apply the error decomposition in the proof of \cref{cor:approximationRisk},
the optimal network $\phi_m^*$ must be in the set of considered networks, i.e.,
$\phi_m^* \in \Phi^{\alpha(m), L}$.
To get a first insight into the growth of $N(m)$
required to ensure this,
we consider the mild assumption that the threshold $\alpha(m)$ needs to diverge to infinity
as $m \to \infty$.
Applying logarithms to the definition of $\alpha$ in \cref{eq:approximationRiskTwoParams},
we then obtain
\begin{align}
    \label{eq:exprate}
    \frac{\kappa(m)}{L} \log(N(m))
    \to
    \infty.
\end{align}
Keeping the number of layers $L$ fixed,
as is the case for the shallow networks considered in this paper,
and plugging in $\kappa(m) \approx \frac{1}{4(md+p)}$,
this implies that $\log(N(m))$ must grow faster than linearly in $m$,
or equivalently, $N(m)$ must grow faster than exponentially in $m$.
Concretely,
\begin{align*}
    N(m) \in \omega\lr{\exp\big(4(md + p)L\big)},
\end{align*}
where the notation $u(m) \in \omega(v(m))$ means that $u(m)/v(m) \to \infty$ as $m \to \infty$,
which is already satisfied by the growth rate implied by~\eqref{eq:rateGeneralization}.

However,
a more realistic assumption on the network complexity
is that the number of nodes %
in the optimal network $\phi_m^*$ grows %
both in the input dimension $m$ and the required approximation accuracy $\varepsilon_m$,
for example as $\exp(g(m,\varepsilon_m))$ for some function $g(m, \varepsilon)$.
Assuming that the average magnitudes of the optimal network weights stay constant, %
$\alpha(m)$ must also satisfy
\begin{alignat*}{6}
    &&
    \alpha(m)
    =
    N(m)^{\frac{\kappa(m)}{L}}
    &\geq
    \exp(g(m,\varepsilon_m))
    \\
    \Rightarrow\quad&&
    N(m)
    &\in
    \Omega\lr{\exp\big(g(m,\varepsilon_m)4(md + p)L\big)}
    ,
\end{alignat*}
where the notation $u(m) \in \Omega(v(m))$ means that $u(m)/v(m)$ is bounded away from zero as $m \to \infty$.
Considering the shallow networks used in this paper
and some mild assumptions on the regularity of the target function $f_m^*$,
the rate $g(m,\varepsilon_m)$ can be linear in $m \log(\varepsilon_m\inv)$ \citep{yarotsky2017error},
exceeding the growth rate implied by~\cref{eq:rateGeneralization}.
Whether it is possible to construct more efficient network architectures for multiple replicates remains an open question.
From a practical standpoint,~\citep{sainsbury2023likelihood} propose an estimator that mitigates the complexity growth with $m$ to some extent.

\subsubsection{Tail Probability}
\label{}
Finally, the overall bound on the generalization error also involves the convergence rate of the tail term
$\P(\norm[\infty]{Z} > M)$
in~\eqref{eq:approxRiskTwoB}, which is model-dependent.
With the choice of $\xi(m)$ and $\kappa(m)$ from \cref{eq:ratesChoiceXiKappa},
the threshold $M$ can be expressed as
\begin{align*}
    M(m)
    &=
    N(m)^{\frac{1-2\xi(m)}{md+p} - \xi(m) - \kappa(m)}
    =
    N(m)^{\kappa(m)}
    ,
\end{align*}
since $\kappa(m)$ was chosen as exactly half of the remaining terms in the exponent.
For distributions with sub-Gaussian tails, the probability of exceeding this threshold decays at least exponentially in $M(m)^2 = N(m)^{2\kappa(m)}$.
To obtain rate $\varepsilon_m$ in this case it is necessary that 
\begin{align*}
    N(m) \in \Omega\lr{ \log\lr{\varepsilon_m\inv}^{2(md+p)}}.
\end{align*}
In contrast, for heavy-tailed distributions such as those with Fréchet margins, the decay of~\eqref{eq:approxRiskTwoB} is merely polynomial in $M(m)$, leading to
\begin{align*}
    N(m) \in \Omega\lr{\lr{\varepsilon_m\inv}^{C(md+p)}},
\end{align*}
for a constant $C>0$ in order to obtain decrease at rate $\varepsilon_m$.

\section{Proofs of applicability to examples}

\subsection{Multivariate normal distribution}
Below, let $\norm[\infty]{\Sigma}$ denote the element-wise (absolute) maximum of a matrix $\Sigma$.
\begin{lemma}
    \label{lemma:BvMmultivariateNormal}
    Let $\Sigma_\theta$ be a parametric covariance matrix,
    satisfying for all $\varepsilon > 0$
    \begin{align}
        \inf_{\substack{%
            \theta \in \Theta
            \\
            \norm{\theta - \theta_0} \geq \varepsilon
        }}
        \norm[\infty]{\Sigma_\theta - \Sigma_{\theta_0}}
        =:
        I_\varepsilon
        &>
        0
        \label{eq:BvMmultivariateNormalInf}
        \\
        \sup_{\theta \in \Theta}
        \norm[\infty]{\Sigma_\theta}
        =:
        S
        &<
        \infty
        \label{eq:BvMmultivariateNormalSup}
        .
    \end{align}
    Let $\hat \Sigma$ be the empirical covariance matrix.
    For $\delta = I_\varepsilon / 2$,
    define the tests
    \begin{align*}
        \phi_m^\varepsilon
        =
        \mycases{
            \mycase{
                0
            }{
                \mathrm{if}\quad
                \tnorm[\infty]{\hat \Sigma - \Sigma_{\theta_0}}
                \leq
                \delta
            }\\
            \mycase{
                1
            }{
                \mathrm{else.}
            }
        }
    \end{align*}
    Then the tests satisfy
    \begin{align*}
        \E_{\theta_0}(\phi_m^\varepsilon)
        \to
        0
        \quad \text{for} \quad
        m \to \infty
        , \\
        \sup_{\norm{\theta - \theta_0} \geq \varepsilon}
        \E_{\theta}(1 - \phi_m^\varepsilon)
        \to
        0
        \quad \text{for} \quad
        m \to \infty
        .
    \end{align*}
\end{lemma}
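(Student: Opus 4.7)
The plan is to reduce both statements to a single concentration bound: the empirical covariance matrix $\hat\Sigma$ concentrates around $\Sigma_\theta$ in entry-wise max norm, with the rate uniform in $\theta \in \Theta$. The assumption $\sup_\theta \tnorm[\infty]{\Sigma_\theta} \leq S$ is what gives the uniformity, while the separation assumption $\inf_{\|\theta-\theta_0\|\geq\varepsilon}\tnorm[\infty]{\Sigma_\theta-\Sigma_{\theta_0}} = I_\varepsilon$ together with the triangle inequality makes the chosen threshold $\delta = I_\varepsilon/2$ effective for detecting alternatives.

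\textbf{Step 1 (Concentration of $\hat\Sigma$).} For the centered Gaussian model $Z \sim \normal(\zerovec,\Sigma_\theta)$, each entry $\hat\Sigma_{ij} = \tfrac{1}{m}\sum_{k=1}^m Z_i^k Z_j^k$ has mean $\Sigma_{\theta,ij}$ and, by Isserlis' theorem, variance
\[
    \Var_\theta(\hat\Sigma_{ij})
    = \frac{\Sigma_{\theta,ii}\Sigma_{\theta,jj} + \Sigma_{\theta,ij}^2}{m}
    \leq \frac{2S^2}{m}.
\]
Chebyshev's inequality combined with a union bound over the $d^2$ entries then gives, uniformly over $\theta \in \Theta$,
\[
    \P_\theta\!\lr{\tnorm[\infty]{\hat\Sigma - \Sigma_\theta} > \delta}
    \leq \frac{2d^2 S^2}{m\,\delta^2} \xrightarrow{m\to\infty} 0.
\]

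\textbf{Step 2 (Type I error).} Applying Step 1 with $\theta = \theta_0$ immediately yields
\[
    \E_{\theta_0}(\phi_m^\varepsilon)
    = \P_{\theta_0}\!\lr{\tnorm[\infty]{\hat\Sigma - \Sigma_{\theta_0}} > \delta}
    \longrightarrow 0.
\]

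\textbf{Step 3 (Uniform Type II error).} Fix any $\theta$ with $\norm{\theta-\theta_0} \geq \varepsilon$. By assumption \cref{eq:BvMmultivariateNormalInf} one has $\tnorm[\infty]{\Sigma_\theta - \Sigma_{\theta_0}} \geq I_\varepsilon = 2\delta$. Hence the triangle inequality gives the inclusion of events
\[
    \set{\tnorm[\infty]{\hat\Sigma - \Sigma_{\theta_0}} \leq \delta}
    \subseteq
    \set{\tnorm[\infty]{\hat\Sigma - \Sigma_{\theta}} \geq \delta},
\]
so that $\E_\theta(1-\phi_m^\varepsilon) \leq \P_\theta(\tnorm[\infty]{\hat\Sigma - \Sigma_\theta} \geq \delta)$. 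Applying the uniform bound from Step 1 and taking the supremum over $\theta$ with $\norm{\theta-\theta_0}\geq\varepsilon$ yields
\[
    \sup_{\norm{\theta-\theta_0}\geq\varepsilon}\E_\theta(1-\phi_m^\varepsilon)
    \leq \frac{2d^2 S^2}{m\,\delta^2} \longrightarrow 0.
\]

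\textbf{Main obstacle.} The only nontrivial point is ensuring that the concentration rate in Step~1 does not deteriorate as $\theta$ varies over $\Theta$; this is exactly where assumption~\cref{eq:BvMmultivariateNormalSup} is used, via Isserlis' formula, to bound the per-entry variance uniformly by $2S^2/m$. Without this uniform variance bound, the supremum in the Type~II statement could in principle diverge even if each pointwise probability vanishes.
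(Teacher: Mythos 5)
Your proof is correct and follows essentially the same route as the paper's: Chebyshev's inequality with a variance bound for the entries of $\hat\Sigma$ that is uniform in $\theta$ thanks to \cref{eq:BvMmultivariateNormalSup}, combined with the triangle inequality and the separation condition \cref{eq:BvMmultivariateNormalInf} to handle the alternatives. The only cosmetic differences are that you take a union bound over all $d^2$ entries where the paper isolates the single maximizing entry $(i_\theta,j_\theta)$, and you use the known-mean covariance estimator with Isserlis' formula where the paper invokes Wishart variance bounds; both yield the same $O(1/m)$ rate uniform over $\theta$.
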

\begin{proof}[\proofref{lemma:BvMmultivariateNormal}]
    Since, under $\theta_0$, $\hat \Sigma$ is a consistent estimator of $\Sigma_{\theta_0}$,
    and $\delta > 0$,
    we have
    \begin{align*}
        \E_{\theta_0}(\phi_m^\varepsilon)
        &=
        \P_{\theta_0}(\tnorm[\infty]{\hat \Sigma - \Sigma_{\theta_0}} > \delta)
        \to
        0
        .
    \end{align*}

    Furthermore,
    for $\norm{\theta - \theta_0} \geq \varepsilon$,
    let $i_\theta, j_\theta$ be the index of the maximum element of
    $\abs{\Sigma_\theta - \Sigma_{\theta_0}}$.
    This index satisfies
    $\abs{\Sigma_{\theta,i_\theta j_\theta} - \Sigma_{\theta_0,i_\theta j_\theta}} \geq I_\varepsilon$.
    Then
    \begin{align*}
        \E_\theta(1 - \phi_m^\varepsilon)
        &=
        \P_\theta(\tnorm[\infty]{\hat \Sigma - \Sigma_{\theta_0}} \leq \delta)
        \\ &\leq
        \P_\theta(
            \tabs{\hat \Sigma_{i_\theta j_\theta} - \Sigma_{\theta_0,i_\theta j_\theta}}
            \leq
            \delta
        )
        \\ &\leq
        \P_\theta(
            \tabs{\Sigma_{\theta, i_\theta j_\theta} - \Sigma_{\theta_0,i_\theta j_\theta}}
            -
            \tabs{\hat \Sigma_{i_\theta j_\theta} - \Sigma_{\theta,i_\theta j_\theta}}
            \leq
            \delta
        )
        \\ &\leq
        \P_\theta(
            I_\varepsilon
            -
            \tabs{\hat \Sigma_{i_\theta j_\theta} - \Sigma_{\theta,i_\theta j_\theta}}
            \leq
            \delta
        )
        \\ &=
        \P_\theta(
            \tabs{\hat \Sigma_{i_\theta j_\theta} - \Sigma_{\theta,i_\theta j_\theta}}
            \geq
            I_\varepsilon
            -
            \delta
        )
    \end{align*}
    Since $\delta = I_\varepsilon/2$, the last threshold simplifies to $I_\varepsilon - \delta = \delta$.
    To bound this probability,
    observe that for any entry $i, j$ of
    the empirical covariance matrix, we have
    (using \cite[Theorem~3.3.2]{anderson2003} and well-known properties of the chi-squared distribution)
    \begin{align*}
        \Var_\theta(\hat \Sigma_{ij})
        \leq
        \frac{2\norm[\infty]{\Sigma_\theta}^2}{m-1}
        \leq
        \frac{2S^2}{m-1}
    \end{align*}
    and hence by Chebyshev's inequality and the fact that $\hat\Sigma$ is unbiased,
    \begin{align*}
        \P_\theta(
            \tabs{\hat \Sigma_{ij} - \Sigma_{\theta,ij}}
            \geq
            \delta
        )
        \leq
        \frac{\Var_\theta(\hat\Sigma_{ij})}{\delta^2}
        \leq
        \frac{2S^2}{(m-1) \delta^2}
        .
    \end{align*}
    In particular, this holds for $i, j = i_\theta, j_\theta$.
    Since the bound is independent of $\theta$, we have
    \begin{align*}
        \sup_{\norm{\theta - \theta_0} \geq \varepsilon}
        \E_\theta(1 - \phi_m^\varepsilon)
        &\leq
        \frac{2S^2}{(m-1) \delta^2}
        \to
        0
        ,
        \qquad \text{for }
        m \to \infty
        .
        \qedhere
    \end{align*}
\end{proof}

\subsection{Gaussian processes}

Below, we consider a fixed set of points $x_1, \ldots, x_d$ at which the Gaussian processes are observed
and denote
\begin{align*}
    H_h
    &=
    \setm{(i,j)}{\norm{x_i - x_j} = h}
    , \\
    H
    &=
    \setm{h}{H_h \neq \emptyset}
    .
\end{align*}
For a parametric covariance function $C_\theta(h)$,
denote
\begin{align*}
    \Sigma_\theta
    &=
    \lr{C_\theta(\norm{x_i - x_j})}_{ij}
    .
\end{align*}

\begin{lemma}
    \label{lemma:bvmTestsGaussExp}
    Recall the (powered) exponential model from~\cref{eq:poweredExponential}
    with parameter vector $\theta = (\tau, \lambda, \alpha)$
    and covariance function
    \begin{align*}
        C_\theta(h)
        &=
        \exp\lr{
            -\lr{
                h/\lambda
            }^\alpha
        }
        +
        \tau^2 \indic{h=0}
        .
    \end{align*}
    Let $\tau$ be bounded by some $\tau_1 < \infty$.
    For $\abs{H} \geq 3$,
    the powered exponential model
    satisfies the conditions of \cref{lemma:BvMmultivariateNormal}.
    If $\alpha > 0$ is fixed and omitted from the parameter vector $\theta$,
    the conditions are already satisfied for $\abs{H} \geq 2$.
\end{lemma}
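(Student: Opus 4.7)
The strategy is to verify the two conditions \cref{eq:BvMmultivariateNormalInf,eq:BvMmultivariateNormalSup} of \cref{lemma:BvMmultivariateNormal} separately. Throughout, I work under the (implicit) compactness assumption on $\Theta$ provided by \cref{asm:parametersCompact}, which in particular keeps $\lambda$ and $\alpha$ bounded and bounded away from their degenerate limits $0$ and $\infty$.

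\textbf{Uniform upper bound.} For every $\theta=(\tau,\lambda,\alpha)\in\Theta$ and $h\geq 0$, the covariance function satisfies $|C_\theta(h)|\leq 1+\tau^2\leq 1+\tau_1^2$. Since $\|\Sigma_\theta\|_\infty=\max_{i,j}|C_\theta(\|x_i-x_j\|)|$, this immediately gives $\sup_{\theta\in\Theta}\|\Sigma_\theta\|_\infty\leq 1+\tau_1^2<\infty$, verifying \cref{eq:BvMmultivariateNormalSup}.

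\textbf{Positive infimum via compactness and injectivity.} The map $\theta\mapsto\Sigma_\theta$ is continuous on $\Theta$ (continuity of $\exp$ and of $(\lambda,\alpha)\mapsto(h/\lambda)^\alpha$ for $h\geq 0$ on the compact parameter range). The set $K_\varepsilon:=\{\theta\in\Theta:\|\theta-\theta_0\|\geq\varepsilon\}$ is compact, and the function $\theta\mapsto\|\Sigma_\theta-\Sigma_{\theta_0}\|_\infty$ is continuous on it. Hence the infimum $I_\varepsilon$ is attained at some $\theta^\star\in K_\varepsilon$. To conclude $I_\varepsilon>0$ it then suffices to show that $\theta\mapsto\Sigma_\theta$ is injective on $\Theta$, so that $\Sigma_{\theta^\star}\neq\Sigma_{\theta_0}$.

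\textbf{Injectivity (the main technical point).} Suppose $\Sigma_\theta=\Sigma_{\theta_0}$. Since $0\in H$ (from diagonal entries), matching diagonals gives $1+\tau^2=1+\tau_0^2$, hence $\tau=\tau_0$. Pick two distinct non-zero distances $h_1,h_2\in H$ (available because $|H|\geq 3$). Matching off-diagonal entries and taking logarithms gives, for $i=1,2$,
\begin{align*}
\alpha\log(h_i/\lambda)=\alpha_0\log(h_i/\lambda_0).
\end{align*}
Subtracting the two equations eliminates the $\lambda$ contributions and yields $\alpha\log(h_2/h_1)=\alpha_0\log(h_2/h_1)$. Since $h_1\neq h_2$ the logarithm is nonzero, forcing $\alpha=\alpha_0$; substituting back into either equation then gives $\lambda=\lambda_0$. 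In the case where $\alpha>0$ is fixed, only $\tau$ and $\lambda$ remain to be identified, and a single non-zero distance $h\in H$ already determines $\lambda$ via the strict monotonicity of $\lambda\mapsto\exp(-(h/\lambda)^\alpha)$, so $|H|\geq 2$ suffices.

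The main obstacle is the injectivity step in the three-parameter setting: the log-change of variables together with the availability of two distinct non-zero distances is exactly what is needed to decouple $\alpha$ and $\lambda$, and explains why the weaker condition $|H|\geq 2$ breaks down there while remaining enough when $\alpha$ is fixed.
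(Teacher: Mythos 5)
Your proof is correct, and it establishes the same two conditions, but the identifiability step \cref{eq:BvMmultivariateNormalInf} is argued by a genuinely different route. The paper does not go through injectivity plus compactness: it defines the map $f(\tau,\lambda,\alpha)=(C_\theta(0),C_\theta(h_1),C_\theta(h_2))$, writes down an \emph{explicit continuous inverse} (recovering $\tau$ from the diagonal, $\alpha$ from the ratio of iterated logarithms of the two off-diagonal entries, and then $\lambda$), and converts continuity of $f\inv$ at $f(\theta_0)$ into the lower bound $I_\varepsilon>0$ via a direct $\varepsilon$--$\delta$ contrapositive. Your version replaces the explicit inverse by an algebraic injectivity argument (which is essentially the same computation, read forwards rather than backwards) and then upgrades ``$\Sigma_\theta\neq\Sigma_{\theta_0}$ off a ball'' to a positive infimum by compactness of $\set{\theta\in\Theta:\norm{\theta-\theta_0}\geq\varepsilon}$ and continuity of $\theta\mapsto\Sigma_\theta$. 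The trade-off: your argument is more elementary and avoids writing down the inverse, but it genuinely needs \cref{asm:parametersCompact} (and the resulting bound $\lambda\geq\lambda_{\min}>0$ for continuity of $\theta\mapsto\Sigma_\theta$), whereas the paper's continuous-inverse argument works at a fixed $\theta_0$ without invoking compactness of the parameter set. Both correctly observe that $0\in H$ always (diagonal entries), so $\abs{H}\geq 3$ supplies the two distinct non-zero distances needed to decouple $\alpha$ from $\lambda$, and $\abs{H}\geq 2$ suffices when $\alpha$ is fixed by strict monotonicity of $\lambda\mapsto\exp\lr{-(h/\lambda)^\alpha}$.
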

\begin{proof}[\proofref{lemma:bvmTestsGaussExp}]
    Note that \cref{eq:BvMmultivariateNormalSup} is satisfied in both scenarios
    because by positive definiteness of the covariance function,
    we have
    \begin{align*}
        \sup_{\theta \in \Theta}
        \norm[\infty]{\Sigma_\theta}
        &\leq
        \sup_{\theta \in \Theta}
        C_\theta(0)
        =
        1 + \tau_1^2
        <
        \infty
        .
    \end{align*}

    For the powered exponential model,
    denote $\theta = (\tau, \lambda, \alpha)$,
    choose some distinct, non-zero $h_1, h_2 \in H$,
    and define the function $f:\mathbb R^3 \to \mathbb R^3$ with
    \begin{align*}
        f(\tau, \lambda, \alpha)
        &=
        \lr{
            C_\theta(0),
            C_\theta(h_1),
            C_\theta(h_2)
        }
        .
    \end{align*}
    If we can show that this function has a continuous inverse $f\inv$,
    condition \cref{eq:BvMmultivariateNormalInf} is satisfied,
    because then for every $y_0\in\mathbb R^3$ and $\varepsilon > 0$,
    there is some $\delta > 0$ such that for all $y\in\mathbb R^3$ we have 
    \begin{align}
        \norm{y - y_0} < \delta
        &\Rightarrow
        \norm{f\inv(y) - f\inv(y_0)} < \varepsilon
        \nonumber
        \\
        \therefore \quad
        \norm{f\inv(y) - f\inv(y_0)} > \varepsilon
        &\Rightarrow
        \norm{y - y_0} > \delta
        \nonumber
        \\
        \therefore \quad
        \norm{x - x_0} > \varepsilon
        &\Rightarrow
        \norm{f(x) - f(x_0)} > \delta
        \nonumber
        \\
        \therefore \quad
        \inf_{\norm{x - x_0} > \varepsilon}
        \norm{f(x) - f(x_0)}
        &\geq
        \delta
        .
        \label{eq:continuousInverseBvm}
    \end{align}
    To do so,
    observe that $f\inv$ can be expressed coordinatewise by
    \begin{align*}
        \tau
        &=
        \sqrt{C_\theta(0) - 1}
        \\
        \alpha
        &=
        \frac{
            \log\lr{-\log C_\theta(h_1)}
            -
            \log\lr{-\log C_\theta(h_2)}
        }{
            \log h_1
            -
            \log h_2
        }
        \\
        \lambda
        &=
        \frac{
            h_1
        }{
            \lr{
                -\log C_\theta(h_1)
            }^{1/\alpha}
        }
        .
    \end{align*}
    Each of these are continuous functions,
    and hence $f\inv$ is continuous as well.

    For the fixed $\alpha$ case,
    the same argument applies with
    \begin{align*}
        f(\tau, \lambda)
        &=
        \lr{
            C_\theta(0),
            C_\theta(h_1)
        }
    \end{align*}
    and the inverse defined coordinatewise as above,
    plugging in the fixed $\alpha$ in the expression for $\lambda$.
\end{proof}

\begin{lemma}
    \label{lemma:bvmTestsGaussMatern}
    Recall the \Matern{} model from \cref{eq:matern},
    with fixed smoothness $\nu$ and parameter vector $\theta = (\tau, \lambda)$.
    \begin{align*}
        C_\theta(h)
        &=
        \frac{2^{1-\nu}}{\Gamma(\nu)}
        \lr{\sqrt{2\nu} \frac{h}{\lambda}}^\nu
        K_\nu\lr{\sqrt{2\nu} \frac{h}{\lambda}}
        +
        \tau^2 \indic{h=0}
        ,
    \end{align*}
    Let $\tau$ be bounded by some $\tau_1 < \infty$.
    For $\abs{H} \geq 2$,
    this model satisfies the conditions of \cref{lemma:BvMmultivariateNormal}.
\end{lemma}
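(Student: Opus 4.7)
The plan is to follow the same strategy used in the proof of \cref{lemma:bvmTestsGaussExp}: verify the boundedness condition \cref{eq:BvMmultivariateNormalSup} by a direct argument, and reduce the infimum condition \cref{eq:BvMmultivariateNormalInf} to the existence of a continuous inverse of a suitable map from $\theta$ to a finite collection of covariance evaluations. Since there are only two free parameters here, it suffices to consider the two distances $0$ and some non-zero $h_1 \in H$; such an $h_1$ exists because $|H| \geq 2$ and $0 \in H$ always.

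The supremum condition is immediate: the Matérn correlation function $\rho_\nu(r) = \tfrac{2^{1-\nu}}{\Gamma(\nu)} r^\nu K_\nu(r)$ is bounded by $1$ on $[0,\infty)$, hence $\norm[\infty]{\Sigma_\theta} \leq 1 + \tau^2 \leq 1 + \tau_1^2 < \infty$ uniformly in $\theta$. For the infimum condition, I would define
\begin{align*}
    f: (\tau, \lambda) \mapsto \lr{C_\theta(0), C_\theta(h_1)} = \lr{1 + \tau^2,\, \rho_\nu\lr{\sqrt{2\nu}\, h_1 / \lambda}}
\end{align*}
and exhibit its inverse coordinatewise as $\tau = \sqrt{C_\theta(0) - 1}$ and $\lambda = g\inv(C_\theta(h_1))$, where $g(\lambda) = \rho_\nu(\sqrt{2\nu}\, h_1 / \lambda)$. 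Once $f\inv$ is shown to be continuous, the contrapositive chain of implications in \cref{eq:continuousInverseBvm} from the proof of \cref{lemma:bvmTestsGaussExp} transfers verbatim: since the two coordinates of $f(\theta) - f(\theta_0)$ are entries of $\Sigma_\theta - \Sigma_{\theta_0}$, we obtain $I_\varepsilon > 0$ for every $\theta_0$ and $\varepsilon > 0$.

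The main obstacle will be verifying continuity of $g\inv$, which reduces to showing that $g$ is a homeomorphism from $(0,\infty)$ onto $(0,1)$. Continuity of $g$ in $\lambda$ is clear from the smoothness of $\rho_\nu$ on $[0,\infty)$. For bijectivity, I would invoke the classical fact that $\rho_\nu$ is strictly decreasing on $[0,\infty)$ with $\rho_\nu(0)=1$ and $\rho_\nu(r) \to 0$ as $r \to \infty$, which follows from standard integral representations and the asymptotic expansions of $K_\nu$. Since $g$ is a continuous injection from the connected set $(0,\infty)$ into $\R$, strict monotonicity is automatic and the inverse is continuous by a one-dimensional argument, completing the verification of the conditions of \cref{lemma:BvMmultivariateNormal}.
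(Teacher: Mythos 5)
Your proposal is correct and follows essentially the same route as the paper: both verify the uniform bound via $\norm[\infty]{\Sigma_\theta} \leq C_\theta(0) = 1+\tau_1^2$, and both reduce the infimum condition to continuity of the inverse of $(\tau,\lambda) \mapsto (C_\theta(0), C_\theta(h_1))$, relying on the strict monotonicity and continuity of $x \mapsto x^\nu K_\nu(x)$ (the paper cites this as Proposition~E.2 of the Zhang reference) and the contrapositive chain in \cref{eq:continuousInverseBvm}. The only cosmetic difference is that you spell out the range $(0,1)$ of $g$ and the one-dimensional inverse-function argument, which the paper leaves implicit.
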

\begin{proof}[\proofref{lemma:bvmTestsGaussMatern}]
    The value of the \Matern{} covariance function at $h=0$ is
    \begin{align*}
        C_\theta(0)
        &=
        1
        +
        \tau^2
        .
    \end{align*}
    Hence, \cref{eq:BvMmultivariateNormalSup} is satisfied
    because by positive definiteness of the covariance function,
    we have
    \begin{align*}
        \sup_{\theta \in \Theta}
        \norm[\infty]{\Sigma_\theta}
        &\leq
        \sup_{\theta \in \Theta}
        C_\theta(0)
        =
        1 + \tau_1^2
        <
        \infty
        .
    \end{align*}

    For $x>0$, the function
    $x \mapsto x^\nu K_\nu(x)$ is strictly decreasing and continuous \cite[Proposition~E.2]{Zhang_typeG2023}.
    Hence, for fixed $0 < h \in H$, the function
    $\lambda \mapsto C_\theta(h)$ is strictly increasing and continuous,
    and so is its inverse.
    Therefore, the map
    \begin{align*}
        f(\tau, \lambda)
        &=
        \lr{
            C_\theta(0),
            C_\theta(h)
        }
    \end{align*}
    has a continuous inverse $f\inv$,
    and by the same argument as in \cref{eq:continuousInverseBvm},
    it follows that \cref{eq:BvMmultivariateNormalInf} is satisfied
    for the \Matern{} model.
\end{proof}

\begin{lemma}
    \label{lemma:bvmDiffInQuadraticMeanGauss}
    Both the powered exponential and \Matern{} covariance functions
    are differentiable in quadratic mean
    with non-singular Fisher information matrix.
\end{lemma}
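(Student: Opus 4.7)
The plan is to invoke the classical sufficient condition for differentiability in quadratic mean (Lemma~7.6 in \cite{vaart_1998}): it suffices to check that $\theta\mapsto\sqrt{p_\theta(z)}$ is continuously differentiable for every $z$ and that the Fisher information matrix $I_\theta$ is well-defined and continuous in $\theta$. This reduces the lemma to (i) smoothness of the Gaussian density in $\theta$ for each fixed $z$, (ii) continuity of $I_\theta$, and (iii) non-singularity of $I_\theta$; the last item is not part of the van der Vaart criterion and must be verified separately.

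First I would handle (i) and (ii). In both models the density is the centered Gaussian $p_\theta(z)=(2\pi)^{-d/2}\tdet{\Sigma_\theta}^{-1/2}\exp(-\tfrac{1}{2}z\T\Sigma_\theta^{-1}z)$, so continuous differentiability of $\sqrt{p_\theta(z)}$ in $\theta$ reduces to continuous differentiability of $\theta\mapsto\Sigma_\theta$. For the powered exponential model this is elementary; for the \Matern{} model it follows from the smoothness of $x\mapsto x^\nu K_\nu(x)$ on $(0,\infty)$, which was already used in the proof of \cref{lemma:bvmTestsGaussMatern}. The Fisher information of a centered multivariate Gaussian admits the standard closed form
\begin{align*}
    I_\theta(i,j)
    =
    \tfrac{1}{2}\,\trace\!\bigl(
        \Sigma_\theta^{-1}(\partial_i\Sigma_\theta)\,\Sigma_\theta^{-1}(\partial_j\Sigma_\theta)
    \bigr),
\end{align*}
whose continuity in $\theta$ is immediate from continuity of $\Sigma_\theta$ and $\Sigma_\theta^{-1}$.

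The decisive observation for (iii) is that $I_\theta$ is the Gram matrix of the symmetric matrices $\{\partial_i\Sigma_\theta\}_i$ in the inner product $(A,B)\mapsto\tfrac{1}{2}\trace(\Sigma_\theta^{-1}A\,\Sigma_\theta^{-1}B)$, which is positive definite because $\Sigma_\theta^{-1}$ is (so $\tfrac{1}{2}\trace((\Sigma_\theta^{-1/2}A\Sigma_\theta^{-1/2})^2)=0$ forces $A=0$). Hence $I_\theta$ is non-singular if and only if the derivative matrices $\{\partial_i\Sigma_\theta\}_i$ are linearly independent. Since $C_\theta(0)=1+\tau^2$ depends only on $\tau$, the nugget derivative $\partial_\tau\Sigma_\theta=2\tau\,\Id$ (at any interior point with $\tau>0$) affects only the diagonal, whereas $\partial_\lambda\Sigma_\theta$ and, in the powered-exponential case, $\partial_\alpha\Sigma_\theta$ vanish on the diagonal. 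Thus the nugget direction is automatically linearly independent from the other ones.

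The hard part will be establishing linear independence among the remaining, purely off-diagonal directions; this is precisely where the hypothesis of two distinct non-zero distances enters. For the \Matern{} model with $\lambda$ as the only free continuous-part parameter, strict monotonicity of $x\mapsto x^\nu K_\nu(x)$ gives $\partial_\lambda C_\theta(h)\ne 0$ for every $h>0$, and so one non-zero distance already suffices. For the powered-exponential model with both $\lambda$ and $\alpha$ free I would compute
\begin{align*}
    \frac{\partial_\alpha C_\theta(h)}{\partial_\lambda C_\theta(h)}
    =
    -\frac{\lambda}{\alpha}\,\log\!\lr{\frac{h}{\lambda}},
\end{align*}
which is a strictly monotone function of $h>0$; evaluated at two distinct non-zero distances $h_1\ne h_2$ it produces a $2\times 2$ matrix of off-diagonal derivatives with non-vanishing determinant, yielding the required linear independence of $\partial_\lambda\Sigma_\theta$ and $\partial_\alpha\Sigma_\theta$. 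Combining the three cases closes the verification of (iii).
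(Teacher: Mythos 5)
Your proposal is correct, but it takes a noticeably different --- and in one respect more complete --- route than the paper. The paper proves differentiability in quadratic mean via the exponential-family specialization (Lemma~7.6 together with Example~7.7 of van der Vaart): it identifies the natural parameter $Q(\theta)$ with the precision matrix $\Sigma_\theta\inv$, notes that strict positive definiteness places it in the interior of the natural parameter space, and reduces everything to continuous differentiability of $\theta \mapsto \Sigma_\theta$ (elementary for the powered exponential, via smoothness of $x \mapsto x^\nu K_\nu(x)$ for the \Matern{} model). You instead apply the general pointwise criterion of Lemma~7.6 directly, which requires you to also exhibit and control the Fisher information; the exponential-family shortcut spares the paper that computation. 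What your approach buys is the second half of the claim: the paper's proof never actually verifies non-singularity of $I_\theta$, whereas your Gram-matrix argument --- non-singularity of $I_\theta$ is equivalent to linear independence of the derivative matrices $\partial_i \Sigma_\theta$ in the positive definite trace inner product --- supplies a genuine proof of it, and correctly locates where the ``two distinct non-zero distances'' hypothesis of \cref{lemma:bvmGauss} is consumed (the ratio $\partial_\alpha C_\theta(h)/\partial_\lambda C_\theta(h) = -(\lambda/\alpha)\log(h/\lambda)$ being strictly monotone in $h$ is exactly the right computation, and the separation of the nugget direction via the diagonal is clean, modulo the interior-point caveat $\tau>0$ that you already note). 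Two small points to tighten: for the \Matern{} case, strict monotonicity of $x\mapsto x^\nu K_\nu(x)$ alone does not formally imply a non-vanishing derivative --- you should instead cite the identity $\frac{d}{dx}\lr{x^\nu K_\nu(x)} = -x^\nu K_{\nu-1}(x) < 0$; and you should state explicitly that the two distances $h_1 \neq h_2$ must actually be realized among the pairwise distances of the observation sites so that the corresponding off-diagonal entries of $\Sigma_\theta$ exist.
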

\begin{proof}[\proofref{lemma:bvmDiffInQuadraticMeanGauss}]
    Lemma~7.6 and Example~7.7 in \cite{vaart_1998}
    state that the density of an exponential family
    \begin{align*}
        p_\theta(x)
        &=
        d(\theta)
        h(x)
        \exp\lr{
            Q(\theta)\T t(x)
        }
    \end{align*}
    is differentiable in quadratic mean
    if the maps $\theta \mapsto Q(\theta)$ are continuously differentiable
    and map the parameter space $\Theta$ into the interior of the natural parameter space.

    In the case of Gaussian distributions with fixed mean,
    the entries of $Q(\theta)$ correspond to the entries of the precision matrix $\Sigma_\theta\inv$.
    Since both the powered exponential and \Matern{} covariance functions are strictly
    positive definite \citep{stein2012interpolation},
    the precision matrix is non-singular, and hence in the interior of the natural parameter space.
    Since matrix inversion is a continuous operation,
    it remains to show that the covariance functions, evaluated at fixed distances $h \in H$, %
    are continuously differentiable in $\theta$.

    For the powered exponential model from \cref{eq:poweredExponential},
    this is immediately clear.
    For the \Matern{} model from \cref{eq:matern} with fixed smoothness $\nu$,
    this follows from the fact that $x \mapsto x^\nu K_\nu(x)$ is continuously differentiable
    \cite[e.g., proof of Proposition~E.2]{Zhang_typeG2023}.
\end{proof}

\begin{proof}[\proofref{lemma:bvmGauss}]
    By \cref{lemma:bvmTestsGaussExp,lemma:bvmTestsGaussMatern},
    we can construct the required tests,
    and by \cref{lemma:bvmDiffInQuadraticMeanGauss},
    both models are differentiable in quadratic mean
    with non-singular Fisher information matrix.
    The condition on the prior can be satisfied
    for a compact parameter space $\Theta$,
    for example by choosing the uniform distribution.
\end{proof}

\begin{proof}[\proofref{lemma:approxGauss}]
    To check \cref{asm:modelTight},
    observe that the variance of the
    powered exponential and \Matern{} covariance functions
    with bounded $\tau$
    is bounded by $S = 1 + \tau_1^2$.
    Furthermore, recall that the mean function is assumed to be zero.
    Hence, for $\varepsilon > 0$
    we can choose $M = \sqrt{S} \Phi\inv(1-\varepsilon/(2d))$,
    where $\Phi$ denotes the standard normal distribution function.
    Then
    \begin{align*}
        \P(\abs{Z_i} > M)
        &\leq
        \Phi\lr{-M/\sqrt{S}}
        +
        1
        -
        \Phi\lr{M/\sqrt{S}}
        \\ &=
        2\lr{1- \Phi\lr{M/\sqrt{S}}}
        \\ &=
        2(1-(1-\varepsilon/(2d)))
        \\ &=
        \varepsilon/d
        ,
    \end{align*}
    and hence
    \begin{align*}
        \P(\norm[\infty]{Z} > M)
        &\leq
        d \P(\abs{Z_i} > M)
        \leq
        \varepsilon
        .
    \end{align*}

    To show continuity of the Bayes estimator (\cref{asm:bayesContinuous}),
    note that the density of a centered Gaussian distribution
    \begin{align*}
        p_\theta(z)
        &=
        \frac{1}{(2\pi)^{d/2} \sqrt{\det{\Sigma_\theta}}}
        \exp\lr{
            -\half z\T \Sigma_\theta\inv z
        }
    \end{align*}
    is always continuous in $z$ and
    continuous in $\theta$ if the covariance matrix $\Sigma_\theta$ is continuous in $\theta$.
    For the powered exponential and \Matern{} models,
    the covariance functions are continuous in $\theta$,
    as discussed in the proof of \cref{lemma:bvmDiffInQuadraticMeanGauss}.
    Hence, by \cref{lemma:bayesContinuousThree}, the Bayes estimator is continuous
    for the considered models and compact parameter spaces.

\end{proof}

\subsection{Max-stable processes}

\begin{proof}[\proofref{lemma:bvmEV}]
    The applicability of the \BvM{} Theorem to the considered extreme value models
    is shown in Proposition~3 and Corollary~2 of \cite{dombry2017bayesian}.
\end{proof}

\begin{proof}[\proofref{lemma:approxEV}]
    To check \cref{asm:modelTight},
    recall that we consider models with
    unit \Frechet{} margins,
    independent of the parameter.
    These margins satisfy
    $Z_i > 0$ almost surely, so we only need to consider the probability of the event $Z_i > M$.
    Let $F$ denote the \Frechet{} distribution function,
    $F\inv$ its quantile function,
    and for $\varepsilon > 0$
    choose $M = F\inv(1 - \varepsilon/d)$.
    Then
    \begin{align*}
        \P(\norm[\infty]{Z} > M)
        &\leq
        \sum_{i=1}^d \P(Z_i > M)
        =
        d (1 - F(M))
        =
        \varepsilon
        .
    \end{align*}

    \cref{asm:bayesContinuous} is checked using \cite{dombry2017asymptotic}.
    Their equation~(1) gives an explicit expression for the probability density function
    of a max-stable distribution.
    Given Condition~A2 from Proposition~3.2 in their paper,
    this expression is continuous in $\theta$ and $z$.
    In turn, Condition~A2 is satisfied for the considered models,
    as shown
    in Lemma~B.1 (logistic),
    the proof of Proposition~4.5 (\BrownResnick{}),
    and Lemma~B.3 together with the proof of Proposition~3.3 (Schlather).
    By \cref{lemma:bayesContinuousThree},
    the continuity of the density in $\theta$ and $z$
    combined with the compact parameter space $\Theta$
    implies that the Bayes estimator is continuous in $z$.

\end{proof}

\section{Simulation study for the linear model}\label{app:linear}

We consider the example of the linear model as an illustrative example, since in this case we can compute all error terms explicitly.
Let $(\Omega,\mathcal{A},\mathbb{P})$ denote a probability space.
Define $Z := (Z^1, \dots, Z^m)$ such that $Z^1, \dots, Z^m$ are independently and identically distributed random variables following a normal distribution with mean $\theta\in\Theta =\mathbb R$ and fixed variance~$\sigma^2$.
The distribution $\Pi$ of the parameter $\theta$ is normal with mean $\mu$ and variance $\gamma^2$.
The neural estimator for estimating $\theta$ from $Z$ is constructed using a 
linear model with $k \leq m$ non-zero coefficients and one intercept; this corresponds to a simple neural network with no hidden layers.
The training dataset consist of $N$ i.i.d.~observations $\left((Z_i^1,\dots,Z_i^m), \theta_i\right)$, for $i=1,\dots,N$, drawn from the joint distribution 
\begin{align*}
    \mathbb{P}(Z \in A, \theta \in B) = \int_B \int_A \frac{1}{\sqrt{\left(2\pi \sigma^2\right)^m}} \exp{-\frac{\sum_{j=1}^{m}\left(z_j-\theta\right)^2}{2\sigma^2}}\frac{1}{\sqrt{2\pi \gamma^2}} \exp{-\frac{\left(\theta-\mu\right)^2}{2\gamma^2}}\,d (z_1,\dots,z_m) \,d \theta ,
\end{align*}
where $A \in \mathcal{B}(\R^m)$ and $B \in \mathcal{B}(\Theta)$.
The empirical risk minimization~\eqref{eq:intro_emp_risk_minimizer} simplifies in this case to 
\begin{align}\label{prob:empriskmin}
    \min_{A_1,\dots,A_m, b} \frac{1}{N} \sum_{i=1}^N \left(\sum_{j=1}^m A_j Z_i^j + b - \theta_i\right)^2 \quad \text{s.t. } \|A\|_0 \leq k .
\end{align}
As $N$ tends to infinity, this problem converges almost surely to minimization problem of the population risk, that is, 
\begin{align}\label{prob:riskmin}
    \min_{A_1,\dots,A_m, b} \E \left[\left(\sum_{j=1}^m A_j Z^j + b - \theta\right)^2\right] \quad \text{s.t. } \|A\|_0 \leq k .
\end{align}
When $k=m$, the solution of \eqref{prob:riskmin} corresponds to the Bayes estimator $f^*_m(z) = \mathbb{E}\left[\theta \mid {Z} = z\right]$, which is a linear model with optimal parameter vector $\phi^*_m$ with components
\begin{align*}
    A_1^* = \cdots = A_m^* = \frac{\gamma^2}{m \gamma^2+ \sigma^2}, \quad
    b^* = \mu \frac{\sigma^2}{m \gamma^2+ \sigma^2},
\end{align*}
whose dimension and value depend on $m$. The solution depends on the prior distribution of $\theta$, the variance of ${Z}$, and the number of replicates $m$. Notably, the Bayes estimator differs from the maximum likelihood estimator as it is biased towards the prior assumptions. As $m$ and $\gamma^2$ increase or $\sigma^2$ decreases, the prior assumptions become less influential, causing the Bayes estimator to approach the maximum likelihood estimator.

While the optimization problem~\eqref{prob:riskmin} aims to minimize the expected risk $\E\left[ R_\theta(\phi_m^*)\right]$,
we can explicitly determine the pointwise risk of the estimator in this case
\begin{align*}
    R_\theta(\phi_m^*) = (\mu - \theta)^2  \frac{\sigma^4}{(m \gamma^2+ \sigma^2)^2} + \frac{m\sigma^2 \gamma^4}{(m \gamma^2+ \sigma^2)^2} .
\end{align*}
When $k<m$, the solution of problem \eqref{prob:riskmin} does not correspond to the Bayes estimator. Instead, we have
\begin{align}\label{eq:approx_error}
    R(\phi^*_m) - R(f^*_m) = \frac{\gamma^2\sigma^4+k\sigma^2\gamma^4}{(k\gamma^2+\sigma^2)^2} - \frac{\gamma^2\sigma^4+m\sigma^2\gamma^4}{(m\gamma^2+\sigma^2)^2}> 0.
\end{align}
The optimal risk $R(\phi^*)$ as a function of the number of active parameters $k$ is shown in Figure~\ref{fig:risk_overview}. We clearly see how the expressiveness of the approximating function class, here measured as the number of non-zero coefficients $k$ in a linear model, governs the approximation error. In fact, for a constant $k=10$ this error does not vanish for growing~$m$.

\begin{figure}[bt!]
    \centering 
        \includegraphics[width=0.495\textwidth]{./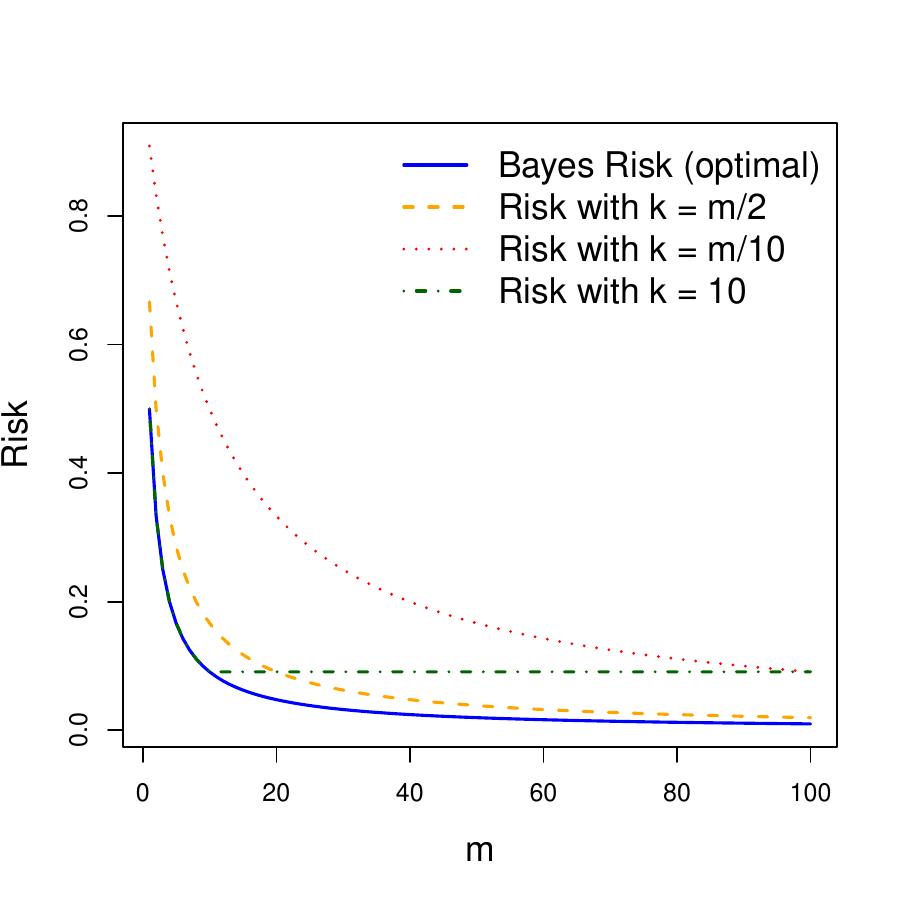}
        \caption{Bayes risk compared to the risks of different approximating functions corresponding to different choices of model complexity $k$, as a function of sample size $m$.}
        \label{fig:risk_overview}
\end{figure}

In the following, we analyze the error decomposition~\cref{eq:genErrdec} and assume that $k=m$, that is, the optimal neural estimator is the Bayes estimator and there is no approximation error. The theoretical analysis suggests that the trained estimator is close to the true estimator when $N$ increases exponentially in the number of replica $m$. Figure~\ref{fig:error_dec} illustrates this for $N=100$ and $N=1000$. While $N=100$ is a sufficiently large training sample size for smaller $m$, there is a gap between the optimal neural estimator and the trained neural estimator for larger $m$. This gap becomes smaller when $N$ increases.
\begin{figure}[h]
    \centering
    \begin{subfigure}[b]{0.495\textwidth}
        \centering
        \includegraphics[width=\textwidth]{./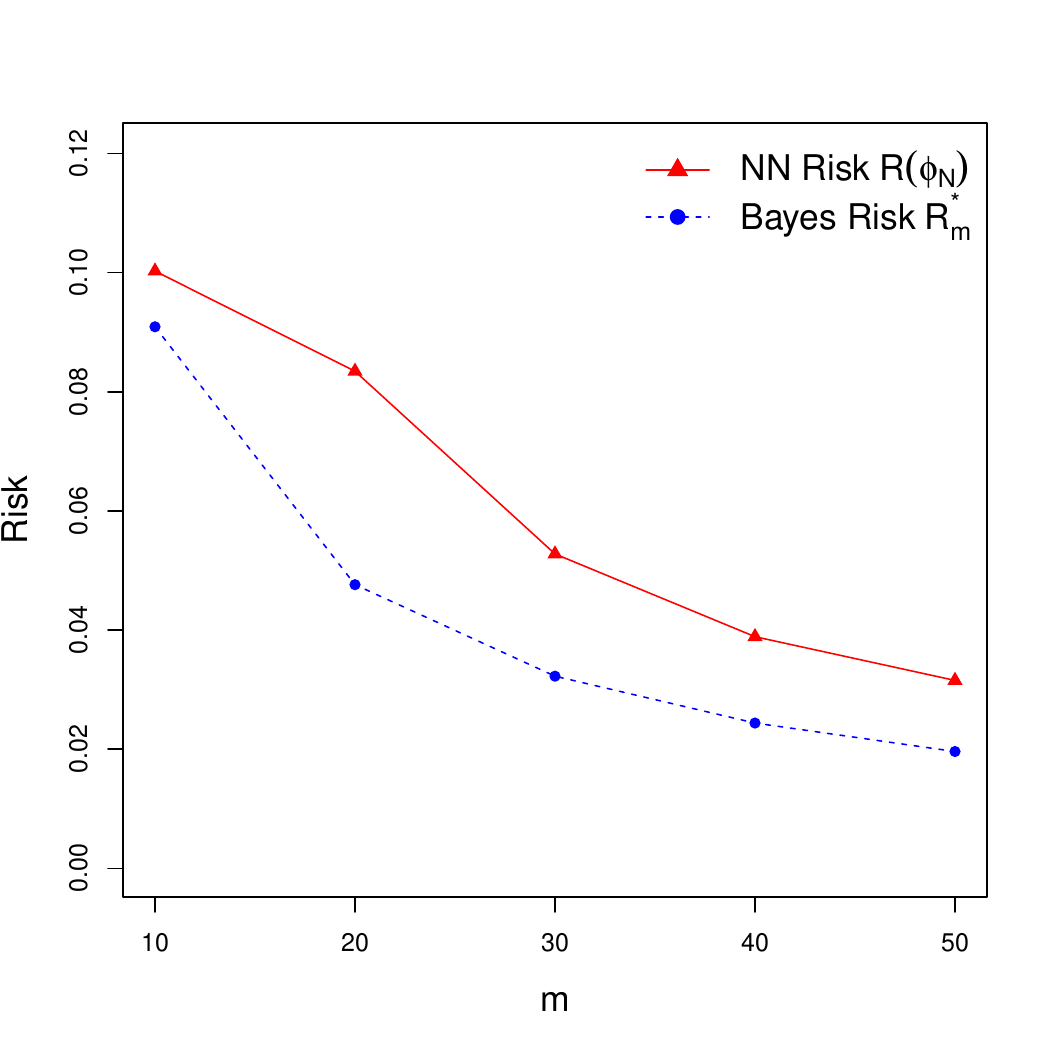}
    \end{subfigure}
    \hfill
    \begin{subfigure}[b]{0.495\textwidth}
        \centering
        \includegraphics[width=\textwidth]{./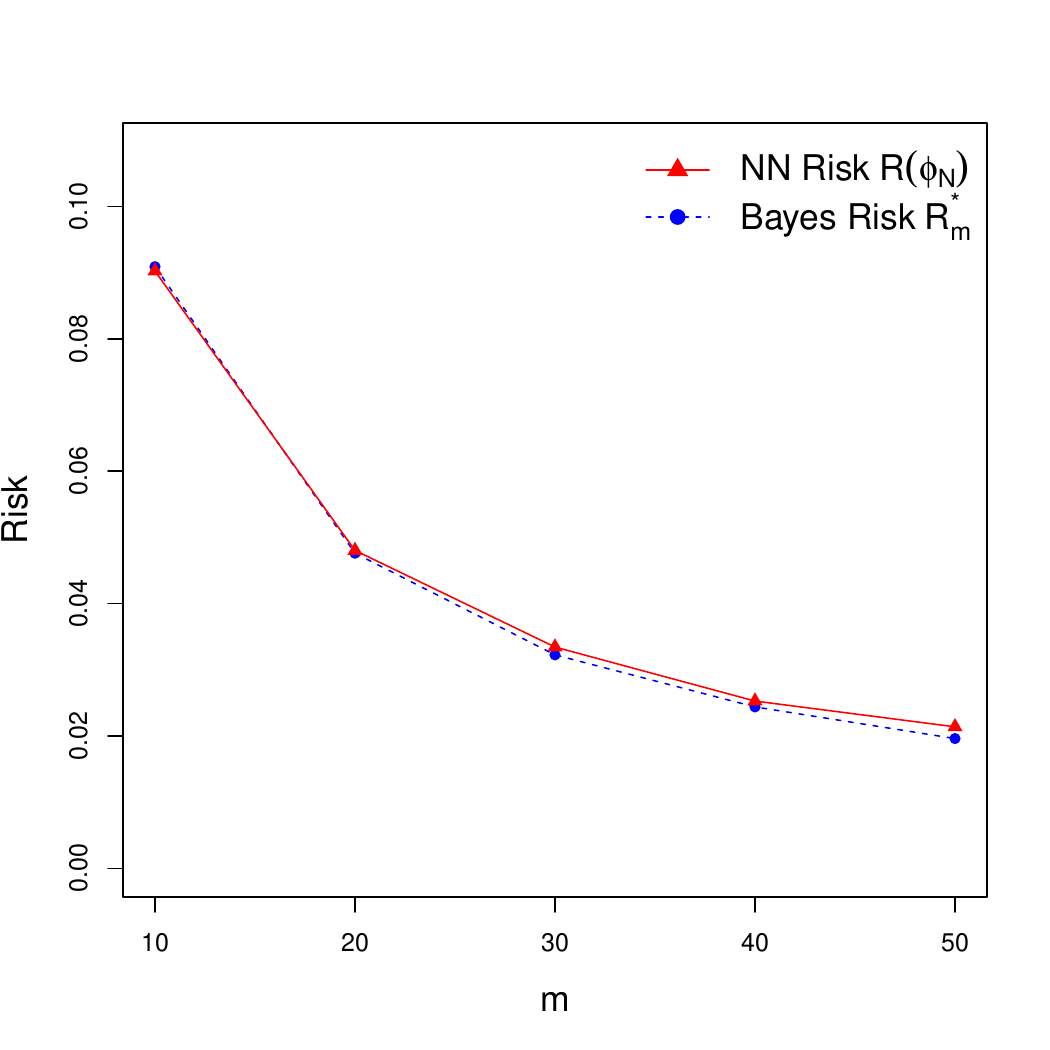}
    \end{subfigure}
    \caption{Bayes risk compared to the risk of the empirical risk minimizer as a function of number of replicates $m$, left: $N=100$, right: $N=1000$.}
    \label{fig:error_dec}
\end{figure}

Figure~\ref{fig:gen_error} shows the improvement of the neural estimator as a function of the training sample size. Here, the number of replica $m=30$ is fixed. While for low $N$, the neural estimator has a large generalization error, the empirical and true risk converge to the Bayes risk in exponential dependency of $N$.
\begin{figure}[h]
    \centering 
        \includegraphics[width=0.495\textwidth]{./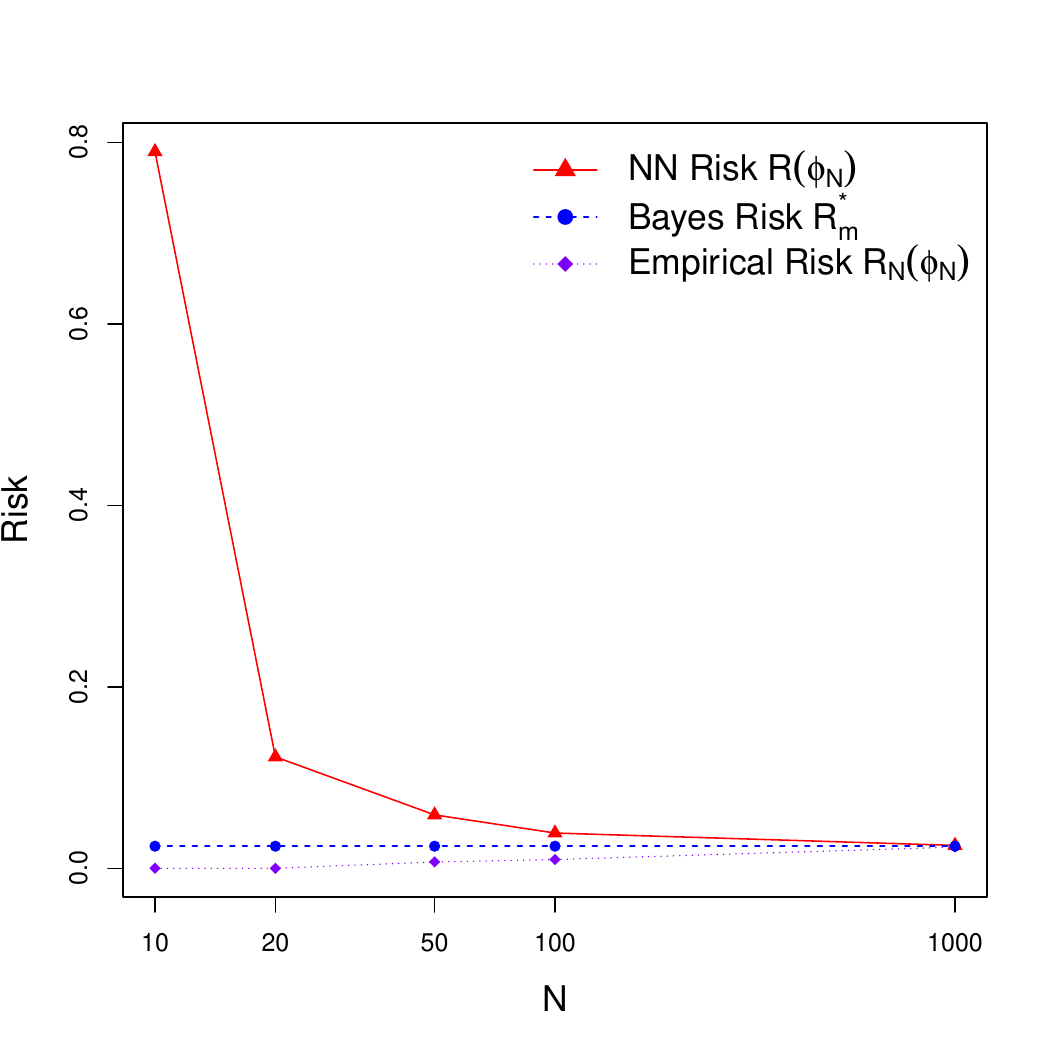}
        \caption{Generalization error for $m=40$, logarithmically in the training data size $N$.}
        \label{fig:gen_error}
\end{figure}

\clearpage
\printunsrtglossary[type=symbols,style=long]
\label{sec:symbols}

\bibliographystyle{abbrvnat}
\bibliography{references}

\end{document}